\documentclass{article}

\usepackage[preprint]{neurips_2026}

\usepackage[utf8]{inputenc}
\usepackage[T1]{fontenc}
\usepackage{hyperref}
\usepackage{url}
\usepackage{booktabs}
\usepackage{amsfonts}
\usepackage{amsmath}
\usepackage{amssymb}
\usepackage{amsthm}
\usepackage{nicefrac}
\usepackage{microtype}
\usepackage[table]{xcolor}
\usepackage{graphicx}
\usepackage{algorithm}
\usepackage{algorithmic}
\providecommand{\RETURN}{\STATE \textbf{return} }
\usepackage{enumitem}
\usepackage{multirow}
\usepackage{pifont}

\newtheorem{theorem}{Theorem}
\newtheorem{proposition}[theorem]{Proposition}
\newtheorem{lemma}[theorem]{Lemma}
\newtheorem{corollary}[theorem]{Corollary}

\newtheorem{assumption}{Assumption}
\newtheorem{remark}{Remark}

\newcommand{\cmark}{{\color{green!60!black}\ding{108}}}
\newcommand{\hmark}{{\color{yellow!70!orange}\ding{119}}}
\newcommand{\xmark}{{\color{gray!40}\ding{108}}}

\newcommand{\Xcal}{\mathcal{X}}
\newcommand{\Ycal}{\mathcal{Y}}
\newcommand{\Dcal}{\mathcal{D}}
\newcommand{\Fcal}{\mathcal{F}}
\newcommand{\Gcal}{\mathcal{G}}
\newcommand{\Ucal}{\mathcal{U}}
\newcommand{\Ncal}{\mathcal{N}}
\newcommand{\Kcal}{\mathcal{K}}
\newcommand{\Psocial}{P_{\mathrm{social}}}
\newcommand{\Yhat}{\hat{Y}}

\newcommand{\epsact}{\varepsilon_{\mathrm{act}}}
\newcommand{\Lbar}{\bar{L}}
\newcommand{\omegahat}{\hat{\omega}}
\newcommand{\qtilde}{\tilde{q}}
\newcommand{\btk}{b_{t,k}}

\title{Budgeted Act-or-Defer Multi-Agent LLM Deliberation\\with Local Reliability Bounds}

\author{%
 \textbf{Mengdie Flora Wang \textsuperscript{1}},
 \textbf{Haochen Xie \textsuperscript{1}},
\textbf{Guanghui Wang \textsuperscript{1}},
 \textbf{Devin Zhang \textsuperscript{2}},
 \textbf{Jae Oh Woo\textsuperscript{1}}
 \\
 \textsuperscript{1}AWS Generative AI Innovation Center, USA \\
 \textsuperscript{2} General Motors (GM), USA
}

\begin{document}

\maketitle

\begin{abstract}
Multi-agent deliberation among LLMs can improve reasoning, but deployment requires deciding when the current answer is reliable enough to act on and when it should be escalated to human review. We formulate this as budgeted act-or-defer decision making. At each round, the system maps the debate prefix to a low-dimensional state, computes a $k$-nearest-neighbor lower confidence bound on state-conditional correctness using calibration data, and acts only when the bound exceeds a user-specified reliability threshold. The certificate controls wrong actions through the decomposition $\beta = \delta + \alpha + \epsact$, separating calibration failure, residual action risk, and representation gap. The guarantee is conditional, not distribution-free: it relies on a valid local bias envelope and an action-region representation-gap bound, and each assumption is paired with falsification-style diagnostics. Because the same absolute wrong-action budget has different meanings across tasks of different difficulty, we set budgets relative to each task's final-round error using training data only, and evaluate safety by normalized budget usage $\mathrm{WA}/\beta$. On six benchmarks against nine baselines, the method uses 9--12\% of the pre-declared budget on activated datasets, reaching up to 84\% automation and 96\% acted-on accuracy; on stress-test datasets, it defers rather than forcing unreliable automation. Rather than relying on per-task post-hoc threshold search, the method prospectively converts a user-declared wrong-action budget into an auditable act-or-defer operating point before deployment, under explicitly stated assumptions.
\end{abstract}

\section{Introduction}
\label{sec:intro}

Multi-agent debate, where multiple LLM instances exchange responses over several rounds, has been shown to improve reasoning \citep{du2023improving, liang2024encouraging}. To deploy such systems in practice, one needs a stopping criterion with principled control over the probability of acting on a wrong answer. In human-in-the-loop deployments this stopping decision is also an escalation decision: the system should act autonomously only when the local reliability bound supports action, and otherwise defer the instance to human review. Existing approaches either operate at a fixed final round with only marginal coverage (e.g., split conformal prediction \citep{vovk2005algorithmic, angelopoulos2023conformal}) or rely on heuristic consensus thresholds without explicit wrong-action budgets. Neither provides per-instance, per-round reliability bounds that enable adaptive stopping.

We address this gap by constructing local lower confidence bounds on the correctness probability using $k$-nearest-neighbor ($k$-NN) neighborhoods in a debate-state metric space. The debate prefix at round $t$ is mapped to a state $U_t$ in a metric space $(\Ucal, d)$. The local correctness function $q_t(u) = \Pr(Y = \Yhat_t \mid U_t = u)$ is bounded from below via a \emph{local bias envelope} $\btk(u)$ that controls the $k$-NN approximation error. For each candidate neighborhood size $k$ in a pre-specified family $\Kcal$, we subtract the bias envelope and a Hoeffding concentration term from the $k$-NN empirical mean, then take the maximum over $\Kcal$:
\begin{equation}
\label{eq:intro_bound}
    L_t = \max_{k \in \Kcal}\left[\widehat{q}_{t,k}(U_t) - \btk(U_t) - \sqrt{\frac{\log(T|\Kcal|/\delta)}{2k}}\right],
\end{equation}
where the $\log|\Kcal|$ penalty absorbs the adaptivity in $k$. The bias envelope $\btk(u)$ is a generic primitive (Lipschitz, H\"{o}lder, bounded-oscillation models are special cases). The first correction controls approximation bias, the second controls sampling variance, and the maximization over $\Kcal$ selects the tightest local scale per instance.

A central design choice is \emph{budget alignment}. The user specifies a risk budget $\beta$ and the method decomposes it as
\begin{equation}
\label{eq:budget}
    \beta \;=\; \underbrace{\delta}_{\substack{\text{statistical}\\\text{failure}}} \;+\; \underbrace{\alpha}_{\substack{\text{residual}\\\text{action risk}}} \;+\; \underbrace{\epsact}_{\substack{\text{representation}\\\text{gap}}}.
\end{equation}
The residual $\alpha = \beta - \delta - \epsact$ directly sets the stopping threshold $1 - \alpha$. The stopping rule $\tau = \inf\{t : L_t \geq 1 - \alpha\}$ then satisfies $\Pr(\text{act and wrong}) \leq \beta$ conditional on local assumptions (Theorem~\ref{thm:wrong_action}). The contribution is not the mathematical tools (Hoeffding + union bound, which are classical) but a risk-decomposition framework for adaptive deliberation stopping under auditable local assumptions.

\begin{figure}[t]
\centering
\includegraphics[width=\textwidth]{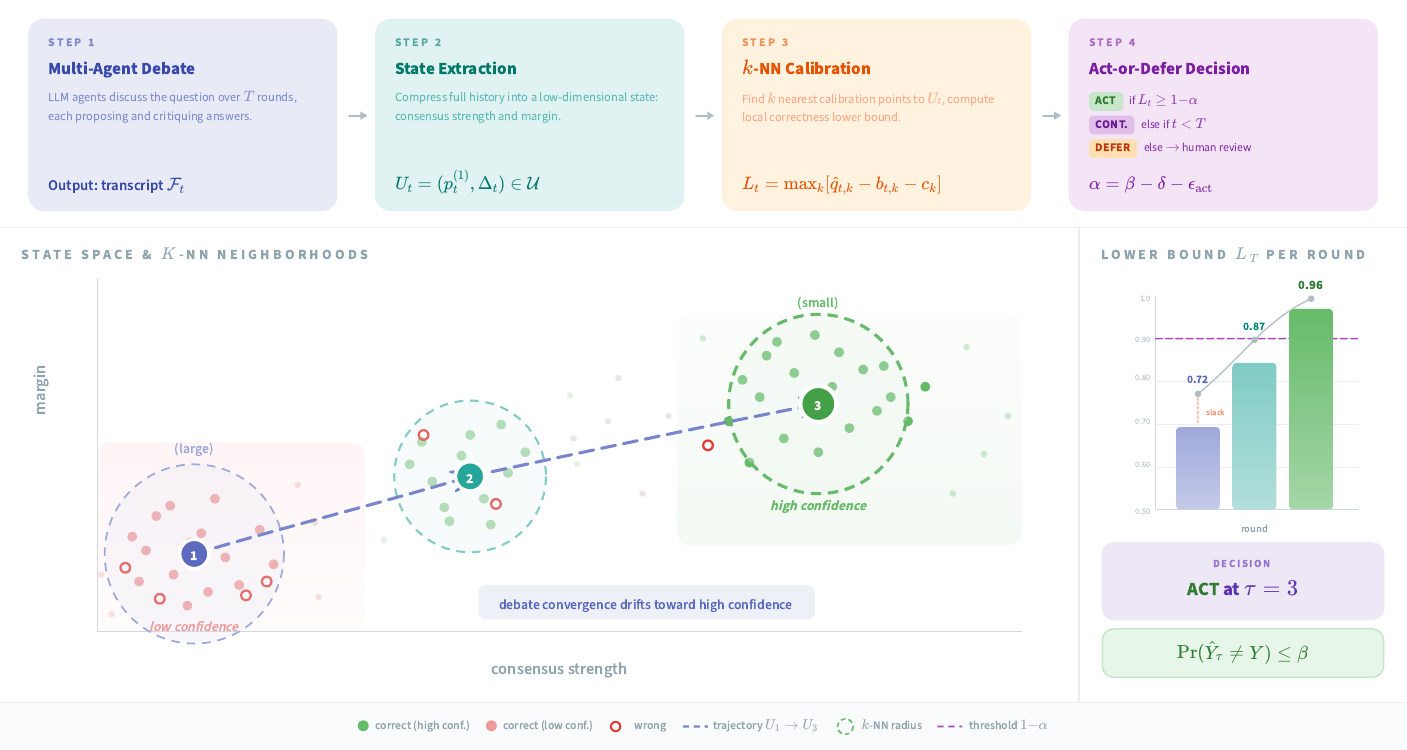}
\caption{\textbf{Overview.}
\textit{Top}: (1)~Agents debate over $T$ rounds; (2)~each transcript is compressed to state $U_t = \phi(\mathcal{F}_t)$; (3)~$k$-NN lookup yields lower bound $L_t$; (4)~act if $L_t \geq 1-\alpha$, continue if $t<T$, else defer to human review.
\textit{Bottom left}: calibration states colored by correctness; the test state drifts from low- to high-confidence as debate progresses.
\textit{Bottom right}: $L_t$ grows from 0.72 to 0.96, crossing $1-\alpha = 0.90$ at $\tau = 3$.}
\label{fig:pipeline}
\end{figure}

The guarantee is conditional on auditable local assumptions, not distribution-free. Our contributions:
\textbf{(i)} We formalize adaptive stopping for multi-agent deliberation as a local state-conditional risk-control problem with budget decomposition $\beta = \delta + \alpha + \epsact$.
\textbf{(ii)} We introduce a local bias envelope $\btk(u)$ as a plug-in interface and prove simultaneous validity $L_t \leq q_t(U_t)$ for all $(t,k)$ with probability $\geq 1 - \delta$, plus a wrong-action guarantee (Theorems~\ref{thm:pointwise}--\ref{thm:wrong_action}) and an oracle slack inequality (Theorem~\ref{thm:oracle_slack}, Appendix~\ref{app:proofs}). Each assumption is paired with a diagnostic proxy.
\textbf{(iii)} Lipschitz, H\"{o}lder, and bounded-oscillation envelopes are recovered as corollaries. Our default empirical modulus absorbs estimation failure into $\delta$ (Corollary~\ref{cor:modulus}) and converges at $O(n^{-1/(2+D)})$.
\textbf{(iv)} We introduce a relative-risk deployment policy: $\beta_d = \lambda^\star_d \widehat{e}_{T,d}^{\,\mathrm{train}}$ is induced from training data alone, making budgets comparable across tasks and turning normalized budget usage $\mathrm{WA}/\beta$ into the primary safety metric.
\textbf{(v)} Experiments on six benchmarks ($n = 546$--$8{,}678$) against nine baselines show that, on activated datasets, the method uses $\leq$12\% of the pre-declared budget at up to 97\% acted-on accuracy. Matched-automation diagnostics show that tuned scalar thresholds can achieve similar subset quality after per-task search, but the proposed certificate constructs the operating point prospectively from a declared wrong-action budget and makes the required assumptions explicit.

\section{Problem Setup}
\label{sec:setup}

Consider a multiple-choice task with input space $\Xcal$ and label space $\Ycal = \{1, \ldots, C\}$. An ensemble of $N$ LLM agents debates for at most $T$ rounds. At round $t$, the debate has produced a filtration $\Fcal_t$ containing all agent responses and per-agent belief distributions $\{P^{(t)}_j(y \mid x)\}_{j=1}^{N}$ up to round $t$. We aggregate these into a \emph{social belief}
\begin{equation}
\label{eq:psocial}
    \Psocial^{(t)}(y \mid x) \;=\; \frac{1}{N}\sum_{j=1}^{N} P^{(t)}_j(y \mid x),
\end{equation}
and take $\Yhat_t = \arg\max_y \Psocial^{(t)}(y \mid x)$ as the round-$t$ prediction (Appendix~\ref{app:details} specifies how each $P^{(t)}_j$ is elicited).

\paragraph{State embedding.}
Each debate prefix is mapped to a state $U_t = \phi_t(\Fcal_t)$ in a metric space $(\Ucal, d)$ via a deterministic, $\Fcal_t$-measurable $\phi_t$. Typical components include the top social probability $p_t^{(1)} = \max_y \Psocial^{(t)}(y \mid x)$, the margin $\Delta_t = p_t^{(1)} - p_t^{(2)}$, inter-agent disagreement, and top-answer stability; the metric $d$ reflects correctness relevance (e.g., Euclidean on a normalized feature vector).

\paragraph{Geometry--sufficiency trade-off.}
The choice of $\phi_t$ jointly controls three quantities central to the framework:
\textbf{(a)}~\emph{smoothness of $q_t$}, which governs the bias envelope $\btk$;
\textbf{(b)}~\emph{representation gap $\epsact$}, the information lost in compressing $\Fcal_t$ to $U_t$;
\textbf{(c)}~\emph{intrinsic dimension $D$}, which enters the certification rate $O(n^{-1/(2+D)})$.
Richer states reduce $\epsact$ but raise $D$ and may roughen $q_t$; minimal states keep $D$ small and $q_t$ smooth but risk larger $\epsact$. We investigate this trade-off empirically in Section~\ref{sec:exp_phi}.

\paragraph{Local correctness and goal.}
The target is the \emph{local correctness function}
\begin{equation}
\label{eq:local_target}
    q_t(u) := \Pr(Y = \Yhat_t \mid U_t = u),
\end{equation}
a pointwise quantity conditioning on the exact state rather than averaging over a region. We seek a stopping rule $\tau = \inf\{t \leq T : L_t \geq 1-\alpha\}$ with a data-driven lower bound $L_t$ for $q_t(U_t)$, so that $\Pr(\tau \leq T,\; \Yhat_\tau \neq Y) \leq \beta$. The budget decomposition~\eqref{eq:budget} controls each term separately: $\delta$ by Hoeffding with a $(t,k)$ union bound, $\alpha$ by the stopping threshold, and $\epsact$ by the action-region representation gap.

\section{Method}
\label{sec:method}

The method turns $\beta$ into a stopping rule in two steps: fix the action threshold $1-\alpha$ with $\alpha = \beta - \delta - \epsact$ from the budget decomposition~\eqref{eq:budget}, then build a lower bound $L_t$ certifying when the threshold is met. The main guarantee (Theorem~\ref{thm:wrong_action}) follows by summing worst cases per component.

\paragraph{Offline: calibration.}
On $\Dcal_{\mathrm{cal}} = \{(X_i, Y_i)\}_{i=1}^n$, we run the full $T$-round debate once per instance and record, for every $(t,i)$, the state $U_{i,t} = \phi_t(\Fcal_{i,t})$ and correctness indicator $Z_{i,t} = \mathbf{1}[Y_i = \Yhat_{i,t}]$. We split $\Dcal_{\mathrm{cal}}$ into $\Dcal_{\mathrm{mod}}$ (20\%, for bias-envelope estimation) and $\Dcal_{\mathrm{loc}}$ (80\%, the database searched at test time).

\paragraph{Online: $k$-NN lookup.}
At test round $t$ with state $U_t$, and for each $k$ in a pre-specified family $\Kcal = \{k_1, \ldots, k_m\}$, let $\Ncal_{t,k}(U_t) \subset \Dcal_{\mathrm{loc}}$ denote the $k$ nearest calibration instances and $h_{t,k}(U_t) = \max_{i \in \Ncal_{t,k}(U_t)} d(U_{i,t}, U_t)$ the $k$-NN radius. The raw empirical local correctness is $\widehat{q}_{t,k}(U_t) = \frac{1}{k} \sum_{i \in \Ncal_{t,k}(U_t)} Z_{i,t}$.

\paragraph{Adaptive lower bound.}
The raw estimate $\widehat{q}_{t,k}$ is optimistic on two counts: neighbors are \emph{near} $U_t$ but not \emph{at} $U_t$ (local bias), and with only $k$ samples the empirical mean may exceed the truth (sampling variance). We subtract a correction for each and take the tightest bound across $k$:
\begin{align}
    L_{t,k}(U_t) &\;=\; \widehat{q}_{t,k}(U_t) \;-\; \underbrace{\btk(U_t)}_{\text{bias}} \;-\; \underbrace{\sqrt{\tfrac{\log(T|\Kcal|/\delta)}{2k}}}_{\text{variance}}, \label{eq:local_lower}\\
    L_t &\;=\; \max_{k \in \Kcal} L_{t,k}(U_t). \label{eq:adaptive_lower}
\end{align}
Each test instance automatically picks the $k$ with the best bias--variance trade-off at its location; the $\log|\Kcal|$ term pays for adaptivity via a union bound. By construction $L_{t,k} \leq q_t(U_t)$ with high probability (Theorem~\ref{thm:pointwise}).

\paragraph{Stopping rule.}
The system acts at $\tau = \inf\{t \leq T : L_t \geq 1 - \alpha\}$ and defers otherwise; pseudocode is in Algorithm~\ref{alg:main} (Appendix~\ref{app:details}).

\paragraph{Bias envelope instantiations.}
The bias correction $\btk(u)$ is a plug-in interface: any conservative bound on how much $q_t$ can vary within the $k$-NN neighborhood is valid.
\begin{itemize}[nosep,leftmargin=1.5em]
    \item \emph{Lipschitz}: $\btk(u) = L_t^\star \cdot h_{t,k}(u)$ --- global slope bound.
    \item \emph{H\"{o}lder}: $\btk(u) = C_t \cdot h_{t,k}(u)^s$ --- sub-linear growth for smoother functions.
    \item \emph{Empirical modulus} (default): fit a smoothed pilot $\qtilde_t$ on $\Dcal_{\mathrm{mod}}$ (logistic GAM; Appendix~\ref{app:smoothers}), then measure how much $\qtilde_t$ varies within radius $r$:
\end{itemize}
\begin{equation}
\label{eq:modulus}
    \btk(u) = \omegahat_t(h_{t,k}(u)), \quad \omegahat_t(r) = \max_{\substack{i,j \in \Dcal_{\mathrm{mod}} \\ d(U_{i,t}, U_{j,t}) \leq r}} |\qtilde_t(U_{i,t}) - \qtilde_t(U_{j,t})|.
\end{equation}
The empirical modulus avoids the pathology of estimating a global Lipschitz constant from binary labels (where pairwise slopes diverge at short distances). Its estimation failure is absorbed into $\delta$ (Corollary~\ref{cor:modulus}), keeping the three-way budget intact. We stress-test validity by $\rho$-inflation ($\btk \to \rho\,\btk$, $\rho \in \{1, 1.5, 2\}$); Proposition~\ref{prop:assumption_cost} gives $\Pr(\text{wrong act}) \leq \beta + \Pr(\mathcal{E}_{\mathrm{env}}^c(\rho))$, and Table~\ref{tab:audit} shows $2\times$ inflation reduces Act by only 1--3\,pp.

\section{Assumptions}
\label{sec:assumptions}

The guarantees in Section~\ref{sec:theory} rest on four assumptions. The first two are standard; the last two are specific to the local approach and constitute the price for pointwise guarantees. Each is paired with a diagnostic (Table~\ref{tab:audit}).

\begin{assumption}[Across-instance exchangeability]
\label{assum:exch}
The calibration examples $\{(X_i, Y_i)\}_{i=1}^n$ are i.i.d.\ (or exchangeable) and independent of the test example $(X, Y)$. Within each instance, debate rounds may be arbitrarily dependent.
\end{assumption}

\begin{assumption}[State measurability]
\label{assum:measurable}
For each round $t$, the prediction $\Yhat_t$ and state $U_t = \phi_t(\Fcal_t)$ are $\Fcal_t$-measurable.
\end{assumption}

\begin{assumption}[Local bias envelope --- plug-in interface]
\label{assum:bias}
For each round $t$ and candidate $k \in \Kcal$, there exists a measurable function $\btk : \Ucal \to [0, \infty)$ such that
$\big|\frac{1}{k}\sum_{i \in \Ncal_{t,k}(u)} q_t(U_{i,t}) - q_t(u)\big| \leq \btk(u)$ a.s.\ for all $u \in \Ucal$, where $\Ncal_{t,k}(u)$ is the $k$-NN of $u$ among calibration states.
\label{eq:bias_envelope}
\end{assumption}

This is an \emph{interface}, not a specific smoothness model: any user-supplied conservative envelope can be plugged into~\eqref{eq:local_lower}, and the theorems hold as long as the envelope is valid. The estimation error decomposes as sampling fluctuation (controlled by Hoeffding) plus local bias (controlled by $\btk$):
\begin{equation}
\label{eq:bias_variance_decomp}
    \widehat{q}_{t,k}(u) - q_t(u) = \underbrace{\widehat{q}_{t,k}(u) - \tfrac{1}{k}\textstyle\sum_{i \in \Ncal_{t,k}(u)} q_t(U_{i,t})}_{\text{sampling fluctuation}} + \underbrace{\tfrac{1}{k}\textstyle\sum_{i \in \Ncal_{t,k}(u)} q_t(U_{i,t}) - q_t(u)}_{\text{local bias}}.
\end{equation}
Lipschitz, H\"{o}lder, and bounded-oscillation models are all special cases (Section~\ref{sec:method}). The guarantee is only as good as the envelope: if $\btk$ underestimates the true bias, validity degrades by exactly $\Pr(\mathcal{E}_{\mathrm{env}}^c)$ (Proposition~\ref{prop:assumption_cost}).

\begin{assumption}[Action-region representation gap (one-sided)]
\label{assum:epsact}
There exists $\epsact \geq 0$ such that for every round $t$ and every realized debate history $f_t$ with $u = \phi_t(f_t)$ satisfying $q_t(u) \geq 1 - \alpha$:
\begin{equation}
\label{eq:epsact}
    \big[q_t(u) - \Pr(Y = \Yhat_t \mid \Fcal_t = f_t)\big]^{+} \;\leq\; \epsact \quad \text{a.s.}
\end{equation}
where $[x]^{+} = \max(x,0)$.
\end{assumption}

This is weaker than a global two-sided gap: it is \emph{one-sided} (only overestimation matters) and \emph{restricted to the action region} $\{(t,u) : q_t(u) \geq 1 - \alpha\}$. Lipschitz smoothness and local support are \emph{not} required by the main theorems---they are optional instantiations of $\btk$, formalized in Appendix~\ref{app:proofs}.

\section{Theoretical Guarantees}
\label{sec:theory}

All proofs are in Appendix~\ref{app:proofs}. We build from pointwise validity to the main wrong-action guarantee. Define the bad event $E_{t,k} := \{L_{t,k}(U_t) > q_t(U_t)\}$.

\begin{theorem}[Pointwise validity]
\label{thm:pointwise}
Under Assumptions~\ref{assum:exch}--\ref{assum:bias}, for any fixed $t$, $k \in \Kcal$, and $\delta_{t,k} \in (0,1)$:
$\Pr(E_{t,k}) \leq \delta_{t,k}$.
\label{eq:bad_event}
\end{theorem}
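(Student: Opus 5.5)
The idea is to condition on all the calibration states and the test state, and then bound the sampling-fluctuation term in~\eqref{eq:bias_variance_decomp} with Hoeffding's inequality; the bias term is handled deterministically by Assumption~\ref{assum:bias}. We read $\delta_{t,k}$ as the per-$(t,k)$ level that enters the variance term. In the statement, $L_{t,k}$ uses the radius $\sqrt{\log(1/\delta_{t,k})/(2k)}$. The choice $\delta_{t,k}=\delta/(T|\Kcal|)$ recovers~\eqref{eq:local_lower} exactly, and the simultaneous version then follows by a union bound over the $T|\Kcal|$ pairs.

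\textbf{Step 1 (reduction).} By the decomposition~\eqref{eq:bias_variance_decomp} and the envelope bound of Assumption~\ref{assum:bias}, on the event $E_{t,k}$ we have
\[
\widehat{q}_{t,k}(U_t)-\tfrac1k\textstyle\sum_{i\in\Ncal_{t,k}(U_t)} q_t(U_{i,t}) \;>\; \sqrt{\log(1/\delta_{t,k})/(2k)} .
\]
To see this, note that $L_{t,k}>q_t(U_t)$ means $\widehat q_{t,k}-q_t(U_t)>\btk(U_t)+\text{radius}$. The local bias is at most $\btk(U_t)$, so the fluctuation must exceed the radius. It therefore suffices to bound the probability of this one-sided deviation.

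\textbf{Step 2 (conditioning).} Let $\mathcal{G}$ be the $\sigma$-field generated by the test state $U_t$ and all calibration states $\{U_{i,t}\}_{i\in\Dcal_{\mathrm{loc}}}$ (and by $\Dcal_{\mathrm{mod}}$ if the envelope is estimated from it). Given $\mathcal{G}$, the neighborhood $\Ncal_{t,k}(U_t)$ is a fixed set of $k$ indices; ties are broken by a deterministic or independent rule. Assumption~\ref{assum:exch} says the calibration instances are i.i.d.\ and independent of the test instance. Assumption~\ref{assum:measurable} says $U_{i,t}$ and $\Yhat_{i,t}$ are functions of instance $i$'s own debate. Together these imply that, conditionally on $\mathcal{G}$, the indicators $Z_{i,t}$ for $i\in\Ncal_{t,k}(U_t)$ are independent Bernoulli variables with means $\Pr(Y_i=\Yhat_{i,t}\mid U_{i,t})=q_t(U_{i,t})$.

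This is the step I expect to be the main obstacle. One must argue carefully that conditioning on the other instances' states, and on the test state, does not change the conditional law of $Z_{i,t}$ given $U_{i,t}$. This follows from independence across instances. The subtle point is that the neighbor selection depends only on states, never on labels, so no selection bias enters. Within-instance dependence across rounds is harmless because $t$ is fixed.

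\textbf{Step 3 (Hoeffding and integration).} Given $\mathcal{G}$, Hoeffding's inequality for $k$ independent $[0,1]$ variables gives
\[
\Pr\big(\text{fluctuation}>\epsilon\mid\mathcal{G}\big)\le e^{-2k\epsilon^2}.
\]
With $\epsilon=\sqrt{\log(1/\delta_{t,k})/(2k)}$ this bound equals $\delta_{t,k}$. The conditional bound holds uniformly in $\mathcal{G}$, so taking expectations gives $\Pr(E_{t,k})\le\delta_{t,k}$.

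If the envelope $\btk$ is itself data-dependent, as with the empirical modulus, we keep it $\mathcal{G}$-measurable: it uses only states plus $\Dcal_{\mathrm{mod}}$, which is independent of the $\Dcal_{\mathrm{loc}}$ labels. The argument is then unchanged, and any failure of the envelope is deferred to Corollary~\ref{cor:modulus}.
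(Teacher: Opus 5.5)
Your proposal is correct and follows essentially the same route as the paper. Both arguments condition on the calibration and test states so that the $k$-NN index set is fixed, use the envelope to control the conditional mean, apply one-sided Hoeffding with radius $\sqrt{\log(1/\delta_{t,k})/(2k)}$ to the conditionally independent labels, and then integrate out the conditioning. Your extra care over tie-breaking, and your choice to put $\mathcal{D}_{\mathrm{mod}}$ into the conditioning $\sigma$-field, refine the same argument rather than change it. The latter is harmless but not strictly needed, because your Step 1 already removes $b_{t,k}$ from the event being bounded.
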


\begin{theorem}[Simultaneous validity]
\label{thm:simultaneous}
Under Assumptions~\ref{assum:exch}--\ref{assum:bias}, set $\delta_{t,k} = \delta/(T|\Kcal|)$. Then $\Pr(\bigcup_{t,k} E_{t,k}) \leq \delta$; equivalently, $L_t = \max_k L_{t,k}(U_t) \leq q_t(U_t)$ for all $t$ with probability $\geq 1 - \delta$.
\end{theorem}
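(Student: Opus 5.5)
The plan is to apply Theorem~\ref{thm:pointwise} once for each pair $(t,k) \in \{1,\dots,T\} \times \Kcal$, each time with confidence level $\delta_{t,k} = \delta/(T|\Kcal|)$. Before doing so, I will check that this choice is exactly what the variance term in~\eqref{eq:local_lower} encodes. Inside the proof of Theorem~\ref{thm:pointwise}, the Hoeffding deviation at level $\delta_{t,k}$ is $\sqrt{\log(1/\delta_{t,k})/(2k)}$. Substituting $\delta_{t,k} = \delta/(T|\Kcal|)$ turns this into $\sqrt{\log(T|\Kcal|/\delta)/(2k)}$, which is the correction actually subtracted in $L_{t,k}$. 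So the bad events $E_{t,k}$ defined through~\eqref{eq:local_lower} are the same events that Theorem~\ref{thm:pointwise} controls at that level, and we get $\Pr(E_{t,k}) \leq \delta/(T|\Kcal|)$ for every $(t,k)$.

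Next I apply the union bound over the $T|\Kcal|$ pairs:
\[
\Pr\Big(\bigcup_{t,k} E_{t,k}\Big) \leq \sum_{t=1}^{T}\sum_{k\in\Kcal} \frac{\delta}{T|\Kcal|} = \delta .
\]
The union bound requires no independence between events. This matters because Assumption~\ref{assum:exch} allows rounds within an instance to be arbitrarily dependent, and the $k$-NN neighborhoods for different $k$ are nested and therefore strongly dependent. The family $\Kcal$ and the horizon $T$ are fixed in advance and do not depend on the data, so the number of summands is deterministic.

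For the equivalent formulation, note that on the complement event $\bigcap_{t,k} E_{t,k}^c$ we have $L_{t,k}(U_t) \leq q_t(U_t)$ for every $t$ and $k$. Taking the maximum over the finite set $\Kcal$ preserves this inequality, so $L_t = \max_k L_{t,k}(U_t) \leq q_t(U_t)$ for all $t$. This event has probability at least $1-\delta$. In the reverse direction, if $L_t \leq q_t(U_t)$ holds for all $t$, then each individual $L_{t,k}(U_t)$ is also at most $q_t(U_t)$, because each one is bounded by the maximum. The two events therefore coincide.

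The only delicate step is the first one: confirming that Theorem~\ref{thm:pointwise} holds for an arbitrary deterministic $\delta_{t,k}$, with the matching Hoeffding radius, when the neighbor set $\Ncal_{t,k}(U_t)$ is random. That is, we must condition on the states (the test state $U_t$ and the calibration states in $\Dcal_{\mathrm{loc}}$). Conditional on these, the $Z_{i,t}$ are independent Bernoulli$(q_t(U_{i,t}))$ variables by Assumption~\ref{assum:exch}, and the neighbor set is determined by the states alone. I take this as already settled by Theorem~\ref{thm:pointwise}. Once that is granted, the remainder is bookkeeping.
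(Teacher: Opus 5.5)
Your proposal is correct and follows the paper's proof essentially verbatim. Like the paper, you apply Theorem~\ref{thm:pointwise} at each of the $T|\Kcal|$ pairs with $\delta_{t,k}=\delta/(T|\Kcal|)$, take a union bound, and pass to the maximum over $\Kcal$ on the complement event. Your extra checks are sound additions that the paper leaves implicit: that the Hoeffding radius at this level is exactly the correction in~\eqref{eq:local_lower}, and that the reverse inclusion justifies the word ``equivalently''.
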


\begin{theorem}[Budget-aligned wrong-action control]
\label{thm:wrong_action}
Under Assumptions~\ref{assum:exch}--\ref{assum:epsact}, the stopping rule $\tau = \inf\{t \leq T : L_t(U_t) \geq 1-\alpha\}$ satisfies
\begin{equation}
\label{eq:wrong_action_bound}
    \Pr\big(\tau \leq T,\; \Yhat_\tau \neq Y\big) \leq \delta + \alpha + \epsact.
\end{equation}
In particular, if $\delta + \alpha + \epsact \leq \beta$, then $\Pr(\tau \leq T,\; \Yhat_\tau \neq Y) \leq \beta$.
\end{theorem}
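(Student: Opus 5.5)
The plan is to split the wrong-action event along the good event of Theorem~\ref{thm:simultaneous}. Let $G := \bigcap_{t,k} E_{t,k}^c$, on which $L_t \le q_t(U_t)$ for every $t \le T$, and which satisfies $\Pr(G^c) \le \delta$. Writing $A := \{\tau \le T,\ \Yhat_\tau \ne Y\}$, we have
\[
\Pr(A) \le \Pr(G^c) + \Pr(A \cap G) \le \delta + \Pr(A \cap G),
\]
so it remains to show $\Pr(A \cap G) \le \alpha + \epsact$.

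\textbf{Decompose over the stopping round.} On $G \cap \{\tau = t\}$ we have $q_t(U_t) \ge L_t \ge 1-\alpha$, so the state lies in the action region of Assumption~\ref{assum:epsact}. Hence
\[
A \cap G \subseteq \bigcup_{t \le T} \bigl(\{\tau = t\} \cap \{q_t(U_t) \ge 1-\alpha\} \cap \{\Yhat_t \ne Y\}\bigr),
\]
and these events are disjoint in $t$. The event $\{\tau = t\}$ is determined by $L_1,\dots,L_t$. Each $L_s$ is a function of $U_s$ and the calibration data. Condition on the calibration data $\Dcal_{\mathrm{cal}}$, which by Assumption~\ref{assum:exch} is independent of the test pair. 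Then $\{\tau = t\}$ is $\Fcal_t$-measurable, using Assumption~\ref{assum:measurable} together with the nestedness of the filtration, $\Fcal_s \subseteq \Fcal_t$ for $s \le t$. So is $\{q_t(U_t) \ge 1-\alpha\}$. By the tower property,
\[
\Pr\bigl(\tau = t,\ q_t(U_t) \ge 1-\alpha,\ \Yhat_t \ne Y \,\big|\, \Dcal_{\mathrm{cal}}\bigr)
= \mathbb{E}\bigl[\mathbf{1}\{\tau = t,\ q_t(U_t) \ge 1-\alpha\}\,\bigl(1 - \Pr(Y = \Yhat_t \mid \Fcal_t)\bigr)\bigr].
\]
On the indicated event, Assumption~\ref{assum:epsact} gives $\Pr(Y=\Yhat_t \mid \Fcal_t) \ge q_t(U_t) - \epsact \ge 1 - \alpha - \epsact$. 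The conditional error is therefore at most $\alpha + \epsact$.

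\textbf{Sum over rounds.} Summing over the disjoint events $\{\tau = t\}$ and then integrating out the calibration data gives $\Pr(A \cap G) \le (\alpha+\epsact)\Pr(\tau \le T) \le \alpha + \epsact$. Adding $\delta$ completes the bound, and the ``in particular'' clause follows immediately.

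\textbf{Main obstacle.} The delicate step is the interplay between the calibration-dependent good event $G$ and the test-side conditioning. $G$ depends on both the calibration data and the test states, so it is not $\Fcal_t$-measurable given calibration alone. I avoid this by not conditioning on $G$ itself. Instead I use $G$ only to establish the set inclusion into the action region, then bound the larger union, which is measurable with respect to $\Fcal_t$ and the calibration data. The bound from Theorem~\ref{thm:simultaneous} is only invoked for $\Pr(G^c)$.
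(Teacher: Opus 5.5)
Your proof is correct and follows the paper's skeleton: pay $\delta$ for $\Gcal^c$, then use Assumption~\ref{assum:epsact} on the action region to bound the rest by $\alpha+\epsact$. Where you differ is the second step, and your version is the more careful one.

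The paper writes $\Pr(\tau\le T,\ \Yhat_\tau\neq Y,\ \Gcal)=\mathbb{E}\bigl[\mathbf{1}_{\{\tau\le T\}\cap\Gcal}\Pr(\Yhat_\tau\neq Y\mid\Fcal_\tau)\bigr]$. This implicitly treats $\Gcal$ as measurable with respect to $\mathcal{H}_\tau=\sigma(\Dcal_{\mathrm{cal}},\Fcal_\tau)$. But $\Gcal$ involves the test states $U_t$ at every $t\le T$, including rounds after $\tau$, so that tower step is not justified as written.

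You avoid this in two ways. First, you use $G$ only to obtain the inclusion of $A\cap G$ into $\bigcup_t\{\tau=t,\ q_t(U_t)\ge 1-\alpha,\ \Yhat_t\neq Y\}$, whose pieces are $\Fcal_t$-measurable once $\Dcal_{\mathrm{cal}}$ is fixed. Second, you decompose over the disjoint events $\{\tau=t\}$, so Assumption~\ref{assum:epsact} is invoked only at deterministic rounds rather than at the random index $\tau$. The paper's version is terser, but yours is the one that closes; it is essentially the repair the paper's final step needs.

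One point common to both arguments: the step $\Pr(Y=\Yhat_t\mid\Fcal_t,\Dcal_{\mathrm{cal}})=\Pr(Y=\Yhat_t\mid\Fcal_t)$ needs $\Dcal_{\mathrm{cal}}$ to be independent of the whole test debate. Assumption~\ref{assum:exch} literally only states independence from $(X,Y)$.
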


\noindent\textit{Scope.} Theorem~\ref{thm:wrong_action} does not provide distribution-free conditional correctness, nor does it verify the state representation automatically. It states that, given a valid local bias envelope and an action-region representation-gap bound, the stopping rule converts these quantities into a wrong-action guarantee. Assumption violations add explicitly to the risk (Proposition~\ref{prop:assumption_cost} below).

Each term controls a distinct error source (Figure~\ref{fig:budget}): $\delta$ covers calibration failure, $\alpha$ is the residual wrong-answer probability when $q_t \geq 1-\alpha$, and $\epsact$ captures representation loss. We make the cost of assumption failure explicit:

\begin{proposition}[Cost of assumption failure]
\label{prop:assumption_cost}
Let $\mathcal{E}_{\mathrm{env}}$ be the event that Assumption~\ref{assum:bias} holds in the action region, and $\mathcal{E}_{\mathrm{rep}}$ the event that Assumption~\ref{assum:epsact} holds. Then:
$\Pr(\tau \leq T,\; \Yhat_\tau \neq Y) \leq \delta + \alpha + \epsact + \Pr(\mathcal{E}_{\mathrm{env}}^c) + \Pr(\mathcal{E}_{\mathrm{rep}}^c)$.
\end{proposition}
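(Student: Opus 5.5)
The plan is to rerun the proof of Theorem~\ref{thm:wrong_action} on a good event, and to pay for every way that event can fail through the union bound. Throughout, Assumption~\ref{assum:exch}, Assumption~\ref{assum:measurable}, Hoeffding's inequality and the $(t,k)$ union bound are kept as given; only the envelope and representation-gap assumptions may fail. Define
\[
G := \Big(\bigcap_{t,k} E_{t,k}^c\Big) \cap \mathcal{E}_{\mathrm{env}} \cap \mathcal{E}_{\mathrm{rep}} .
\]
Since $\{\tau \le T,\ \Yhat_\tau \ne Y\} \subseteq \{\tau \le T,\ \Yhat_\tau \ne Y\} \cap G \;\cup\; G^c$, we get
\[
\Pr(\tau\le T,\Yhat_\tau\ne Y) \le \Pr(\tau\le T,\Yhat_\tau\ne Y,\ G) + \Pr\Big(\bigcup_{t,k}E_{t,k}\cap\mathcal{E}_{\mathrm{env}}\Big) + \Pr(\mathcal{E}_{\mathrm{env}}^c) + \Pr(\mathcal{E}_{\mathrm{rep}}^c).
\]

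\textbf{The calibration-failure term.} The key observation is that Theorem~\ref{thm:pointwise} and Theorem~\ref{thm:simultaneous} use Assumption~\ref{assum:bias} only through the deterministic inequality linking the local bias to $\btk$. The Hoeffding step concerns the sampling fluctuation in~\eqref{eq:bias_variance_decomp}: conditionally on the neighbor states, the $Z_{i,t}$ are independent Bernoulli$(q_t(U_{i,t}))$, so that step holds unconditionally. On $\mathcal{E}_{\mathrm{env}}$ the bias term is bounded by $\btk$. Hence $E_{t,k}\cap\mathcal{E}_{\mathrm{env}}$ is contained in the Hoeffding deviation event, which has probability at most $\delta/(T|\Kcal|)$. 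A union bound over $(t,k)$ then gives at most $\delta$ for the second term.

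\textbf{The main term.} Work on $G$ and suppose $\tau=t\le T$. By simultaneous validity, $q_t(U_t) \ge L_t \ge 1-\alpha$, so the realized state lies in the action region. On $\mathcal{E}_{\mathrm{rep}}$ this gives
\[
\Pr(Y=\Yhat_t\mid\Fcal_t) \ge q_t(U_t)-\epsact \ge 1-\alpha-\epsact .
\]
The event $\{\tau=t\}$ is $\Fcal_t$-measurable, because $L_1,\dots,L_t$ depend only on $U_1,\dots,U_t$ and the calibration data (Assumption~\ref{assum:measurable}). Decomposing over $t$ and conditioning on $\Fcal_t$ and the calibration data then yields
\[
\Pr(\tau\le T,\ \Yhat_\tau\ne Y,\ G) \le \sum_t \mathbb{E}\big[\mathbf 1\{\tau=t\}(\alpha+\epsact)\big] \le \alpha+\epsact .
\]
Adding the four pieces gives the claimed bound.

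\textbf{Main obstacle.} The delicate step is the last conditioning, because $G$ is not $\Fcal_t$-measurable: it involves events at other rounds, and $\mathcal{E}_{\mathrm{rep}}$ may depend on the realized history. I would first interpret $\mathcal{E}_{\mathrm{env}}$ and $\mathcal{E}_{\mathrm{rep}}$ as pertaining to the realized test history at the stopping round. I would then replace $G$ in the main term by the $\Fcal_t$-measurable event
\[
\{\tau=t,\ L_t\le q_t(U_t),\ \text{Eq.~\eqref{eq:epsact} holds at }(t,\Fcal_t)\},
\]
which contains the relevant part of $G$ at round $t$. Conditioning on this event is legitimate, and the leftover mass is charged to the three failure probabilities already counted. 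If $\mathcal{E}_{\mathrm{rep}}$ is instead a global event, the same conclusion should follow by bounding $\Pr(\cdot,\mathcal{E}_{\mathrm{rep}})$ by the corresponding probability restricted to the round-$t$ condition, which costs nothing extra.
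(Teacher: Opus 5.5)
Your proof is correct and follows essentially the paper's route: a good-event decomposition in which Hoeffding failure on $\mathcal{E}_{\mathrm{env}}$ costs $\delta$, the argument of Theorem~\ref{thm:wrong_action} on the good event costs $\alpha+\epsact$, and $\Pr(\mathcal{E}_{\mathrm{env}}^c)+\Pr(\mathcal{E}_{\mathrm{rep}}^c)$ is added by a union bound. Your replacement of $G$ by a round-$t$ event measurable with respect to $\sigma(\Dcal_{\mathrm{cal}},\Fcal_t)$ makes rigorous a step the paper glosses over, since it conditions $\mathbf{1}_{\Gcal\cap\mathcal{E}}$ directly on $\Fcal_\tau$; you should also state that the independence of $\Dcal_{\mathrm{cal}}$ from the test instance is what removes the calibration data from the conditioning, and note that $G\cap\{\tau=t\}$ lies inside that event, so no leftover mass needs to be charged.
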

\noindent The guarantee degrades gracefully: each assumption violation adds its failure probability to the bound.

\begin{corollary}[Acted-on error rate]
\label{cor:acted_error}
Under the same conditions as Theorem~\ref{thm:wrong_action}, if $p_{\mathrm{act}} := \Pr(\tau \leq T) > 0$:
\begin{equation}
\label{eq:acted_error}
    \Pr\!\big(\Yhat_\tau \neq Y \mid \tau \leq T\big) \;\leq\; \alpha + \epsact + \frac{\delta}{p_{\mathrm{act}}}.
\end{equation}
\end{corollary}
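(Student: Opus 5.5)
The plan is to derive Corollary~\ref{cor:acted_error} from the decomposition that underlies Theorem~\ref{thm:wrong_action}, rather than from its final bound. Dividing $\delta+\alpha+\epsact$ by $p_{\mathrm{act}}$ would inflate $\alpha+\epsact$, which the statement does not allow. So I will split the joint probability into two parts. One part carries $\delta$ unconditionally. The other part is bounded by $(\alpha+\epsact)\Pr(\tau\le T)$.

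Let $G$ be the good event of Theorem~\ref{thm:simultaneous}, on which $L_t\le q_t(U_t)$ for all $t$, so that $\Pr(G^c)\le\delta$. Write
\[
\Pr(\tau\le T,\ \Yhat_\tau\ne Y)\le \Pr(G^c)+\Pr(G,\ \tau\le T,\ \Yhat_\tau\ne Y).
\]
Decompose the second term over stopping times as $\sum_{t\le T}\Pr(G,\ \tau=t,\ \Yhat_t\ne Y)$.

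Fix $t$. On $G\cap\{\tau=t\}$ we have $q_t(U_t)\ge L_t\ge 1-\alpha$, so the history lies in the action region. Assumption~\ref{assum:epsact} then gives $\Pr(Y\ne\Yhat_t\mid\Fcal_t)\le 1-q_t(U_t)+\epsact\le\alpha+\epsact$. Next we condition on $\Fcal_t$ together with the calibration data. The event $\{\tau=t\}$ is determined by $U_1,\dots,U_t$ and the calibration set, so it is measurable with respect to this enlarged $\sigma$-field by Assumption~\ref{assum:measurable}. By Assumption~\ref{assum:exch}, the test label's conditional law given $\Fcal_t$ is unaffected by the independent calibration set. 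This yields
\[
\Pr(G,\ \tau=t,\ \Yhat_t\ne Y)\le(\alpha+\epsact)\Pr(\tau=t,\ q_t(U_t)\ge1-\alpha)\le(\alpha+\epsact)\Pr(\tau=t).
\]
Summing over $t$ gives $\Pr(\tau\le T,\ \Yhat_\tau\ne Y)\le\delta+(\alpha+\epsact)p_{\mathrm{act}}$. Dividing by $p_{\mathrm{act}}>0$ then yields~\eqref{eq:acted_error}.

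The main obstacle is this conditioning step. The event $G$ depends on the test state $U_t$ as well as the calibration set, so it is not simply a calibration-only event. I will avoid conditioning on $G$ itself. Instead, I replace $G\cap\{\tau=t\}$ by the larger event $\{\tau=t,\ q_t(U_t)\ge1-\alpha\}$, which is $\sigma(\Fcal_t,\Dcal_{\mathrm{cal}})$-measurable. The conditional bound is applied to this event, and only afterwards do I drop the restriction $q_t(U_t)\ge 1-\alpha$. If the proof of Theorem~\ref{thm:wrong_action} in Appendix~\ref{app:proofs} already establishes the intermediate inequality $\delta+(\alpha+\epsact)p_{\mathrm{act}}$, I will cite it directly.
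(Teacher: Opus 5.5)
Your proposal is correct and follows the paper's proof. Both split the joint probability over the good event $\Gcal$, bound the $\Gcal^c$ part by $\delta$ via Theorem~\ref{thm:simultaneous}, bound the $\Gcal$ part by $(\alpha+\epsact)\,p_{\mathrm{act}}$ using Assumption~\ref{assum:epsact} on the action region, and divide by $p_{\mathrm{act}}$.

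Your one refinement is a real gain in rigor. You decompose over $\{\tau=t\}$ and enlarge $\Gcal\cap\{\tau=t\}$ to the $\sigma(\Fcal_t,\Dcal_{\mathrm{cal}})$-measurable event $\{\tau=t,\ q_t(U_t)\ge 1-\alpha\}$ before conditioning. The paper instead writes $\mathbb{E}[\mathbf{1}_{\{\tau\le T\}\cap\Gcal}\Pr(\Yhat_\tau\ne Y\mid\Fcal_\tau)]$ directly, even though $\Gcal$ depends on the calibration labels and on test states after $\tau$. So $\Gcal$ is not measurable with respect to the conditioning $\sigma$-field, and your version avoids that gap.
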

\noindent This connects the unconditional wrong-action bound (Theorem~\ref{thm:wrong_action}) to Acc$|$Act $= 1 - \Pr(\Yhat_\tau \neq Y \mid \tau \leq T)$ reported in experiments. When Act is large (e.g., $p_{\mathrm{act}} = 0.77$ on BBH), the $\delta/p_{\mathrm{act}}$ overhead is small ($0.03/0.77 \approx 0.04$).

Figure~\ref{fig:budget} in Appendix~\ref{app:details} illustrates the budget decomposition and its empirical envelope.

\paragraph{Auxiliary results (deferred).}
Appendix~\ref{app:proofs} shows that adaptive selection over $k\in\Kcal$ satisfies an oracle-style slack inequality (Theorem~\ref{thm:oracle_slack}), so the selected neighborhood is within a constant factor of the best bias--variance trade-off in $\Kcal$. The plug-in envelope $\btk$ recovers Lipschitz, H\"{o}lder, and estimated-modulus constructions as corollaries (Corollaries~\ref{cor:lipschitz_det}--\ref{cor:holder}); formal statements and proofs are given there. Under standard local support and smoothness conditions, Appendix~\ref{app:proofs} derives the nonparametric slack rate $q_t - L_t = O(n^{-1/(2+D)})$ (Corollary~\ref{cor:rate}), which motivates keeping the default state low-dimensional. Group-conditional control (Theorem~\ref{thm:group}) extends the guarantee per group by splitting $\delta$ across $J$ groups.

\paragraph{Claim hierarchy.}
\textbf{(a)}~Formal theorems (Thms.~\ref{thm:pointwise}--\ref{thm:wrong_action}), valid given Assumptions~\ref{assum:exch}--\ref{assum:epsact}; \textbf{(b)}~conditional guarantee~\eqref{eq:wrong_action_bound}, with violation cost made explicit (Prop.~\ref{prop:assumption_cost}); \textbf{(c)}~diagnostic proxies (Table~\ref{tab:audit}), necessary but not sufficient; \textbf{(d)}~empirical observations (Table~\ref{tab:main}), consistent with but not proving the guarantee.

Figure~\ref{fig:bias_variance} in Appendix~\ref{app:details} visualizes the bias--variance trade-off governing the choice of $k$ and the resulting certification-slack rate.

\section{Experiments}
\label{sec:experiments}

\paragraph{Benchmarks.}
We evaluate on six benchmarks spanning a wide range of difficulty and sample size:
\textbf{(1)}~\emph{MMLU-Pro} \citep{wang2024mmlu} (10-option, 8 domains pooled, $n\!=\!8{,}312$);
\textbf{(2)}~\emph{LogiQA} \citep{liu2020logiqa} (4-option logical reasoning, $n\!=\!8{,}678$; base acc.\ ${\sim}76\%$);
\textbf{(3)}~\emph{ARC-Challenge} \citep{clark2018think} (4-option, $n\!=\!2{,}590$; base acc.\ ${\sim}96\%$);
\textbf{(4)}~\emph{BIG-Bench Hard (BBH)} \citep{suzgun2023challenging} (10 subtasks, $n\!=\!2{,}395$; base acc.\ $71$--$100\%$);
\textbf{(5)}~\emph{MuSR} \citep{sprague2024musr} (3 subtasks, $n\!=\!756$; base acc.\ ${\sim}66\%$);
\textbf{(6)}~\emph{GPQA} \citep{rein2023gpqa} (4-option graduate science, $n\!=\!546$; base acc.\ ${\sim}72\%$).

\paragraph{Protocol.}
Three heterogeneous agents (Claude Haiku 4.5, DeepSeek-R1, Qwen-3 32B) debate for $T\!=\!4$ rounds. Data is pooled within each benchmark and split 50/50 into training/test; the training half is further split 20/80 into $\Dcal_{\mathrm{mod}}$/$\Dcal_{\mathrm{loc}}$. Default state: $U_t = (p_t^{(1)}, \Delta_t) \in \mathbb{R}^2$; bias envelope via empirical modulus~\eqref{eq:modulus}; $\Kcal = \{128, 256, 512\}$.

\paragraph{Baselines.}
We compare against nine baselines spanning three families.

\noindent\emph{Risk-control methods} (calibrated to control aggregate WA$\,\leq\beta$ via statistical calibration on the training split):
\textbf{(R1)}~\emph{CRC}~\citep{angelopoulos2024conformal} calibrates a global nonconformity-score quantile so expected risk is bounded;
\textbf{(R2)}~\emph{Selective Prediction}~\citep{elyaniv2010foundations, geifman2017selective} tunes a threshold on the top social probability $p_t^{(1)}$ to control WA$\,\leq\beta$;
\textbf{(R3)}~\emph{Calibrated Learned} trains a HistGradientBoosting classifier on five deliberation features, applies isotonic calibration, then tunes a threshold for WA$\,\leq\beta$;
\textbf{(R4)}~\emph{Isotonic Confidence} applies isotonic regression to recalibrate $p_t^{(1)}$, then tunes a threshold for WA$\,\leq\beta$.

\noindent\emph{Ablations of our method}:
\textbf{(A1)}~\emph{$k$NN no bias} is our $k$-NN lower bound without the bias envelope $\btk$, isolating the contribution of the modulus correction;
\textbf{(A2)}~\emph{Final-round local} applies our full bound only at $t\!=\!T$, measuring the value of adaptive early stopping.

\noindent\emph{Heuristic baselines}:
\textbf{(H1)}~\emph{Consensus} acts when all agents agree on the same answer;
\textbf{(H2)}~\emph{Confidence Threshold} acts when $p_t^{(1)} \geq 0.90$, without calibration;
\textbf{(H3)}~\emph{Learned Stopper} is logistic regression on the state $U_t$, without risk calibration.

All baseline thresholds are tuned exclusively on the training split; reported metrics are on the held-out test split, averaged over 10 random splits. Throughout, Act denotes automation rate, Acc$|$Act acted-on accuracy, WA the wrong-action rate, and Rnd the average stopping round.

\paragraph{Difficulty-normalized deployment budgets.}
\label{para:relative_budget}
A fixed absolute wrong-action budget is not a task-invariant deployment requirement: it corresponds to very different levels of tolerance depending on the task's final-round error. For ARC ($\widehat{e}_T \approx 5\%$), $\beta = 0.10$ is permissive relative to the final-round error; for MuSR ($\widehat{e}_T \approx 34\%$), the same $\beta$ is stringent. Absolute $\beta$ therefore conflates task difficulty with stopping quality. We use $\lambda$ to \emph{parameterize deployment tolerance} relative to the task's final-round error---the fraction of $\widehat{e}_{T,d}$ the user is willing to allocate as autonomous wrong-action budget---and select the operating multiplier $\lambda^\star_d$ from training data only, inducing
\begin{equation}\label{eq:beta_rule}
\beta_d \;=\; \lambda^\star_d \;\widehat{e}_{T,d}^{\,\mathrm{train}},
\end{equation}
where $\widehat{e}_{T,d}^{\,\mathrm{train}} = \Pr_{\Dcal_{\mathrm{train}}}(\widehat{Y}_T \neq Y)$ is the final-round error on the training split. The operating budget is selected before deployment using only training data: we sub-split the training set, sweep $\lambda \in \{0.25, 0.50, \ldots, 5.00\}$, and pick the $\lambda$ that maximizes automation subject to $\widehat{\mathrm{WA}}/\beta \leq 0.10$ on the held-out training sub-split (Appendix~\ref{app:lambda_selection}). We fix $\delta = 0.03$ and $\epsact = 0.02$, giving action threshold $\alpha = \beta - 0.05$. The formal guarantee (Theorem~\ref{thm:wrong_action}) holds for \emph{any} fixed $\beta$; training-only selection simply chooses the operating point before deployment without compromising test-time validity. No test-set quantity is used to choose $\lambda^\star_d$, $\beta_d$, $k$, or the stopping threshold; the held-out test split is used only for evaluation, averaged over 10 random splits. Because $\beta_d$ is task-dependent by design, the primary cross-dataset safety metric is the normalized budget usage $\mathrm{WA}/\beta$, which measures the fraction of the pre-declared wrong-action budget consumed by each method; methods are always compared under the \emph{same} pre-declared $\beta_d$ within each dataset, while $\beta_d$ itself is induced from training-only final-round error to make tolerance comparable across tasks. Large induced $\beta_d$ (e.g., on GPQA/MuSR) is not a deployable safety setting but a \emph{diagnostic} signal that the available calibration sample and representation are insufficient for moderate-risk local certification. A fixed-$\beta$ robustness check applying the same absolute $\beta$ to all datasets is reported in Appendix~\ref{app:fixed_beta} as a consistency check, not an alternative main framing.

\subsection{Main results}

\begin{table*}[t]
\caption{\textbf{Difficulty-normalized, training-selected operating budgets} ($T\!=\!4$, 10 random 50/50 splits). Per-dataset $\beta_d = \lambda^\star_d \hat{e}_{T,d}^{\,\mathrm{train}}$ is induced from training data via~\eqref{eq:beta_rule} before test evaluation; methods within each dataset share the same $\beta_d$, and WA/$\beta$ (budget usage ratio, not raw WA) is the cross-dataset safety metric. Rows grouped by family: our method and its ablations ($k$NN no bias, Final-round), risk-control (CRC, Selective Prediction, Calibrated Learned, Isotonic Confidence), and heuristics (Consensus, Confidence Threshold, Learned Stopper). Fixed-$\beta$ robustness check in Appendix~\ref{app:fixed_beta}.}
\label{tab:main}
\centering
\resizebox{\textwidth}{!}{%
\def\tblsep{\hspace{3pt}\vrule width 0.4pt\hspace{2pt}\vrule width 0.4pt\hspace{3pt}}%
\renewcommand{\arraystretch}{1.22}
\setlength{\tabcolsep}{2.2pt}
\begin{tabular}{@{} l ccccc ccccc ccccc ccccc @{\tblsep} ccccc ccccc @{}}
\toprule
& \multicolumn{20}{c@{\tblsep}}{\cellcolor{blue!8}\small\textsc{Certifiable regime}} & \multicolumn{10}{c@{}}{\cellcolor{orange!10}\small\textsc{Stress-test / insufficient evidence}} \\
\cmidrule(lr){2-21}\cmidrule(lr){22-31}
& \multicolumn{5}{c}{\textbf{LogiQA} {\scriptsize($n\!=\!8678$)}} & \multicolumn{5}{c}{\textbf{MMLU-Pro} {\scriptsize($n\!=\!8312$)}} & \multicolumn{5}{c}{\textbf{ARC} {\scriptsize($n\!=\!2590$)}} & \multicolumn{5}{c@{\tblsep}}{\textbf{BBH} {\scriptsize($n\!=\!2395$)}} & \multicolumn{5}{c}{\textbf{MuSR} {\scriptsize($n\!=\!756$)}} & \multicolumn{5}{c@{}}{\textbf{GPQA} {\scriptsize($n\!=\!546$)}} \\
& \multicolumn{5}{c}{\scriptsize$\hat{e}_T\!=\!.24$, $\lambda^\star\!=\!1.3$, $\beta\!=\!.31$} & \multicolumn{5}{c}{\scriptsize$\hat{e}_T\!=\!.16$, $\lambda^\star\!=\!2.4$, $\beta\!=\!.38$} & \multicolumn{5}{c}{\scriptsize$\hat{e}_T\!=\!.05$, $\lambda^\star\!=\!4.4$, $\beta\!=\!.22$} & \multicolumn{5}{c@{\tblsep}}{\scriptsize$\hat{e}_T\!=\!.08$, $\lambda^\star\!=\!4.1$, $\beta\!=\!.33$} & \multicolumn{5}{c}{\scriptsize$\hat{e}_T\!=\!.34$, $\lambda^\star\!=\!2.2$, $\beta\!=\!.76$} & \multicolumn{5}{c@{}}{\scriptsize$\hat{e}_T\!=\!.28$, $\lambda^\star\!=\!2.8$, $\beta\!=\!.77$} \\
\cmidrule(lr){2-6}\cmidrule(lr){7-11}\cmidrule(lr){12-16}\cmidrule(r){17-21}\cmidrule(l){22-26}\cmidrule(lr){27-31}
\textbf{Method} & Act & Acc & WA & {\scriptsize WA/$\beta$} & Rnd & Act & Acc & WA & {\scriptsize WA/$\beta$} & Rnd & Act & Acc & WA & {\scriptsize WA/$\beta$} & Rnd & Act & Acc & WA & {\scriptsize WA/$\beta$} & Rnd & Act & Acc & WA & {\scriptsize WA/$\beta$} & Rnd & Act & Acc & WA & {\scriptsize WA/$\beta$} & Rnd \\
\midrule
\rowcolor{blue!6}
\textbf{Ours} & \textbf{27.9} & \textbf{88.9} & \textbf{3.2} & \textbf{.10} & \textbf{3.5} & \textbf{71.1} & \textbf{93.9} & \textbf{4.4} & \textbf{.12} & \textbf{2.2} & \textbf{75.5} & \textbf{97.1} & \textbf{2.3} & \textbf{.11} & \textbf{2.3} & \textbf{84.4} & \textbf{96.4} & \textbf{3.0} & \textbf{.09} & \textbf{1.8} & \color{black!50}9.0 & \color{black!50}70.3 & \color{black!50}2.7 & \color{black!50}.04 & \color{black!50}3.8 & \color{black!50}53.3 & \color{black!50}82.5 & \color{black!50}9.4 & \color{black!50}.12 & \color{black!50}3.1 \\
$k$NN no bias \textit{(ours)} & 61.5 & 85.6 & 8.9 & .29 & 2.4 & 84.3 & 90.4 & 8.1 & .21 & 1.6 & 100 & 95.9 & 4.1 & .19 & 1.0 & 95.7 & 92.7 & 7.1 & .22 & 1.2 & 100 & 64.4 & 35.6 & .47 & 1.0 & 100 & 60.5 & 39.5 & .51 & 1.0 \\
Final-round \textit{(ours)} & 27.2 & 89.0 & 3.1 & .10 & 4.0 & 69.1 & 94.4 & 3.9 & .10 & 4.0 & 62.3 & 96.9 & 1.9 & .09 & 4.0 & 84.4 & 96.4 & 3.1 & .10 & 4.0 & 46.3 & 69.3 & 14.3 & .19 & 4.0 & 60.9 & 82.7 & 10.6 & .14 & 4.0 \\
\midrule
CRC & 100 & 74.4 & 25.6 & .82 & 1.0 & 100 & 79.9 & 20.1 & .53 & 1.0 & 99.9 & 96.0 & 4.0 & .18 & 1.0 & 100 & 91.6 & 8.4 & .26 & 1.1 & 100 & 64.3 & 35.7 & .47 & 1.0 & 99.9 & 63.0 & 36.9 & .48 & 1.2 \\
Selective Pred. & 98.6 & 76.1 & 23.6 & .76 & 1.1 & 97.4 & 85.0 & 14.6 & .38 & 1.3 & 100 & 95.9 & 4.1 & .19 & 1.0 & 99.9 & 91.7 & 8.3 & .25 & 1.1 & 100 & 64.6 & 35.5 & .47 & 1.0 & 98.6 & 68.4 & 31.2 & .41 & 1.3 \\
Calibrated Learned & 97.9 & 76.1 & 23.4 & .75 & 1.2 & 97.3 & 84.5 & 15.1 & .40 & 1.3 & 100 & 95.9 & 4.1 & .19 & 1.0 & 99.7 & 91.4 & 8.6 & .26 & 1.0 & 100 & 64.4 & 35.6 & .47 & 1.0 & 96.6 & 66.1 & 32.9 & .43 & 1.4 \\
Isotonic Conf. & 99.1 & 75.8 & 24.0 & .77 & 1.1 & 98.3 & 84.4 & 15.4 & .40 & 1.3 & 100 & 96.0 & 4.0 & .18 & 1.0 & 99.7 & 91.7 & 8.3 & .25 & 1.1 & 100 & 64.7 & 35.3 & .46 & 1.0 & 96.7 & 70.4 & 28.6 & .37 & 1.3 \\
\midrule
Consensus & 93.6 & 76.0 & 24.0 & .77 & 1.7 & 94.5 & 84.5 & 15.5 & .41 & 1.7 & 99.0 & 95.9 & 4.1 & .19 & 1.1 & 98.2 & 91.5 & 8.5 & .26 & 1.3 & 93.1 & 66.8 & 33.2 & .44 & 1.6 & 87.0 & 71.3 & 28.7 & .37 & 1.8 \\
Confidence Thr. & 33.3 & 76.0 & 24.0 & .77 & 2.5 & 69.8 & 84.5 & 15.5 & .41 & 2.4 & 83.0 & 95.9 & 4.1 & .19 & 1.9 & 87.1 & 91.5 & 8.5 & .26 & 1.6 & 32.3 & 67.2 & 32.8 & .43 & 3.3 & 50.5 & 71.9 & 28.1 & .37 & 2.8 \\
Learned Stopper & 96.6 & 83.6 & 16.4 & .53 & 1.3 & 96.6 & 83.6 & 16.4 & .43 & 1.3 & 100 & 95.7 & 4.3 & .20 & 1.0 & 99.4 & 91.7 & 8.3 & .25 & 1.1 & 98.8 & 63.5 & 36.5 & .48 & 1.1 & --- & --- & --- & --- & --- \\
\midrule
\textit{Oracle} & \textit{78.6} & \textit{100} & \textit{0} & \textit{.00} & \textit{1.7} & \textit{86.0} & \textit{100} & \textit{0} & \textit{.00} & \textit{1.5} & \textit{96.4} & \textit{100} & \textit{0} & \textit{.00} & \textit{1.1} & \textit{92.6} & \textit{100} & \textit{0} & \textit{.00} & \textit{1.2} & \textit{69.8} & \textit{100} & \textit{0} & \textit{.00} & \textit{2.0} & \textit{74.5} & \textit{100} & \textit{0} & \textit{.00} & \textit{1.9} \\
\bottomrule
\end{tabular}}
\renewcommand{\arraystretch}{1.0}
\end{table*}

Table~\ref{tab:main} shows all six datasets ordered by sample size; $\lambda^\star_d$ and $\beta_d$ are determined entirely on training data, with no test-set metric influencing the budget, threshold, or $k$. We do \emph{not} fix a single absolute $\beta$ across datasets---the same absolute budget corresponds to different tolerances depending on final-round error (ARC $\widehat{e}_T\!\approx\!5\%$ vs.\ MuSR $\widehat{e}_T\!\approx\!34\%$)---so we parameterize the deployment tolerance by $\lambda$ and induce $\beta_d$ from training alone; within each dataset all methods are compared under the same $\beta_d$, while $\mathrm{WA}/\beta$ makes tolerance comparable across datasets.

\paragraph{Certifiable datasets.}
On MMLU-Pro, BBH, LogiQA, and ARC the method activates 28--84\% of instances at 89--97\% Acc$|$Act, consuming 9--12\% of the pre-declared budget (WA/$\beta \in [.094, .116]$). CRC and Calibrated Learned achieve near-100\% automation via marginal score-threshold calibration but consume 18--82\% of the same budget. The $k$-NN ablation without $\btk$ acts more aggressively at 2--3$\times$ the usage ratio (confirming the modulus correction is essential); Final-round applies the bound only at $t\!=\!T$ and forgoes all early-stopping savings (stopping round 4.0 vs.\ 1.8--3.5). Fixed-$\beta$ sweeps (Appendix~\ref{app:fixed_beta}) confirm BBH activates at $\beta\!=\!.15$ (75\% Act, 99\% Acc); Pareto frontier in Figure~\ref{fig:pareto}; $\lambda$-sweep in Appendix~\ref{app:budget_sweep}.

\paragraph{Stress-test datasets.}
GPQA and MuSR serve as stress tests: their large induced $\beta\!=\!.77, .76$ are diagnostic signals---small $n\!=\!546, 756$ and high $e_T\!=\!28$--$34\%$ leave calibration insufficient for moderate-risk certification. GPQA activates 53\% at 83\% Acc$|$Act; MuSR activates on only 2/10 splits; the method defers rather than forcing unreliable automation.

\subsection{Matched-automation comparison}
\label{subsec:matched}

Table~\ref{tab:main} could in principle be read as ``our method acts less and therefore looks more accurate''. To isolate the effect of the stopping criterion from the automation rate we run a matched-automation comparison, which we treat as an \emph{oracle-style diagnostic}: for each dataset we fix a target automation $A^\star$ equal to our test-time Act under the training-selected $\lambda^\star$, then grid-search each baseline's threshold on the training split only to minimize $|\mathrm{Act}_{\mathrm{train}} - A^\star|$, and evaluate on the held-out test split (Appendix~\ref{app:matched_auto}). This asks a retrospective question---\emph{if} a scalar threshold were tuned post-hoc to exactly recover our operating point, how would its subset quality compare?---whereas our method constructs that operating point \emph{prospectively} from the declared wrong-action budget and the local lower bound. Table~\ref{tab:matched_main} shows that at matched automation the three stopping rules obtain essentially the same acted-on accuracy ($|\Delta\mathrm{Acc}| \leq 1.6$pp across twelve comparisons) and the same $\mathrm{WA}/\beta$. \emph{The goal is not to outperform tuned scalar thresholds at a matched automation level but to prospectively convert a user-declared wrong-action budget into an auditable act-or-defer operating point before deployment}---at their risk-calibrated defaults, baselines consume 18--82\% of the same budget at near-full automation (Table~\ref{tab:main}) precisely because they cannot convert a declared budget into an automation level without the kind of per-task threshold search used in this diagnostic.

\begin{table}[t]
\caption{\textbf{Matched-automation comparison as an oracle-style diagnostic} (10 random 50/50 splits). Each baseline's threshold is grid-searched on the training split to minimize $|\mathrm{Act}_{\mathrm{train}} - A^\star|$ and evaluated on the held-out test split. $\Delta$ is Acc$|$Act relative to Ours at the matched operating point (pp).}
\label{tab:matched_main}
\centering\renewcommand{\arraystretch}{1.02}
\setlength{\tabcolsep}{3pt}
\resizebox{\textwidth}{!}{%
\begin{tabular}{@{} l c rrrr rrrr rrrr @{}}
\toprule
 & & \multicolumn{4}{c}{\textbf{Ours}} & \multicolumn{4}{c}{\textbf{Confidence Threshold} (matched)} & \multicolumn{4}{c}{\textbf{Calibrated Learned} (matched)} \\
\cmidrule(lr){3-6}\cmidrule(lr){7-10}\cmidrule(lr){11-14}
\textbf{Dataset} & $A^\star$ & Acc & WA & WA/$\beta$ & $\Delta$ & Acc & WA & WA/$\beta$ & $\Delta$ & Acc & WA & WA/$\beta$ & $\Delta$ \\
\midrule
LogiQA   & 27.9\% & 88.9 & 3.18 & .102 & --- & 88.5 & 3.20 & .103 & $-0.4$ & 90.5 & 2.60 & .084 & $+1.6$ \\
MMLU-Pro & 71.1\% & 93.9 & 4.40 & .116 & --- & 93.8 & 4.45 & .117 & $-0.1$ & 94.2 & 4.14 & .109 & $+0.3$ \\
ARC      & 75.5\% & 97.1 & 2.29 & .108 & --- & 96.9 & 2.32 & .112 & $-0.2$ & 96.8 & 2.82 & .135 & $-0.3$ \\
BBH      & 84.4\% & 96.4 & 3.02 & .093 & --- & 96.5 & 2.92 & .090 & $+0.1$ & 96.6 & 2.91 & .090 & $+0.2$ \\
\bottomrule
\end{tabular}}
\renewcommand{\arraystretch}{1.0}
\end{table}

\subsection{Pareto frontier and audit summary}

\begin{figure}[t]
\centering
\includegraphics[width=\textwidth]{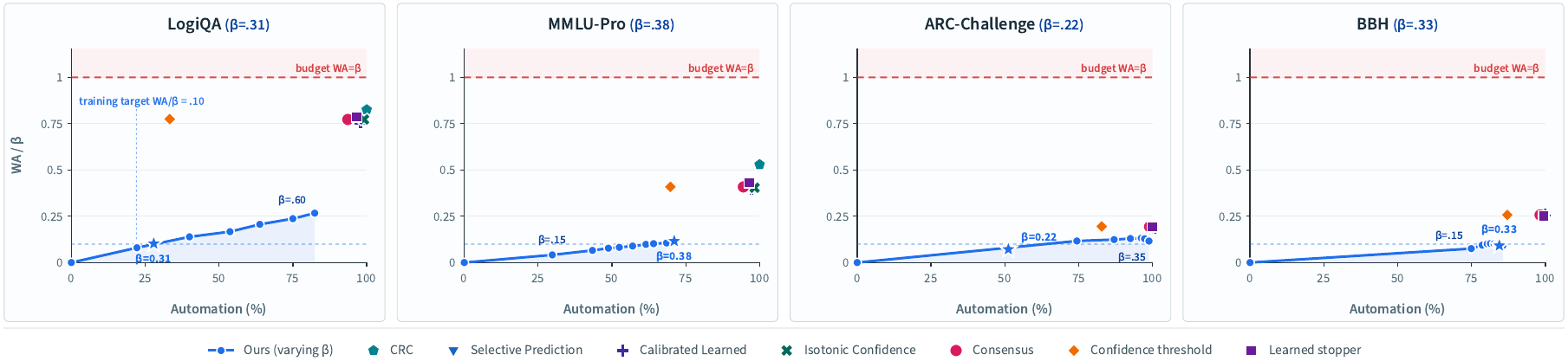}
\caption{\textbf{Risk--automation frontier under normalized budget usage.} $y$-axis: $\mathrm{WA}/\beta$ (fraction of the pre-declared wrong-action budget consumed; lower is safer). Blue: our method as $\beta$ varies. Markers: training-selected baseline operating points from Table~\ref{tab:main} (risk-control, confidence, consensus, and learned-stopper baselines). Dashed red at $1.0$ = budget exhaustion; dotted blue at $0.10$ = training-selection target. Ours stays well below the budget across activated datasets, while many baseline operating points consume substantially more of the pre-declared budget at higher automation.}
\label{fig:pareto}
\end{figure}

Figure~\ref{fig:pareto} shows the continuous risk--automation frontier. The assumption audit (Table~\ref{tab:audit}, Appendix~\ref{app:diagnostics}) targets the two local-approach assumptions---the bias envelope (Assumption~\ref{assum:bias}, via smoothness coverage and $2\times$ modulus inflation) and the representation gap (Assumption~\ref{assum:epsact}, via the predictor-gap proxy)---and assigns a certification-gap value $(1-\alpha)-\widetilde{L}_{\max}$ to each dataset: BBH has a negative gap (strong activation), MMLU-Pro, ARC, and LogiQA have modest positive gaps (partial activation under the training-selected budget), and GPQA/MuSR have large gaps ($>\!0.47$) with small $n$ or high base error, so the certificate defers rather than activating. Inflating the empirical modulus by $2\times$ reduces Act by only 1--3pp on active datasets with Acc$|$Act unchanged, confirming that the modulus estimate already sits well above the true local variation. Full diagnostics, $\epsact$ sensitivity, and budget-split sensitivity over $(\delta, \epsact)$ are in Appendices~\ref{app:diagnostics}--\ref{app:budget_split_sens}.

\section{Discussion and Limitations}
\label{sec:discussion}

The paper does not provide distribution-free conditional correctness; the contribution is to expose assumptions, budget their cost (Proposition~\ref{prop:assumption_cost}), and stress-test them with diagnostic proxies (Table~\ref{tab:audit}). The representation gap (Assumption~\ref{assum:epsact}) is the unavoidable price of a deployable low-dimensional stopping state: any practical system commits to a finite summary of $\Fcal_t$, and $\epsact$ quantifies the one-sided cost of that commitment in the action region; our diagnostics do not prove this gap is small but are designed to falsify obviously insufficient representations and to guide conservative inflation of $\epsact$ (Appendix~\ref{app:rep_gap_stress}). Accordingly $\epsact$ enters the wrong-action bound additively, and the certificate degrades gracefully if Assumption~\ref{assum:epsact} fails (Proposition~\ref{prop:assumption_cost}). The 2D default state $(p_t^{(1)}, \Delta_t)$ empirically balances statistical efficiency and representation loss (Table~\ref{tab:phi_tradeoff}). Low activation on GPQA and MuSR is consistent with the statistical nature of local certification: small $n$ and high base error enlarge both the $k$-nearest-neighbor approximation radius and the concentration slack, so the lower bound rarely crosses threshold under moderate budgets; the per-component decomposition is in Appendix~\ref{app:failure_decomp}. We view this deferral as intended behavior rather than a failure.

\section{Related Work}
\label{sec:related}

\paragraph{Multi-agent debate and LLM confidence.}
Multi-agent debate \citep{du2023improving, liang2024encouraging} improves reasoning but lacks explicit wrong-action budgets; related directions include stability detection \citep{hu2025multi} and self-consistency \citep{wang2023selfconsistency}. LLM confidence estimation \citep{kadavath2022language, xiong2024can, geng2024survey} and calibration \citep{guo2017calibration} focus on aggregate metrics, not local certified bounds.

\paragraph{Conformal, selective prediction, and anytime-valid inference.}
Split conformal \citep{vovk2005algorithmic, angelopoulos2023conformal} provides marginal coverage; extensions include risk control \citep{bates2021distribution, angelopoulos2024conformal} and LLM applications \citep{kumar2023conformal, quach2024conformal}. Localized methods \citep{guan2023localized, hore2023conformal} share our localization but target coverage, not correctness. Key distinctions: (a)~selective prediction \citep{elyaniv2010foundations, geifman2017selective} uses marginal score thresholds without local state-conditional act-or-defer certificates---under the same difficulty-normalized deployment budget, these baselines consume 18--82\% of the pre-declared wrong-action allowance (Table~\ref{tab:main}), while our state-conditional bound extracts high-precision automation subsets at $\leq$12\% on activated datasets; (b)~CRC~\citep{angelopoulos2024conformal} provides aggregate risk control but not adaptive stopping or state-conditional bounds; (c)~localized conformal targets coverage, not correctness. We are not aware of prior work that combines all four elements---adaptive multi-round stopping, local state-conditional certification, wrong-action budgeting with a task-relative-risk deployment policy, and multi-agent deliberation---in a single framework (Appendix~\ref{app:method_comparison}).

\section{Conclusion}
\label{sec:conclusion}

We reframe multi-agent deliberation as budgeted act-or-defer for human-in-the-loop deployment: local $k$-NN lower bounds on state-conditional correctness trigger autonomous action when reliable and defer otherwise, with $\beta = \delta + \alpha + \epsact$. The guarantee is conditional on two auditable structural assumptions (bias envelope and representation gap), each paired with a diagnostic proxy rather than a formal verifier; assumption failures enter the wrong-action bound additively (Proposition~\ref{prop:assumption_cost}), making the cost of each violation explicit rather than hidden behind a global correctness claim. Under training-selected difficulty-normalized budgets, the framework uses 9--12\% of the allowed budget on activated datasets (BBH 84\%/96\%, MMLU-Pro 71\%/94\%, ARC 75\%/97\%, LogiQA 28\%/89\%), while baselines at risk-calibrated defaults consume 18--82\%; on stress tests (GPQA, MuSR) the method defers rather than forcing unreliable automation, the intended behavior under insufficient calibration evidence. The contribution is not better subset quality at a matched automation level---tuned scalar thresholds can recover the same operating point after per-task search---but the prospective conversion of a declared wrong-action budget into an auditable act-or-defer policy before deployment, under explicitly exposed assumptions; we view this as a template for other inference-time compute pipelines where repeated autonomous action is cheap but occasional wrong action is costly. The central open challenge is the geometry--sufficiency trade-off governing $\epsact$, motivating future work on learned representations and extensions to open-ended generation.

\bibliography{references}
\bibliographystyle{plainnat}

\appendix

\section{Proofs}
\label{app:proofs}

\subsection{Proof of Theorem~\ref{thm:pointwise}}

\begin{proof}
Fix a round $t$ and a candidate $k \in \Kcal$. Define the sigma-field $\Sigma_{\mathrm{loc}} = \sigma\!\big(\{U_{i,t}\}_{i=1}^{n},\, U_t\big)$, generated by all calibration states at round $t$ and the test state. Because calibration and test instances are i.i.d.\ (Assumption~\ref{assum:exch}), $\Sigma_{\mathrm{loc}}$ is independent of the correctness labels $\{Z_{i,t}\}_{i=1}^n$ \emph{conditional on the states}---that is, each $Z_{i,t}$ is conditionally independent given $U_{i,t}$ with $\mathbb{E}[Z_{i,t} \mid U_{i,t}] = q_t(U_{i,t})$.

\textbf{Step 1: neighborhood is $\Sigma_{\mathrm{loc}}$-measurable.} Conditional on $\Sigma_{\mathrm{loc}}$, the neighborhood index set $\Ncal_{t,k}(U_t) \subset \{1,\ldots,n\}$ and the $k$-NN radius $h_{t,k}(U_t)$ are deterministic, since they depend only on the distances $\{d(U_{i,t}, U_t)\}$.

\textbf{Step 2: bias envelope controls the conditional mean.} By the bias envelope (Assumption~\ref{assum:bias}):
\begin{equation}
    \frac{1}{k}\sum_{i \in \Ncal_{t,k}(U_t)} q_t(U_{i,t}) \;\geq\; q_t(U_t) - \btk(U_t).
\end{equation}
Therefore, the conditional mean of the $k$-NN estimator satisfies $\mathbb{E}[\widehat{q}_{t,k}(U_t) \mid \Sigma_{\mathrm{loc}}] = \frac{1}{k}\sum_{i \in \Ncal_{t,k}(U_t)} q_t(U_{i,t}) \geq q_t(U_t) - \btk(U_t)$.

\textbf{Step 3: upper-tail Hoeffding conditional on $\Sigma_{\mathrm{loc}}$.} Since $Z_{i,t} \in [0,1]$ are conditionally independent given $\Sigma_{\mathrm{loc}}$ (each $Z_{i,t}$ depends on its own instance, and cross-instance independence follows from Assumption~\ref{assum:exch}), the upper-tail Hoeffding inequality \citep{hoeffding1963probability} gives:
\begin{equation}
    \Pr\!\left(\widehat{q}_{t,k}(U_t) > \mu_{t,k} + \sqrt{\frac{\log(1/\delta_{t,k})}{2k}} \;\middle|\; \Sigma_{\mathrm{loc}}\right) \leq \delta_{t,k},
\end{equation}
where $\mu_{t,k} := \mathbb{E}[\widehat{q}_{t,k}(U_t) \mid \Sigma_{\mathrm{loc}}] = \frac{1}{k}\sum_{i \in \Ncal_{t,k}(U_t)} q_t(U_{i,t})$.

\textbf{Step 4: combine and integrate.} Write $c_k = \sqrt{\log(1/\delta_{t,k})/(2k)}$. On the event $\{\widehat{q}_{t,k} \leq \mu_{t,k} + c_k\}$:
\begin{equation}
    L_{t,k}(U_t) = \widehat{q}_{t,k}(U_t) - \btk(U_t) - c_k \leq \mu_{t,k} + c_k - \btk(U_t) - c_k = \mu_{t,k} - \btk(U_t) \leq q_t(U_t),
\end{equation}
where the last step uses $\mu_{t,k} \leq q_t(U_t) + \btk(U_t)$ from the bias envelope (Assumption~\ref{assum:bias}). Hence $\Pr(E_{t,k} \mid \Sigma_{\mathrm{loc}}) \leq \delta_{t,k}$ almost surely. Integrating over $\Sigma_{\mathrm{loc}}$:
\begin{equation}
    \Pr(E_{t,k}) = \mathbb{E}\big[\Pr(E_{t,k} \mid \Sigma_{\mathrm{loc}})\big] \leq \delta_{t,k}. \qedhere
\end{equation}
\end{proof}

\begin{remark}[Reuse of calibration data across $(t,k)$ pairs]
The same calibration instances are used for every round $t$ and every $k \in \Kcal$. This does \emph{not} invalidate the conditional Hoeffding argument, because we condition on $\Sigma_{\mathrm{loc}}$ separately for each $(t,k)$ and then combine via a union bound (Theorem~\ref{thm:simultaneous}). Within any single $(t,k)$, the neighborhood is deterministic given $\Sigma_{\mathrm{loc}}$, and the relevant $Z_{i,t}$ are conditionally independent.
\end{remark}

\subsection{Proof of Theorem~\ref{thm:simultaneous}}

\begin{proof}
Apply Theorem~\ref{thm:pointwise} at each $(t, k)$ pair with $\delta_{t,k} = \delta / (T|\Kcal|)$. Union bound:
\begin{equation}
    \Pr\!\left(\bigcup_{t \leq T}\bigcup_{k \in \Kcal} E_{t,k}\right) \leq \sum_{t=1}^T \sum_{k \in \Kcal} \Pr(E_{t,k}) \leq \sum_{t=1}^T \sum_{k \in \Kcal} \delta_{t,k} = \delta.
\end{equation}
On the complement $\Gcal = \bigcap_{t,k} E_{t,k}^c$, every $L_{t,k}(U_t) \leq q_t(U_t)$, so $L_t = \max_k L_{t,k}(U_t) \leq q_t(U_t)$ as well.
\end{proof}

\subsection{Proof of Theorem~\ref{thm:wrong_action}}

\begin{proof}
Let $\Gcal = \bigcap_{t,k} E_{t,k}^c = \{\forall\, t \leq T,\; \forall\, k \in \Kcal : L_{t,k}(U_t) \leq q_t(U_t)\}$. By Theorem~\ref{thm:simultaneous}, $\Pr(\Gcal^c) \leq \delta$.

\textbf{Step 1: decompose over the good event.}
\begin{align}
    \Pr(\tau \leq T,\; \Yhat_\tau \neq Y)
    &= \Pr(\tau \leq T,\; \Yhat_\tau \neq Y,\; \Gcal)
    + \Pr(\tau \leq T,\; \Yhat_\tau \neq Y,\; \Gcal^c) \notag \\
    &\leq \Pr(\tau \leq T,\; \Yhat_\tau \neq Y,\; \Gcal) + \delta.
\end{align}

\textbf{Step 2: handle the random stopping time $\tau$.} Let $\mathcal{H}_t = \sigma(\Dcal_{\mathrm{cal}}, \Fcal_t)$ be the augmented filtration containing both the calibration set and the test debate prefix. Since $L_t$ is $\mathcal{H}_t$-measurable (it depends on $\Dcal_{\mathrm{cal}}$ and $\Fcal_t$ through $U_t$), $\tau = \inf\{t : L_t \geq 1 - \alpha\}$ is an $(\mathcal{H}_t)$-stopping time. On $\{\tau \leq T\} \cap \Gcal$, we have $L_\tau(U_\tau) \geq 1 - \alpha$ (stopping rule) and $L_\tau(U_\tau) \leq q_\tau(U_\tau)$ (validity on $\Gcal$), so $q_\tau(U_\tau) \geq 1 - \alpha$, placing the random pair $(\tau, U_\tau)$ in the stopping region of Assumption~\ref{assum:epsact}.

\textbf{Step 3: apply the representation gap.} Assumption~\ref{assum:epsact} is stated in terms of $\Pr(Y = \Yhat_t \mid \Fcal_t = f_t)$. Since $\Dcal_{\mathrm{cal}}$ is independent of the test instance $(X, Y)$ by Assumption~\ref{assum:exch}, we have $\Pr(Y = \Yhat_t \mid \mathcal{H}_t) = \Pr(Y = \Yhat_t \mid \Fcal_t)$ a.s., so the representation gap applies equally under the augmented filtration. On $\{\tau \leq T\} \cap \Gcal$, where $q_\tau(U_\tau) \geq 1 - \alpha$:
\begin{equation}
    \Pr(Y = \Yhat_\tau \mid \Fcal_\tau) \;\geq\; q_\tau(U_\tau) - \epsact \;\geq\; (1 - \alpha) - \epsact.
\end{equation}
Therefore $\Pr(\Yhat_\tau \neq Y \mid \Fcal_\tau) \leq \alpha + \epsact$ on $\{\tau \leq T\} \cap \Gcal$.

\textbf{Step 4: integrate.} Taking expectations over $\Fcal_\tau$:
\begin{align}
    \Pr(\tau \leq T,\; \Yhat_\tau \neq Y,\; \Gcal)
    &= \mathbb{E}\big[\mathbf{1}_{\{\tau \leq T\} \cap \Gcal}\;\Pr(\Yhat_\tau \neq Y \mid \Fcal_\tau)\big] \notag \\
    &\leq (\alpha + \epsact)\,\Pr(\tau \leq T,\; \Gcal)
    \leq \alpha + \epsact.
\end{align}
Combining with Step~1 gives~\eqref{eq:wrong_action_bound}.
\end{proof}

\subsection{Proof of Proposition~\ref{prop:assumption_cost}}
\begin{proof}
Let $\mathcal{E}_{\mathrm{env}}$ be the event that Assumption~\ref{assum:bias} holds for all $(t,k)$ in the action region, and $\mathcal{E}_{\mathrm{rep}}$ the event that Assumption~\ref{assum:epsact} holds. Define $\mathcal{E} = \mathcal{E}_{\mathrm{env}} \cap \mathcal{E}_{\mathrm{rep}}$. Partition the wrong-action probability over $\mathcal{E}$ and its complement:
\begin{align}
    \Pr(\tau \leq T,\; \Yhat_\tau \neq Y)
    &= \Pr(\tau \leq T,\; \Yhat_\tau \neq Y,\; \mathcal{E})
    + \Pr(\tau \leq T,\; \Yhat_\tau \neq Y,\; \mathcal{E}^c). \label{eq:cost_decomp}
\end{align}
\textbf{First term.} On $\mathcal{E}$, both Assumptions~\ref{assum:bias} and~\ref{assum:epsact} hold. Since these are the only non-standard assumptions required by Theorem~\ref{thm:wrong_action} (Assumptions~\ref{assum:exch}--\ref{assum:measurable} hold unconditionally by the experimental design), the proof of Theorem~\ref{thm:wrong_action} applies \emph{restricted to $\mathcal{E}$}. Specifically, the Hoeffding argument yields $\Pr(\Gcal^c \cap \mathcal{E}) \leq \delta$ (the good event $\Gcal$ requires only Assumption~\ref{assum:bias}, which holds on $\mathcal{E}$), and Steps~2--4 of the proof yield $\Pr(\tau \leq T,\;\Yhat_\tau \neq Y,\;\Gcal,\;\mathcal{E}) \leq \alpha + \epsact$. Hence:
\begin{equation}
    \Pr(\tau \leq T,\; \Yhat_\tau \neq Y,\; \mathcal{E}) \leq \delta + \alpha + \epsact.
\end{equation}
\textbf{Second term.} By the union bound on the complement:
\begin{equation}
    \Pr(\tau \leq T,\;\Yhat_\tau \neq Y,\;\mathcal{E}^c) \leq \Pr(\mathcal{E}^c) = \Pr(\mathcal{E}_{\mathrm{env}}^c \cup \mathcal{E}_{\mathrm{rep}}^c) \leq \Pr(\mathcal{E}_{\mathrm{env}}^c) + \Pr(\mathcal{E}_{\mathrm{rep}}^c).
\end{equation}
Combining with~\eqref{eq:cost_decomp} gives the result. The bound degrades \emph{gracefully}: each assumption violation adds exactly its failure probability, and no catastrophic interaction occurs between the two failure modes.
\end{proof}

\subsection{Proof of Corollary~\ref{cor:acted_error}}
\begin{proof}
Let $p_{\mathrm{act}} = \Pr(\tau \leq T) > 0$. By definition of conditional probability:
\begin{equation}
    \Pr(\Yhat_\tau \neq Y \mid \tau \leq T) = \frac{\Pr(\tau \leq T,\; \Yhat_\tau \neq Y)}{\Pr(\tau \leq T)}.
\end{equation}
We decompose the numerator over the good event $\Gcal = \bigcap_{t,k} E_{t,k}^c$:
\begin{align}
    \Pr(\tau \leq T,\;\Yhat_\tau \neq Y) &= \Pr(\tau \leq T,\;\Yhat_\tau \neq Y,\;\Gcal) + \Pr(\tau \leq T,\;\Yhat_\tau \neq Y,\;\Gcal^c).
\end{align}
\textbf{First term.} On $\Gcal \cap \{\tau \leq T\}$, the stopping rule ensures $L_\tau(U_\tau) \geq 1 - \alpha$, and validity on $\Gcal$ gives $q_\tau(U_\tau) \geq L_\tau(U_\tau) \geq 1-\alpha$. By Assumption~\ref{assum:epsact}, $\Pr(Y = \Yhat_\tau \mid \Fcal_\tau) \geq q_\tau(U_\tau) - \epsact \geq 1 - \alpha - \epsact$, so $\Pr(\Yhat_\tau \neq Y \mid \Fcal_\tau) \leq \alpha + \epsact$ a.s.\ on this event. Taking expectations:
\begin{align}
    \Pr(\tau \leq T,\;\Yhat_\tau \neq Y,\;\Gcal)
    &= \mathbb{E}\big[\mathbf{1}_{\{\tau \leq T\}\cap\Gcal}\,\Pr(\Yhat_\tau \neq Y \mid \Fcal_\tau)\big] \notag \\
    &\leq (\alpha + \epsact)\,\Pr(\tau \leq T,\;\Gcal) \leq (\alpha + \epsact)\, p_{\mathrm{act}}.
\end{align}
\textbf{Second term.} By Theorem~\ref{thm:simultaneous}, $\Pr(\Gcal^c) \leq \delta$, so $\Pr(\tau \leq T,\;\Yhat_\tau \neq Y,\;\Gcal^c) \leq \delta$.

Combining and dividing by $p_{\mathrm{act}}$:
\begin{equation}
    \Pr(\Yhat_\tau \neq Y \mid \tau \leq T) \leq \frac{(\alpha + \epsact)\, p_{\mathrm{act}} + \delta}{p_{\mathrm{act}}} = \alpha + \epsact + \frac{\delta}{p_{\mathrm{act}}}. \qedhere
\end{equation}
\end{proof}

\subsection{Proof of Theorem~\ref{thm:oracle_slack}}

\begin{theorem}[Oracle slack inequality]
\label{thm:oracle_slack}
Under Assumptions~\ref{assum:exch}--\ref{assum:bias}, with $c_k = \sqrt{\log(T|\Kcal|/\delta)/(2k)}$ and oracle $k^{\mathrm{or}}_t = \arg\min_{k \in \Kcal}[\btk(U_t) + c_k]$, the two-sided event $\Gcal_2 = \{\forall\, t,k : |\widehat{q}_{t,k} - \mu_{t,k}| \leq c_k\}$ satisfies $\Pr(\Gcal_2) \geq 1 - 2\delta$, and on $\Gcal_2$:
\begin{equation}
\label{eq:oracle_slack}
    q_t(U_t) - L_t(U_t) \;\leq\; 2\big[b_{t,k^{\mathrm{or}}_t}(U_t) + c_{k^{\mathrm{or}}_t}\big] \quad \forall\, t \leq T.
\end{equation}
\end{theorem}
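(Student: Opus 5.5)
The proof has two parts: a probability bound for the two-sided event $\Gcal_2$, followed by a deterministic chain of inequalities on $\Gcal_2$.

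\textbf{Probability of $\Gcal_2$.} We reuse the conditioning argument from the proof of Theorem~\ref{thm:pointwise}. For fixed $(t,k)$, condition on $\Sigma_{\mathrm{loc}}$. Given $\Sigma_{\mathrm{loc}}$, the neighborhood $\Ncal_{t,k}(U_t)$ is deterministic and the labels $Z_{i,t}\in[0,1]$ are conditionally independent with mean $\mu_{t,k}$. Applying Hoeffding to each tail at level $\delta/(T|\Kcal|)$ gives
$\Pr(|\widehat{q}_{t,k}-\mu_{t,k}|>c_k \mid \Sigma_{\mathrm{loc}}) \le 2\delta/(T|\Kcal|)$.
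This uses the choice $c_k=\sqrt{\log(T|\Kcal|/\delta)/(2k)}$, which makes each one-sided tail at most $\delta/(T|\Kcal|)$. Integrating out $\Sigma_{\mathrm{loc}}$ and taking a union bound over the $T|\Kcal|$ pairs yields $\Pr(\Gcal_2^c)\le 2\delta$. Note that only the concentration step is needed here; the bias envelope plays no role in this part.

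\textbf{Deterministic bound on $\Gcal_2$.} Fix $t$ and write $k^\star=k^{\mathrm{or}}_t$. The oracle index is a function of $U_t$ and the calibration states, so it is not a problem that it is data-dependent: $\Gcal_2$ holds simultaneously for all $k$, so it holds in particular at $k^\star$. Because $L_t$ is a maximum over $\Kcal$, we have $L_t\ge L_{t,k^\star}(U_t)$, and therefore
\[
q_t(U_t)-L_t \le q_t(U_t)-\widehat{q}_{t,k^\star}+b_{t,k^\star}(U_t)+c_{k^\star}.
\]
The lower half of the two-sided event gives $\widehat{q}_{t,k^\star}\ge \mu_{t,k^\star}-c_{k^\star}$. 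Assumption~\ref{assum:bias}, applied in the direction $\mu_{t,k^\star}\ge q_t(U_t)-b_{t,k^\star}(U_t)$, then gives
\[
q_t(U_t)-\widehat{q}_{t,k^\star}\le b_{t,k^\star}(U_t)+c_{k^\star}.
\]
Substituting this into the previous display yields $q_t-L_t\le 2[b_{t,k^\star}+c_{k^\star}]$ for every $t\le T$, which is \eqref{eq:oracle_slack}.

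\textbf{Where care is needed.} The one point requiring attention is that this argument uses the \emph{lower} deviation $\widehat{q}\ge\mu-c$. Theorem~\ref{thm:simultaneous} controls only the upper deviation, which is why the statement is made on the two-sided event $\Gcal_2$ and pays $2\delta$ rather than $\delta$. Everything else is bookkeeping carried over from the proof of Theorem~\ref{thm:pointwise}.
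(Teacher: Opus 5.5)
Your proposal is correct and follows essentially the same route as the paper: a two-sided conditional Hoeffding bound given $\Sigma_{\mathrm{loc}}$ and a union bound over the $T|\Kcal|$ pairs give $\Pr(\Gcal_2^c)\le 2\delta$, and then the lower tail combined with the bias envelope yields $q_t(U_t)-L_t(U_t)\le 2[b_{t,k^{\mathrm{or}}_t}(U_t)+c_{k^{\mathrm{or}}_t}]$ via $L_t\ge L_{t,k^{\mathrm{or}}_t}$. The only cosmetic difference is that the paper first proves $L_{t,k}\ge q_t-2[b_{t,k}+c_k]$ for every $k$ and then specializes, whereas you substitute $k^{\mathrm{or}}_t$ directly; these are equivalent because $\Gcal_2$ holds simultaneously over all $k$.
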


\begin{proof}
Write $c_k = \sqrt{\log(T|\Kcal|/\delta)/(2k)}$ and $\mu_{t,k} = \frac{1}{k}\sum_{i \in \Ncal_{t,k}(U_t)} q_t(U_{i,t})$ for the conditional mean of $\widehat{q}_{t,k}$.

\textbf{Probability bound.} Conditional on $\Sigma_{\mathrm{loc}}$, two-sided Hoeffding gives $\Pr(|\widehat{q}_{t,k} - \mu_{t,k}| > c_k \mid \Sigma_{\mathrm{loc}}) \leq 2\exp(-2k c_k^2) = 2\delta/(T|\Kcal|)$. Union-bounding over all $T|\Kcal|$ pairs and integrating: $\Pr(\Gcal_2^c) \leq 2\delta$. Since the upper tail of $\Gcal_2$ implies $\widehat{q}_{t,k} \leq \mu_{t,k} + c_k$, which (combined with the bias envelope) gives $L_{t,k} \leq q_t$, we have $\Gcal_2 \subseteq \Gcal$.

\textbf{Slack bound on $\Gcal_2$.} On $\Gcal_2$, the lower tail gives:
\begin{equation}
\label{eq:oracle_hoeffding}
    \widehat{q}_{t,k}(U_t) \;\geq\; \mu_{t,k} - c_k \quad \forall\, t \leq T,\; k \in \Kcal.
\end{equation}
By Assumption~\ref{assum:bias}, $\mu_{t,k} \geq q_t(U_t) - b_{t,k}(U_t)$. Combining:
\begin{equation}
    \widehat{q}_{t,k}(U_t) \;\geq\; q_t(U_t) - b_{t,k}(U_t) - c_k.
\end{equation}
Substituting into the definition $L_{t,k}(U_t) = \widehat{q}_{t,k}(U_t) - b_{t,k}(U_t) - c_k$:
\begin{equation}
    L_{t,k}(U_t) \;\geq\; q_t(U_t) - 2\big[b_{t,k}(U_t) + c_k\big].
\end{equation}
This holds for every $k \in \Kcal$. Since $L_t(U_t) = \max_{k \in \Kcal} L_{t,k}(U_t) \geq L_{t,k^{\mathrm{or}}_t}(U_t)$, we obtain
\begin{equation}
    q_t(U_t) - L_t(U_t) \;\leq\; q_t(U_t) - L_{t,k^{\mathrm{or}}_t}(U_t) \;\leq\; 2\big[b_{t,k^{\mathrm{or}}_t}(U_t) + c_{k^{\mathrm{or}}_t}\big],
\end{equation}
where the last step applies the preceding inequality at $k = k^{\mathrm{or}}_t$. Since $k^{\mathrm{or}}_t$ minimizes $\btk(U_t) + c_k$ over $\Kcal$, this is the smallest possible right-hand side, giving~\eqref{eq:oracle_slack}.
\end{proof}

\subsection{Proofs of instantiation corollaries}

The generic bias envelope $\btk$ specializes to classical regularity models.

\begin{corollary}[Lipschitz envelope]
\label{cor:lipschitz_det}
If $q_t$ is $L_t^\star$-Lipschitz, then $\btk(u) = L_t^\star \cdot h_{t,k}(u)$ satisfies Assumption~\ref{assum:bias} deterministically.
\end{corollary}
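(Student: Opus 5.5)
The plan is to verify the inequality in Assumption~\ref{assum:bias} directly. We fix a round $t$, a candidate $k \in \Kcal$, and a point $u \in \Ucal$, and work with the neighborhood $\Ncal_{t,k}(u)$ together with its radius $h_{t,k}(u)$. Both are determined by the calibration states alone.

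First, write the local bias as an average of differences and apply the triangle inequality:
\begin{equation*}
\Big|\tfrac{1}{k}\textstyle\sum_{i \in \Ncal_{t,k}(u)} q_t(U_{i,t}) - q_t(u)\Big| \;\leq\; \tfrac{1}{k}\textstyle\sum_{i \in \Ncal_{t,k}(u)} \big|q_t(U_{i,t}) - q_t(u)\big|.
\end{equation*}
This step uses $|\Ncal_{t,k}(u)| = k$.

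Second, bound each summand using Lipschitz continuity: $|q_t(U_{i,t}) - q_t(u)| \leq L_t^\star\, d(U_{i,t}, u)$. By the definition of the $k$-NN radius, $d(U_{i,t}, u) \leq h_{t,k}(u)$ for every $i \in \Ncal_{t,k}(u)$. The average of $k$ terms, each at most $L_t^\star h_{t,k}(u)$, is therefore at most $L_t^\star h_{t,k}(u) = \btk(u)$.

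The bound holds for every realization of the calibration states and every $u$. It is therefore deterministic, which is stronger than the almost-sure requirement in the assumption. Since the argument is valid pointwise, the randomness of the neighborhood needs no conditioning.

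Two side conditions remain. We need $\btk \geq 0$, which is immediate. We also need measurability of $u \mapsto h_{t,k}(u)$. For this, note that the $k$-th order statistic of the finitely many continuous functions $u \mapsto d(U_{i,t}, u)$ is continuous, and hence measurable.

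The main point of care is the tie-breaking convention in the $k$-NN set. Whatever convention is used, every selected neighbor lies within the radius $h_{t,k}(u)$, so ties do not affect the bound. Beyond this, there is no real obstacle.
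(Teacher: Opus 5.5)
Your proposal is correct and follows the paper's proof essentially step for step. You move the absolute value inside the average by the triangle inequality, bound each summand by Lipschitz continuity, and apply the radius bound $d(U_{i,t},u)\le h_{t,k}(u)$, which gives a realization-wise (hence deterministic) bound; your extra remarks on nonnegativity, measurability of $h_{t,k}$ as a $k$-th order statistic of continuous distance functions, and tie-breaking are sound points that the paper leaves implicit.
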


\begin{corollary}[Estimated modulus envelope]
\label{cor:modulus}
If $\omegahat_t$ is estimated on a separate split with $\Pr(\omegahat_t \geq \omega_t) \geq 1 - \delta_\omega$, the failure probability $\delta_\omega$ is absorbed into $\delta$, keeping $\beta = \delta + \alpha + \epsact$ intact.
\end{corollary}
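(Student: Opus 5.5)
The plan is to reduce the statement to Theorem~\ref{thm:wrong_action} run on a modified good event. Concretely, let $\omega_t$ be the true modulus that makes $\btk = \omega_t(h_{t,k})$ a valid envelope in the sense of Assumption~\ref{assum:bias}, and let $\mathcal{E}_\omega = \{\omegahat_t \geq \omega_t \text{ for all } t\}$. By hypothesis $\Pr(\mathcal{E}_\omega^c) \leq \delta_\omega$; if the hypothesis is stated per round, I will first union-bound over $t \leq T$ and rename the total $\delta_\omega$. Because $\omegahat_t$ is nondecreasing in $r$ (it is a maximum over a growing set of pairs), on $\mathcal{E}_\omega$ the plug-in envelope $\omegahat_t(h_{t,k}(u))$ dominates $\omega_t(h_{t,k}(u))$. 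Hence on $\mathcal{E}_\omega$ the plug-in envelope satisfies Assumption~\ref{assum:bias}.

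The key structural point is that $\omegahat_t$ is computed only from $\Dcal_{\mathrm{mod}}$, which is disjoint from and independent of $\Dcal_{\mathrm{loc}}$ and the test instance (Assumption~\ref{assum:exch}). I would therefore enlarge the conditioning $\sigma$-field in the proof of Theorem~\ref{thm:pointwise} to $\Sigma'_{\mathrm{loc}} = \sigma(\Dcal_{\mathrm{mod}}, \{U_{i,t}\}_{i \in \Dcal_{\mathrm{loc}}}, U_t)$. Given $\Sigma'_{\mathrm{loc}}$, the envelope $\btk(U_t)$ and the neighborhood are deterministic, and the labels $Z_{i,t}$, $i \in \Dcal_{\mathrm{loc}}$, remain conditionally independent with means $q_t(U_{i,t})$. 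So Step~3 (Hoeffding) of Theorem~\ref{thm:pointwise} goes through unchanged. Step~4 then holds on the $\Sigma'_{\mathrm{loc}}$-measurable event $\mathcal{E}_\omega$. Integrating and union-bounding as in Theorem~\ref{thm:simultaneous} gives $\Pr(\Gcal^c) \leq \delta_H + \delta_\omega$, where $\delta_H$ is the Hoeffding allocation inside the square-root term.

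Next I would rerun Steps~1--4 of the proof of Theorem~\ref{thm:wrong_action} verbatim with this $\Gcal$. This yields $\Pr(\tau \leq T, \Yhat_\tau \neq Y) \leq \delta_H + \delta_\omega + \alpha + \epsact$. The absorption is then a matter of bookkeeping: split the declared $\delta$ as $\delta = \delta_H + \delta_\omega$, and use $\log(T|\Kcal|/\delta_H)$ in the concentration term. The total stays at $\delta + \alpha + \epsact = \beta$, so the threshold $1-\alpha$ is unchanged. Alternatively, one can keep $\delta$ in the formula conservatively by noting that $\delta_H \leq \delta$ only loosens the bound.

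The main obstacle is the first step: justifying that $\omegahat_t \geq \omega_t$ really implies Assumption~\ref{assum:bias}. The bias involves $q_t$ at calibration points within radius $h_{t,k}$, whereas $\omegahat_t$ measures variation of the pilot $\qtilde_t$ on $\Dcal_{\mathrm{mod}}$ pairs. I would handle this by defining $\omega_t(r) := \sup_{d(u,v)\leq r} |q_t(u)-q_t(v)|$. The averaged bias is then at most $\omega_t(h_{t,k}(u))$ by the triangle inequality over the $k$ neighbors, and the corollary's hypothesis is interpreted as domination of exactly this $\omega_t$. With that definition the remaining steps are routine.
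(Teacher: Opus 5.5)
Your argument is correct and is essentially the paper's proof. It uses the same true modulus $\omega_t(r)=\sup_{d(u,v)\le r}|q_t(u)-q_t(v)|$, the same averaging/triangle-inequality bound over the $k$ neighbors on $\{\omegahat_t\ge\omega_t\}$, the same reliance on $\Dcal_{\mathrm{mod}}$ being independent of $\Dcal_{\mathrm{loc}}$, and the same split with Hoeffding level $\delta-\delta_\omega$. The only difference is that you fold $\mathcal{E}_\omega^c$ directly into $\Gcal^c$ through the enlarged $\sigma$-field $\Sigma'_{\mathrm{loc}}$, and handle the union over rounds explicitly, instead of citing Proposition~\ref{prop:assumption_cost}; this makes rigorous the conditioning on $\Dcal_{\mathrm{mod}}$ that the paper mentions only in passing.

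If your closing ``alternatively'' remark means keeping $\log(T|\Kcal|/\delta)$ in the concentration term, drop it. Since $\delta_H\le\delta$, that choice shrinks the variance correction, which is anti-conservative and only yields $\beta+\delta_\omega$, so the split of $\delta$ is genuinely needed.
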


\begin{corollary}[H\"{o}lder instantiation]
\label{cor:holder}
If $|q_t(u) - q_t(v)| \leq C_t \cdot d(u,v)^s$, setting $\btk(u) = C_t \cdot h_{t,k}(u)^s$ yields a valid envelope with rate $O(n^{-s/(2s+D)})$.
\end{corollary}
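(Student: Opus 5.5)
The statement to prove is the last one above, Corollary~\ref{cor:holder}. It has two parts: \emph{validity} of the H\"{o}lder envelope, and the \emph{rate} $O(n^{-s/(2s+D)})$.

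\textbf{Validity.} This part is deterministic and follows the Lipschitz case (Corollary~\ref{cor:lipschitz_det}). Fix $t$, $k$ and $u$. Every neighbor $i \in \Ncal_{t,k}(u)$ satisfies $d(U_{i,t},u) \leq h_{t,k}(u)$ by definition of the $k$-NN radius. The triangle inequality for the average, then the H\"{o}lder condition, then monotonicity of $r \mapsto r^s$ for $s>0$ give
\begin{equation*}
\Big|\tfrac{1}{k}\textstyle\sum_{i\in\Ncal_{t,k}(u)} q_t(U_{i,t}) - q_t(u)\Big| \leq \tfrac{1}{k}\textstyle\sum_{i} C_t\, d(U_{i,t},u)^s \leq C_t\, h_{t,k}(u)^s = \btk(u).
\end{equation*}
This holds surely, for every realization of the calibration states. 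So Assumption~\ref{assum:bias} is met, and Theorems~\ref{thm:pointwise}--\ref{thm:wrong_action} apply unchanged.

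\textbf{Rate.} I will use the oracle slack inequality, Theorem~\ref{thm:oracle_slack}. On $\Gcal_2$ it bounds the slack by
\begin{equation*}
q_t - L_t \leq 2\min_{k\in\Kcal}\big[C_t h_{t,k}^s + c_k\big].
\end{equation*}
The next step controls $h_{t,k}(u)$ under a standard local support condition: the law of $U_t$ puts mass at least $c_0 r^D$ on balls of radius $r \leq r_0$ around $u$.
\begin{itemize}[nosep,leftmargin=1.5em]
\item Set $r=(2k/(c_0 n))^{1/D}$. The number of calibration points in $B(u,r)$ is Binomial with mean at least $2k$.
\item A Chernoff bound then gives $h_{t,k}(u)\leq r$ with probability at least $1-e^{-k/4}$.
\item This failure probability is negligible once $k\gtrsim \log n$, or it can be folded into $\delta$ by a union bound.
\end{itemize}
Hence $h_{t,k}^s \lesssim (k/n)^{s/D}$. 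Also $c_k \asymp \sqrt{\log(T|\Kcal|/\delta)/k}$.

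Balancing the two terms, $(k/n)^{s/D}=k^{-1/2}$, gives $k^\ast \asymp n^{2s/(2s+D)}$. At this choice both terms are $n^{-s/(2s+D)}$ up to log factors.

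\textbf{Main obstacle.} The oracle bound only minimizes over the finite grid $\Kcal$, so $k^\ast$ must be reachable from it. The fix is to let $\Kcal$ be a geometric grid, for example powers of two up to $n$. Then some $k\in\Kcal$ lies within a factor $2$ of $k^\ast$, which changes only constants. The grid size is $|\Kcal|=O(\log n)$, which adds only a $\log\log n$ term inside $c_k$.

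I will state the rate "up to logarithmic factors" and assume $s\leq 1$, so the H\"{o}lder class is nontrivial. Setting $s=1$ recovers the $O(n^{-1/(2+D)})$ rate quoted for Corollary~\ref{cor:rate}.
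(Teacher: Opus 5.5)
Your proposal is correct and follows the paper's proof step for step. Validity comes from the deterministic averaging argument of the Lipschitz case. The rate comes from combining the oracle slack inequality (Theorem~\ref{thm:oracle_slack}) with the Chernoff-based $k$-NN radius bound under local support, then balancing the two terms at $k^\star \asymp n^{2s/(2s+D)}$. Your explicit treatment of the finite grid $\Kcal$ (a geometric grid reaches $k^\star$ up to a factor of $2$, at the cost of a $\log\log n$ term in $c_k$) fills in a detail the paper leaves implicit, since it optimizes over $k$ as if $k$ were continuous.
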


\begin{proof}[Proof of Corollary~\ref{cor:lipschitz_det} (Lipschitz envelope)]
Let $q_t$ be $L_t^\star$-Lipschitz: $|q_t(u) - q_t(v)| \leq L_t^\star \cdot d(u,v)$ for all $u,v \in \Ucal$. Fix $k \in \Kcal$ and a query point $u$. For every $i \in \Ncal_{t,k}(u)$, the neighbor $U_{i,t}$ satisfies $d(U_{i,t}, u) \leq h_{t,k}(u)$ by definition of the $k$-NN radius. Applying the Lipschitz condition and taking the average:
\begin{align}
    \bigg|\frac{1}{k}\sum_{i \in \Ncal_{t,k}(u)} q_t(U_{i,t}) - q_t(u)\bigg|
    &\leq \frac{1}{k}\sum_{i \in \Ncal_{t,k}(u)} |q_t(U_{i,t}) - q_t(u)| \\
    &\leq \frac{1}{k}\sum_{i \in \Ncal_{t,k}(u)} L_t^\star \cdot d(U_{i,t}, u) \;\leq\; L_t^\star \cdot h_{t,k}(u).
\end{align}
This holds for every realization of the calibration and test data---no probabilistic argument is needed. Hence $\btk(u) = L_t^\star \cdot h_{t,k}(u)$ satisfies Assumption~\ref{assum:bias} deterministically.
\end{proof}

\begin{proof}[Proof of Corollary~\ref{cor:modulus} (estimated modulus envelope)]
The empirical modulus $\omegahat_t$ is estimated on the separate split $\Dcal_{\mathrm{mod}}$ via~\eqref{eq:modulus}. Let $\omega_t(r) = \sup_{d(u,v)\leq r} |q_t(u) - q_t(v)|$ be the true modulus of continuity of $q_t$. By assumption, $\Pr(\forall\, r : \omegahat_t(r) \geq \omega_t(r)) \geq 1 - \delta_\omega$.

\textbf{On the event $\{\omegahat_t \geq \omega_t\}$:} For any neighbor $U_{i,t} \in \Ncal_{t,k}(u)$ with $d(U_{i,t}, u) \leq h_{t,k}(u)$, the definition of the modulus gives $|q_t(U_{i,t}) - q_t(u)| \leq \omega_t(h_{t,k}(u)) \leq \omegahat_t(h_{t,k}(u))$. By the same averaging argument as Corollary~\ref{cor:lipschitz_det}:
\begin{equation}
    \bigg|\frac{1}{k}\sum_{i \in \Ncal_{t,k}(u)} q_t(U_{i,t}) - q_t(u)\bigg| \leq \omegahat_t(h_{t,k}(u)) = \btk(u),
\end{equation}
so Assumption~\ref{assum:bias} holds on this event.

\textbf{On the complementary event $\{\exists\, r : \omegahat_t(r) < \omega_t(r)\}$:} Assumption~\ref{assum:bias} may fail. By Proposition~\ref{prop:assumption_cost}, this adds at most $\Pr(\omegahat_t < \omega_t) \leq \delta_\omega$ to the wrong-action bound.

\textbf{Budget accounting.} We allocate the overall $\delta$ as follows: set the Hoeffding level to $\delta - \delta_\omega$ (for the union bound over $(t,k)$ pairs in Theorem~\ref{thm:simultaneous}) and reserve $\delta_\omega$ for modulus estimation failure. The total calibration failure probability remains:
\begin{equation}
    (\delta - \delta_\omega) + \delta_\omega = \delta,
\end{equation}
so the three-way budget $\beta = \delta + \alpha + \epsact$ is preserved. Importantly, $\Dcal_{\mathrm{mod}}$ and $\Dcal_{\mathrm{loc}}$ are independent (separate split), so the modulus $\omegahat_t$ is a fixed function conditional on $\Dcal_{\mathrm{mod}}$. This does not interfere with the Hoeffding conditional independence argument in Theorem~\ref{thm:pointwise}, which operates on $\Dcal_{\mathrm{loc}}$.
\end{proof}

\begin{proof}[Proof of Corollary~\ref{cor:holder} (H\"{o}lder instantiation)]
Let $|q_t(u) - q_t(v)| \leq C_t \cdot d(u,v)^s$ for all $u,v \in \Ucal$ with exponent $s \in (0,1]$. By the same argument as Corollary~\ref{cor:lipschitz_det}, replacing the Lipschitz bound $L_t^\star \cdot d(U_{i,t}, u)$ with the H\"{o}lder bound $C_t \cdot d(U_{i,t}, u)^s$, we obtain:
\begin{equation}
    \bigg|\frac{1}{k}\sum_{i \in \Ncal_{t,k}(u)} q_t(U_{i,t}) - q_t(u)\bigg| \leq C_t \cdot h_{t,k}(u)^s,
\end{equation}
so $\btk(u) = C_t \cdot h_{t,k}(u)^s$ satisfies Assumption~\ref{assum:bias} deterministically.

\textbf{Convergence rate.} Under Assumption~\ref{assum:support}, Theorem~\ref{thm:radius} gives $h_{t,k}(u) \leq (2k/(nc_0))^{1/D}$ with high probability. The oracle slack (Theorem~\ref{thm:oracle_slack}) then satisfies:
\begin{equation}
    q_t(U_t) - L_t(U_t) \leq 2\bigg[C_t\Big(\frac{2k}{nc_0}\Big)^{s/D} + \sqrt{\frac{\log(T|\Kcal|/\delta)}{2k}}\bigg].
\end{equation}
Optimizing over $k$: equating the two terms gives $C_t(k/n)^{s/D} \asymp k^{-1/2}$, yielding $k^\star \asymp n^{2s/(2s+D)}$. Substituting:
\begin{equation}
    q_t(U_t) - L_t(U_t) = O\!\left(n^{-s/(2s+D)}\right),
\end{equation}
which recovers the minimax nonparametric rate for $s$-H\"{o}lder functions in $D$ dimensions \citep{tsybakov2009introduction}. The Lipschitz case ($s = 1$) gives $O(n^{-1/(2+D)})$ as stated in Corollary~\ref{cor:rate}.
\end{proof}

\subsection{Convergence rate results}

\begin{assumption}[Local support (for rate only)]
\label{assum:support}
There exist $c_0 > 0$ and $D > 0$ such that $\Pr(d(U_t, u) \leq r) \geq c_0 r^D$ for all $t$, $u \in \Ucal$, and sufficiently small $r > 0$.
\end{assumption}

\begin{theorem}[$k$-NN radius bound]
\label{thm:radius}
Under Assumption~\ref{assum:support}, for any $t$ and $u$, define $r_{n,k} = (2k/(n\, c_0))^{1/D}$. Then $\Pr(h_{t,k}(u) > r_{n,k}) \leq e^{-k/8}$.
\end{theorem}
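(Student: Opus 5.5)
The plan is to reduce the radius event to a binomial lower-tail event and then apply a multiplicative Chernoff bound. Fix $t$ and $u$, and set $r = r_{n,k}$. By definition of the $k$-NN radius, $h_{t,k}(u) > r$ holds exactly when fewer than $k$ calibration states lie in the closed ball $B(u,r)$. So I define the count
\[
N_r = \sum_{i=1}^n \mathbf{1}[d(U_{i,t},u)\le r],
\]
and the target event becomes $\{h_{t,k}(u) > r\} = \{N_r < k\}$, or $\{N_r \le k-1\}$ with the usual tie conventions.

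By Assumption~\ref{assum:exch}, the states $U_{i,t}$ are i.i.d. copies of $U_t$, with within-instance round dependence irrelevant because $t$ is fixed. Hence $N_r \sim \mathrm{Bin}(n,p)$ with $p = \Pr(d(U_t,u)\le r)$. Assumption~\ref{assum:support} then gives
\[
p \ge c_0 r^D = c_0 \cdot \frac{2k}{n c_0} = \frac{2k}{n},
\]
so $\mu := \mathbb{E}N_r = np \ge 2k$. The event $\{N_r < k\}$ is therefore contained in $\{N_r \le \mu/2\}$.

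The multiplicative Chernoff lower-tail bound $\Pr(N_r \le (1-\eta)\mu) \le \exp(-\eta^2\mu/2)$ with $\eta = 1/2$ gives $\exp(-\mu/8)$. Since $\mu \ge 2k$, this is at most $e^{-k/4} \le e^{-k/8}$, which proves the stated bound with room to spare. To be careful about monotonicity in $p$ (the bound must hold even when $p$ exceeds $2k/n$), I can argue in either of two ways. One option is to use stochastic dominance: $\mathrm{Bin}(n,p)$ dominates $\mathrm{Bin}(n,2k/n)$, so $\Pr(N_r<k)$ is maximized at $p = 2k/n$. The other is to apply Chernoff directly with $\eta = 1 - k/\mu \ge 1/2$.

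The only delicate point is the requirement in Assumption~\ref{assum:support} that $r$ be ``sufficiently small.'' The support bound is only available for $r$ below some $r_0$, so the argument needs $r_{n,k} \le r_0$, i.e.\ $k/n$ small enough. I would state this as an implicit regime condition, or note that for larger $r$ we may use $p \ge c_0 r_0^D$ together with monotonicity of $p$ in $r$. I do not expect any other obstacle: the result follows from the independence of the calibration states and a standard Chernoff bound, and no conditioning on the test point is needed because $u$ is fixed.
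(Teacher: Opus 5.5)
Your proof is correct and follows the paper's argument: you reduce $\{h_{t,k}(u)>r_{n,k}\}$ to $\{N<k\}$ for the binomial ball count, get $\mathbb{E}N\ge 2k$ from the support assumption, and apply the multiplicative Chernoff lower tail to obtain $e^{-k/4}\le e^{-k/8}$; your extra care about $p>2k/n$ is not needed, since $\mu\ge 2k$ already gives both $\{N<k\}\subseteq\{N\le\mu/2\}$ and $e^{-\mu/8}\le e^{-k/4}$. On the ``sufficiently small $r$'' caveat, which the paper's proof ignores, the right fix is your regime condition $r_{n,k}\le r_0$; your fallback via $p\ge c_0 r_0^D$ does not work, because when $r_{n,k}>r_0$ it only gives $\mathbb{E}N\ge n c_0 r_0^D<2k$, and the bound can then genuinely fail.
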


\begin{proof}
Fix $t$ and $u$. Let $N_t(u, r) = \sum_{i=1}^{n} \mathbf{1}[d(U_{i,t}, u) \leq r]$. By Assumption~\ref{assum:support} and independence (Assumption~\ref{assum:exch}), $\mathbb{E}[N_t(u, r_{n,k})] \geq n\, c_0\, r_{n,k}^D = 2k$. The multiplicative Chernoff bound gives
\begin{equation}
    \Pr(N_t(u, r_{n,k}) < k) \leq \exp(-k/4) \leq e^{-k/8}.
\end{equation}
On the complement, the $k$-th nearest neighbor is within $r_{n,k}$, so $h_{t,k}(u) \leq r_{n,k}$.
\end{proof}

\subsection{Proof of Corollary~\ref{cor:rate}}

\begin{corollary}[Adaptive certification rate]
\label{cor:rate}
With Lipschitz envelope and $\Kcal$ containing $k^\star \asymp n^{2/(2+D)}$, the slack $q_t - L_t = O(n^{-1/(2+D)})$ with high probability.
\end{corollary}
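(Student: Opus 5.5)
The plan is to combine three earlier results: the oracle slack inequality (Theorem~\ref{thm:oracle_slack}), the Lipschitz envelope (Corollary~\ref{cor:lipschitz_det}), and the $k$-NN radius bound (Theorem~\ref{thm:radius}). Corollary~\ref{cor:holder} already carried out the $s=1$ computation informally. Here I would redo it carefully with explicit high-probability bookkeeping.

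First, Corollary~\ref{cor:lipschitz_det} shows that $\btk(u)=L_t^\star h_{t,k}(u)$ is a deterministic valid envelope. So Assumption~\ref{assum:bias} holds and Theorem~\ref{thm:oracle_slack} applies. On $\Gcal_2$, which has probability at least $1-2\delta$, we get for every $t$
\[
q_t(U_t)-L_t(U_t)\le 2\big[L_t^\star h_{t,k}(U_t)+c_k\big]
\]
for every $k\in\Kcal$, not just the oracle one. The proof of Theorem~\ref{thm:oracle_slack} gives $L_{t,k}\ge q_t-2(b_{t,k}+c_k)$ for each $k$, and $L_t$ is the maximum over $k$. In particular the bound holds at $k=k^\star\in\Kcal$.

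Next, I would control the radius at $k^\star$ using Theorem~\ref{thm:radius}, applied conditionally on the test state $U_t$. The calibration states are independent of $U_t$ (Assumption~\ref{assum:exch}), so conditioning on $U_t=u$ and integrating gives
\[
\Pr\big(h_{t,k^\star}(U_t)>r_{n,k^\star}\big)\le e^{-k^\star/8}.
\]
Union bounding over $t\le T$ and combining with $\Gcal_2$, the following holds with probability at least $1-2\delta-Te^{-k^\star/8}$:
\[
q_t-L_t\le 2L_t^\star\Big(\tfrac{2k^\star}{nc_0}\Big)^{1/D}+2\sqrt{\tfrac{\log(T|\Kcal|/\delta)}{2k^\star}}.
\]
Because $k^\star\asymp n^{2/(2+D)}\to\infty$, the term $Te^{-k^\star/8}$ vanishes.

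Finally, I would substitute $k^\star\asymp n^{2/(2+D)}$. The bias term becomes
\[
(k^\star/n)^{1/D}\asymp n^{(2/(2+D)-1)/D}=n^{-1/(2+D)}.
\]
The variance term becomes $(k^\star)^{-1/2}\asymp n^{-1/(2+D)}$, up to the $\sqrt{\log(T|\Kcal|/\delta)}$ factor. I treat $T$, $|\Kcal|$, $\delta$, $L_t^\star$ and $c_0$ as fixed constants, so both terms are $O(n^{-1/(2+D)})$.

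The main obstacle is the radius step. Assumption~\ref{assum:support} is stated only for sufficiently small $r$, so I need $r_{n,k^\star}\to0$. This holds since $k^\star/n\to0$.

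The mass bound must also apply to the $\Dcal_{\mathrm{loc}}$ sample of size about $0.8n$, which only changes constants. The $k$-NN set is taken over $\Dcal_{\mathrm{loc}}$, and its size is a fixed fraction of $n$.

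The Chernoff constant in Theorem~\ref{thm:radius} I simply take as given. If uniformity in $u$ were needed I would worry, but conditioning on the single test point $U_t$ makes it unnecessary.
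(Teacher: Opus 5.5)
Your argument is correct. It is also more careful than the paper's own proof of this corollary.

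The paper works on the intersection of two events: the one-sided validity event $\Gcal_1$ of Theorem~\ref{thm:simultaneous}, and the radius event $\{h_{t,k}(U_t)\le r_{n,k}\ \forall t,k\}$. On that intersection it directly asserts $q_t-L_t\le\min_{k\in\Kcal}[\Lbar_t(2k/(nc_0))^{1/D}+c_k]$. But $\Gcal_1$ only controls the upper Hoeffding tail, so it yields $q_t-L_t\ge 0$ and nothing in the other direction.

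An upper bound on the slack needs the lower tail $\widehat{q}_{t,k}\ge\mu_{t,k}-c_k$. That is exactly what the two-sided event of Theorem~\ref{thm:oracle_slack} provides, and the resulting bound carries the factor $2$: one copy of $\btk+c_k$ is subtracted by construction, and a second is the worst-case estimation error. Your route uses the per-$k$ inequality $L_{t,k}\ge q_t-2(\btk+c_k)$ from that theorem, evaluated at $k=k^\star$. This is the step that actually closes the argument, and it matches how the paper itself handles the H\"older case in Corollary~\ref{cor:holder}.

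Two further differences work in your favor.
\begin{itemize}
\item You need the radius bound only at $k^\star$, so the extra failure probability is $Te^{-k^\star/8}\to 0$. The paper's union over all of $\Kcal$ costs $T|\Kcal|e^{-k_{\min}/8}$, which need not vanish if the grid starts at a small $k_{\min}$.
\item You make explicit that Theorem~\ref{thm:radius} is a fixed-$u$ statement, transferred to the random query $U_t$ by independence. You also note that Assumption~\ref{assum:support} is invoked only at $r_{n,k^\star}\to 0$; this relies on reading the ``sufficiently small'' threshold as uniform in $u$, which is the natural reading.
\end{itemize}

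The paper's min-over-$\Kcal$ form adds nothing for the rate, which is driven by $k^\star$ in both versions. Your choice to treat $|\Kcal|$ as bounded is also needed. If $\Kcal$ were a geometric grid growing with $n$, as the paper's last sentence suggests, $\log|\Kcal|$ would add a $\sqrt{\log\log n}$ factor to the variance term.
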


\begin{proof}
Let $\Gcal_1 = \{\forall\, t, k : L_{t,k}(U_t) \leq q_t(U_t)\}$ and $\Gcal_2 = \{\forall\, t, k : h_{t,k}(U_t) \leq r_{n,k}\}$. By Theorem~\ref{thm:simultaneous}, $\Pr(\Gcal_1^c) \leq \delta$. By Theorem~\ref{thm:radius} with a union bound over $T \cdot |\Kcal|$ pairs, $\Pr(\Gcal_2^c) \leq T|\Kcal|\,e^{-k_{\min}/8}$. On $\Gcal_1 \cap \Gcal_2$, using a Lipschitz bias envelope $\btk(u) = \Lbar_t \cdot h_{t,k}(u)$ (the rate statement also holds for any bias envelope $\btk(u) \leq \Lbar_t \cdot h_{t,k}(u)$, including the modulus instantiation when $\omega_t$ grows at most linearly):
\begin{equation}
    q_t(U_t) - L_t(U_t) \leq \min_{k \in \Kcal}\left[\Lbar_t\!\left(\frac{2k}{n\, c_0}\right)^{1/D} + \sqrt{\frac{\log(T|\Kcal|/\delta)}{2k}}\right].
\end{equation}
A geometric grid $\Kcal$ containing $k^\star \asymp n^{2/(2+D)}$ makes the minimum $O(n^{-1/(2+D)})$.
\end{proof}

\subsection{Group-conditional control}

\begin{theorem}[Group-conditional control]
\label{thm:group}
Under Assumptions~\ref{assum:exch}--\ref{assum:epsact}, per-group bounds $L_t^{(g)}$ at level $\delta_g = \delta/J$ satisfy
$\Pr(\tau \leq T, \Yhat_\tau \neq Y, G(\Fcal_\tau) = g) \leq \delta_g + \alpha + \epsact$.
\end{theorem}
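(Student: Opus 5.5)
The plan is to rerun the proof of Theorem~\ref{thm:wrong_action} group by group, with the per-group calibration level $\delta_g=\delta/J$ replacing $\delta$. I take $G$ to be a fixed, deterministic group map with $G(\Fcal_t)$ $\Fcal_t$-measurable. For each group $g$, the per-group bound $L_t^{(g)}$ uses only the calibration instances whose round-$t$ history lies in group $g$ (its own $k$-NN database $\Dcal_{\mathrm{loc}}^{(g)}$). It also uses the group-conditional target $q_t^{(g)}(u)=\Pr(Y=\Yhat_t\mid U_t=u,\,G(\Fcal_t)=g)$ and the Hoeffding term $\sqrt{\log(T|\Kcal|/\delta_g)/(2k)}$. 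The stopping time $\tau^{(g)}$ is defined from $L_t^{(g)}$. I read Assumptions~\ref{assum:bias} and~\ref{assum:epsact} as holding for the group-restricted versions, that is, with $q_t^{(g)}$ and $\Ncal^{(g)}_{t,k}$ in place of $q_t$ and $\Ncal_{t,k}$.

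\textbf{Step 1 (per-group simultaneous validity).} I repeat the proof of Theorem~\ref{thm:pointwise} conditional on $\Sigma^{(g)}_{\mathrm{loc}}$. This $\sigma$-field is generated by all calibration states and group labels at round $t$, together with the test state and the test group label. Conditional on it, the in-group neighbourhood is deterministic. The labels $Z_{i,t}$ of in-group calibration points are conditionally independent with mean $q_t^{(g)}(U_{i,t})$, because under i.i.d.\ sampling, conditioning on the group membership of other instances does not affect instance $i$. The group envelope then gives Hoeffding at level $\delta_g/(T|\Kcal|)$. A union bound over $(t,k)$, exactly as in Theorem~\ref{thm:simultaneous}, yields a good event $\Gcal^{(g)}$ with $\Pr((\Gcal^{(g)})^c)\le\delta_g$. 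On this event $L^{(g)}_t\le q^{(g)}_t(U_t)$ for all $t$.

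\textbf{Step 2 (wrong-action decomposition).} I split the probability as
$\Pr(\tau\le T,\Yhat_\tau\ne Y,G(\Fcal_\tau)=g)\le \Pr((\Gcal^{(g)})^c)+\Pr(\tau\le T,\Yhat_\tau\ne Y,G(\Fcal_\tau)=g,\Gcal^{(g)})$. On the second event, the stopping rule and validity give $q^{(g)}_\tau(U_\tau)\ge 1-\alpha$. The group version of Assumption~\ref{assum:epsact} then gives $\Pr(\Yhat_\tau\ne Y\mid\Fcal_\tau)\le\alpha+\epsact$. The stopping-time measurability and the independence of the calibration data from the test instance are handled through the augmented filtration $\mathcal{H}_t$, as in Steps~2--4 of the proof of Theorem~\ref{thm:wrong_action}. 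The event $\{G(\Fcal_\tau)=g\}$ is $\mathcal{H}_\tau$-measurable, so it can be pulled inside the conditional expectation. Integrating gives the bound $(\alpha+\epsact)\Pr(\cdot)\le\alpha+\epsact$, and adding $\delta_g$ completes the argument. As a corollary, summing over $g$ uses the full budget $\delta$ for the calibration term.

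\textbf{Main obstacle.} I expect the difficulty to be Step~1, the conditional-independence claim when groups are defined by the whole history $\Fcal_t$ rather than by $U_t$. The neighbourhood then depends on group labels, which are correlated with the labels $Z$. The fix I would try first is to enlarge $\Sigma^{(g)}_{\mathrm{loc}}$ to include the group labels, and to define the conditional mean via $q^{(g)}_t$, so that selection is entirely on conditioning variables. If the paper intends the ungrouped $q_t$ instead, I would need an extra assumption that $G$ is a function of $U_t$. Under that assumption $q^{(g)}_t=q_t$ on the group, and the argument goes through verbatim.
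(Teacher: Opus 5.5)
Your proposal is correct and follows the paper's proof: you rerun the conditional Hoeffding argument within each group at level $\delta_g/(T|\Kcal|)$, union-bound over $(t,k)$ to get $\Pr(\Gcal_g^c)\le\delta_g$, and then apply the stopping-time and representation-gap steps of Theorem~\ref{thm:wrong_action} with the group-conditional target $q_t^{(g)}$. The one difference is that the paper takes $G$ to depend on the input $X$ alone, so within-group exchangeability is immediate there and your ``main obstacle'' does not arise; your enlarged $\sigma$-field containing the group labels is a valid way to cover the more general history-dependent case, and it makes explicit the group-restricted versions of Assumptions~\ref{assum:bias} and~\ref{assum:epsact} that the paper's proof uses implicitly.
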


\begin{proof}
Let $G : \Xcal \to \{1,\ldots,J\}$ be a group assignment function determined by the input $X$ (e.g., a demographic attribute or problem category). Since $G$ depends only on $X$ and the calibration instances are i.i.d.\ (Assumption~\ref{assum:exch}), the within-group calibration subsets $\Dcal_{\mathrm{cal}}^{(g)} = \{(X_i, Y_i) : G(X_i) = g\}$ inherit exchangeability: conditional on the group sizes $\{n_g\}_{g=1}^J$, the instances within each group remain i.i.d.\ draws from the group-conditional distribution $P_{X,Y \mid G = g}$.

\textbf{Per-group construction.} For each group $g$, construct per-group $k$-NN neighborhoods $\Ncal_{t,k}^{(g)}(u)$ using only the $n_g$ calibration instances in group $g$, and define:
\begin{equation}
    L_{t,k}^{(g)}(u) = \widehat{q}_{t,k}^{(g)}(u) - b_{t,k}^{(g)}(u) - \sqrt{\frac{\log(T|\Kcal|/\delta_g)}{2k}},
\end{equation}
with Hoeffding level $\delta_g = \delta / J$.

\textbf{Validity within group.} Fix group $g$. The proof of Theorem~\ref{thm:pointwise} applies verbatim within group $g$: the sigma-field $\Sigma_{\mathrm{loc}}^{(g)}$ is generated by group-$g$ states; the correctness labels $\{Z_{i,t}\}_{i \in \text{group } g}$ are conditionally independent given $\Sigma_{\mathrm{loc}}^{(g)}$ (by within-group exchangeability); and Hoeffding yields $\Pr(E_{t,k}^{(g)}) \leq \delta_g / (T|\Kcal|)$. A union bound over $(t,k)$ pairs gives $\Pr(\Gcal_g^c) \leq \delta_g$.

\textbf{Per-group wrong-action.} On $\Gcal_g$, $L_t^{(g)}(U_t) \leq q_t^{(g)}(U_t)$ for all $t$, where $q_t^{(g)}$ is the group-conditional correctness. Applying the same stopping-time and representation-gap arguments as Theorem~\ref{thm:wrong_action} within group $g$:
\begin{equation}
    \Pr(\tau \leq T,\; \Yhat_\tau \neq Y,\; G(\Fcal_\tau) = g) \leq \delta_g + \alpha + \epsact.
\end{equation}

\textbf{Simultaneous control across all groups.} By the union bound:
\begin{equation}
    \sum_{g=1}^J \Pr(\tau \leq T,\; \Yhat_\tau \neq Y,\; G(\Fcal_\tau) = g) \leq \sum_{g=1}^J (\delta_g + \alpha + \epsact) = \delta + J(\alpha + \epsact).
\end{equation}
The per-group statement is the useful one: it guarantees risk control \emph{within each group separately}, at the cost of a tighter Hoeffding term $\sqrt{\log(T|\Kcal|J/\delta)/(2k)}$ (from the reduced per-group $\delta_g$). When $J$ is large or group sizes $n_g$ are small, the per-group $k$-NN radius increases and automation decreases---the price of simultaneous group-conditional control.
\end{proof}

\subsection{Auxiliary results}

\begin{lemma}[Stopping time measurability]
\label{lem:stopping_time}
The stopping rule $\tau = \inf\{t \leq T : L_t(U_t) \geq 1-\alpha\}$ is an $(\mathcal{H}_t)$-stopping time, where $\mathcal{H}_t = \sigma(\Dcal_{\mathrm{cal}}, \Fcal_t)$.
\end{lemma}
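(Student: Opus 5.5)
To show that $\tau$ is an $(\mathcal{H}_t)$-stopping time, I will prove that for each $s \in \{1,\ldots,T\}$ the event $\{\tau \leq s\}$ (equivalently $\{\tau = s\}$) lies in $\mathcal{H}_s = \sigma(\Dcal_{\mathrm{cal}}, \Fcal_s)$. The rule has finite horizon, so this is a finite check. The natural decomposition is
\[
\{\tau = s\} = \bigcap_{t<s}\{L_t(U_t) < 1-\alpha\} \cap \{L_s(U_s) \geq 1-\alpha\}.
\]
Once each $L_t(U_t)$ is shown to be $\mathcal{H}_t$-measurable, the claim follows, because $\mathcal{H}_t \subseteq \mathcal{H}_s$ for $t \leq s$: the filtration $\Fcal_t$ is increasing and $\Dcal_{\mathrm{cal}}$ appears in every $\mathcal{H}_t$. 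The event $\{\tau > T\}$ (defer) is then just the complement of $\{\tau \leq T\}$ and needs no separate treatment.

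The substantive step is the measurability of $L_t(U_t)$ with respect to $\mathcal{H}_t$.

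\textbf{The test state.} By Assumption~\ref{assum:measurable}, $U_t = \phi_t(\Fcal_t)$ is $\Fcal_t$-measurable, hence $\mathcal{H}_t$-measurable.

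\textbf{The calibration quantities.} The calibration states $U_{i,t}$ and labels $Z_{i,t}$ are functions of $\Dcal_{\mathrm{cal}}$, since the full debate is run offline. The split into $\Dcal_{\mathrm{mod}}$ and $\Dcal_{\mathrm{loc}}$ is a fixed, or independently randomized, partition that I will absorb into $\Dcal_{\mathrm{cal}}$.

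\textbf{The distances.} The distances $d(U_{i,t}, U_t)$ are measurable because $d$ is a metric, hence continuous and Borel, on $\Ucal \times \Ucal$.

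\textbf{The $k$-NN index set.} The set $\Ncal_{t,k}(U_t)$ is obtained by sorting finitely many distances with a fixed deterministic tie-breaking rule, say by index. Each event $\{\Ncal_{t,k}(U_t) = S\}$, for a fixed $k$-subset $S$, is therefore a finite Boolean combination of comparisons $\{d(U_{i,t},U_t) \leq d(U_{j,t},U_t)\}$ and so is measurable.

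\textbf{The estimator and radius.} Write
\[
\widehat{q}_{t,k}(U_t) = \sum_S \mathbf{1}\{\Ncal_{t,k}(U_t) = S\}\,\frac{1}{k}\sum_{i\in S} Z_{i,t},
\]
and express $h_{t,k}(U_t)$ the same way as a finite sum over $S$ of indicators times a maximum of distances. Both are then measurable.

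\textbf{The bias envelope.} Assumption~\ref{assum:bias} gives $\btk$ as a measurable function. For the default empirical modulus, $\omegahat_t$ is a nondecreasing step function of $r$ built from $\Dcal_{\mathrm{mod}}$. Its composition with $h_{t,k}$ is therefore measurable in the pair $(\Dcal_{\mathrm{mod}}, h_{t,k})$.

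\textbf{The final bound.} The Hoeffding term is a constant. Subtracting it and taking a maximum over the finite set $\Kcal$ preserves measurability, so $L_t(U_t)$ is $\mathcal{H}_t$-measurable.

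The main obstacle is the $k$-NN selection step: the neighborhood is a data-dependent, discontinuous function of the test state. The finite-partition argument over subsets $S$, with explicit tie-breaking, resolves this. A secondary subtlety is that $\btk$ may itself depend on calibration data, so it must be treated as jointly measurable in $(\Dcal_{\mathrm{cal}}, u)$ rather than as a fixed function. I will state this as the operative reading of Assumption~\ref{assum:bias} and verify it for the empirical modulus as above.
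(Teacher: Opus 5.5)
Your proposal is correct and takes the same route as the paper: show that $L_t(U_t)=\max_{k\in\Kcal}L_{t,k}(U_t)$ is $\mathcal{H}_t$-measurable because it is built from $\Dcal_{\mathrm{cal}}$ and the $\Fcal_t$-measurable state $U_t$, then use the nesting $\mathcal{H}_s\subseteq\mathcal{H}_t$ for $s\le t$ to place $\{\tau\le t\}$ (equivalently each $\{\tau=t\}$) in $\mathcal{H}_t$. Your finite-partition argument over neighbor sets $S$, and your reading of $\btk$ as jointly measurable in $(\Dcal_{\mathrm{cal}},u)$, make rigorous the step the paper compresses into ``depends only on $\Dcal_{\mathrm{cal}}$ and $U_t$.''
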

\begin{proof}
For each $t$, $L_t(U_t) = \max_{k \in \Kcal} L_{t,k}(U_t)$ is $\mathcal{H}_t$-measurable since each $L_{t,k}$ depends only on $\Dcal_{\mathrm{cal}}$ (the calibration data, which generates a sub-sigma-field of every $\mathcal{H}_s$ for $s \geq 1$) and $U_t = \phi_t(\Fcal_t)$ (which is $\Fcal_t$-measurable by Assumption~\ref{assum:measurable}, hence $\mathcal{H}_t$-measurable). Therefore $\{L_t \geq 1-\alpha\} \in \mathcal{H}_t$, and:
\begin{equation}
    \{\tau \leq t\} = \bigcup_{s=1}^{t} \{L_s(U_s) \geq 1-\alpha\} \in \mathcal{H}_t,
\end{equation}
where the inclusion holds because each $\{L_s \geq 1-\alpha\} \in \mathcal{H}_s \subseteq \mathcal{H}_t$ for $s \leq t$ (the filtration is non-decreasing). Hence $\tau$ is an $(\mathcal{H}_t)$-stopping time.
\end{proof}

\begin{remark}[Bernstein refinement]
\label{rem:bernstein}
The Hoeffding term $c_k = \sqrt{\log(T|\Kcal|/\delta)/(2k)}$ can be tightened using the empirical Bernstein inequality \citep{maurer2009empirical}. Let $\widehat{\sigma}_{t,k}^2 = \widehat{q}_{t,k}(1 - \widehat{q}_{t,k})$ be the plug-in variance estimate. The Bernstein bound replaces $c_k$ with:
\begin{equation}
    c_k^{\mathrm{Bern}} = \sqrt{\frac{2\widehat{\sigma}_{t,k}^2 \log(2T|\Kcal|/\delta)}{k}} + \frac{7\log(2T|\Kcal|/\delta)}{3(k-1)}.
\end{equation}
When the local correctness is near 0 or 1 (i.e., $q_t(U_t)$ is close to the boundary), $\widehat{\sigma}_{t,k}^2$ is small and $c_k^{\mathrm{Bern}} \ll c_k$. This does not change the formal theory---any valid concentration inequality can be substituted---but can improve practical automation in the high-confidence regime. We use Hoeffding as the default for simplicity and because the Bernstein gain is modest in our setting (most acted-on instances have $q_t \in [0.85, 0.99]$ where $\sigma^2$ is not negligibly small).
\end{remark}

\begin{remark}[Sharpness of the union bound]
\label{rem:union_sharpness}
The Bonferroni correction $\delta_{t,k} = \delta/(T|\Kcal|)$ is worst-case tight: an adversary can construct a distribution where the bad events $\{E_{t,k}\}$ are disjoint, making the union bound an equality. In practice, the $L_{t,k}$ values are positively correlated across $k$ (larger $k$ neighborhoods contain smaller ones), so the effective number of independent tests is less than $T|\Kcal|$. A Holm-type sequential correction could exploit this, but at the cost of requiring ordered $p$-values and sequential computation. For our setting with $T|\Kcal| = 4 \times 3 = 12$, the $\log(12/\delta)$ penalty is mild and the Bonferroni correction is practical.
\end{remark}

\begin{remark}[Relationship between Assumptions~\ref{assum:bias} and~\ref{assum:epsact}]
\label{rem:assumption_relationship}
The bias envelope (Assumption~\ref{assum:bias}) and the representation gap (Assumption~\ref{assum:epsact}) control different error sources and are not redundant. To see this, consider two extreme state embeddings. If $\phi_t = \Fcal_t$ (the full debate transcript), then $q_t(\phi_t(\Fcal_t)) = \Pr(Y = \Yhat_t \mid \Fcal_t)$ and $\epsact = 0$, but $\Ucal$ is infinite-dimensional and the bias envelope $\btk$ is uncontrollable (the $k$-NN radius does not shrink). Conversely, if $\phi_t \equiv c$ (a constant embedding), then $D = 0$ and $\btk = 0$ trivially, but $\epsact$ equals the full variation of $\Pr(Y = \Yhat_t \mid \Fcal_t)$ in the action region. Our 2D embedding $(p_t^{(1)}, \Delta_t)$ balances these extremes (Table~\ref{tab:phi_tradeoff}).
\end{remark}

\section{Implementation Details}
\label{app:details}

\begin{algorithm}[h]
\caption{Adaptive Local $k$-NN Certification for Multi-Agent Debate}
\label{alg:main}
\begin{algorithmic}[1]
\REQUIRE Budget $\beta$; calibration $\Dcal_{\mathrm{cal}}$; test input $x$; candidate family $\Kcal$; budget split $(\delta, \epsact)$; bias envelope $\btk(\cdot)$.
\STATE Set $\alpha \leftarrow \beta - \delta - \epsact$.
\FOR{$t = 1, \ldots, T$}
    \STATE Run debate round $t$; compute prediction $\Yhat_t$ and state $U_t = \phi_t(\Fcal_t)$.
    \FOR{each $k \in \Kcal$}
        \STATE Find $k$-NN neighborhood $\Ncal_{t,k}(U_t) \subset \Dcal_{\mathrm{cal}}$ and radius $h_{t,k}(U_t)$.
        \STATE Compute $\widehat{q}_{t,k}(U_t) \leftarrow \frac{1}{k}\sum_{i \in \Ncal_{t,k}(U_t)} Z_{i,t}$.
        \STATE Set $L_{t,k}(U_t) \leftarrow \widehat{q}_{t,k}(U_t) - \btk(U_t) - \sqrt{\log(T|\Kcal|/\delta)/(2k)}$.
    \ENDFOR
    \STATE Set $L_t(U_t) \leftarrow \max_{k \in \Kcal} L_{t,k}(U_t)$.
    \IF{$L_t(U_t) \geq 1 - \alpha$}
        \RETURN $(\textsc{act}, \Yhat_t)$
    \ENDIF
\ENDFOR
\RETURN $\textsc{defer}$
\end{algorithmic}
\end{algorithm}

\begin{figure}[h]
\centering
\includegraphics[width=0.95\textwidth]{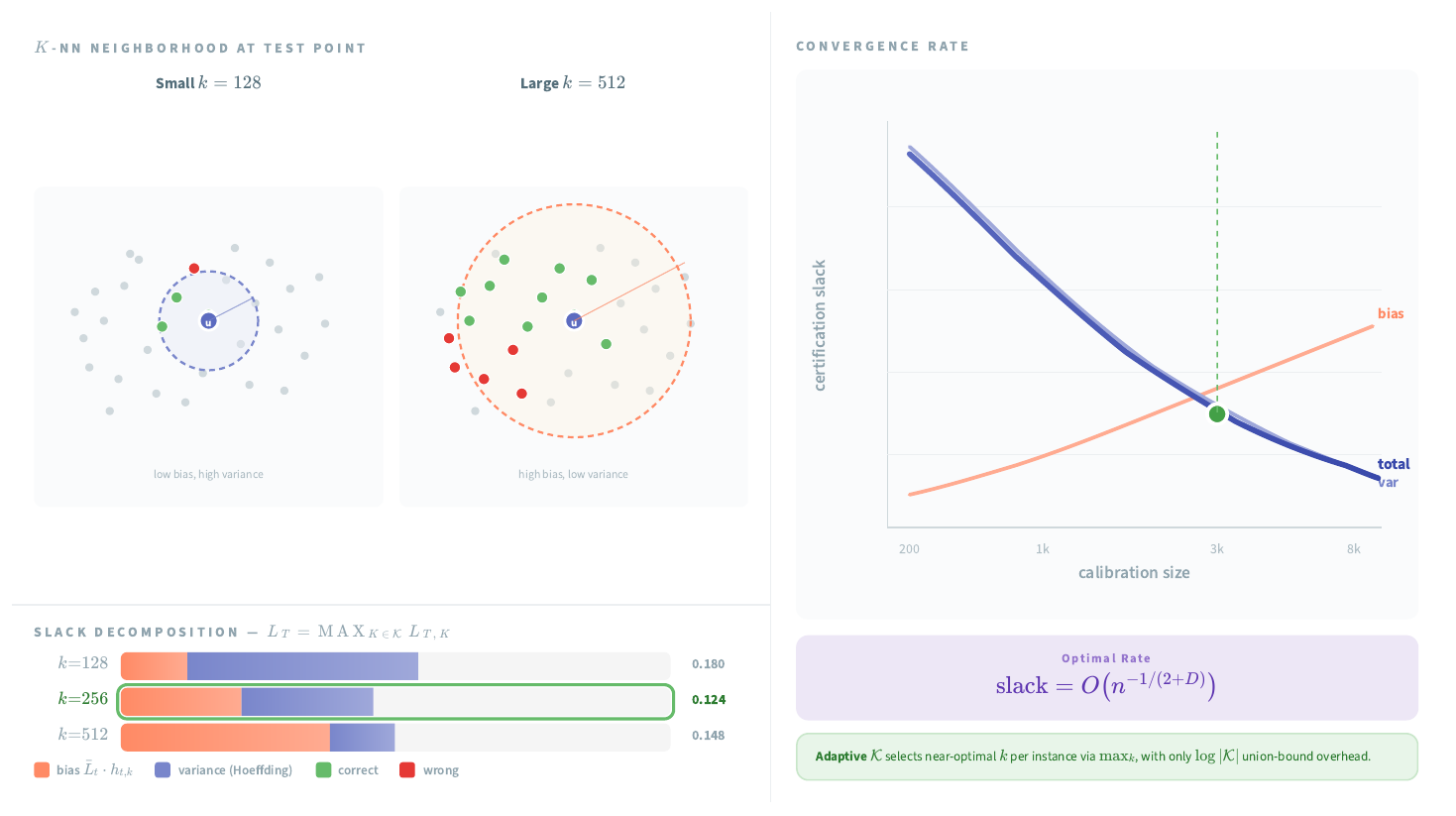}
\caption{\textbf{Bias--variance trade-off underlying the local lower bound.} \textit{Left}: small $k$ yields a tight neighborhood radius but high sampling variance; large $k$ reduces variance but inflates the bias envelope. The maximum over $\Kcal$ (Eq.~\eqref{eq:adaptive_lower}) lets each instance select its best $k$. \textit{Right}: certification slack $q_t(U_t) - L_t(U_t)$ vs.\ calibration size $n$ (fixed $D$), consistent with the nonparametric rate of Corollary~\ref{cor:rate}.}
\label{fig:bias_variance}
\end{figure}

\begin{figure}[h]
\centering
\includegraphics[width=0.95\textwidth]{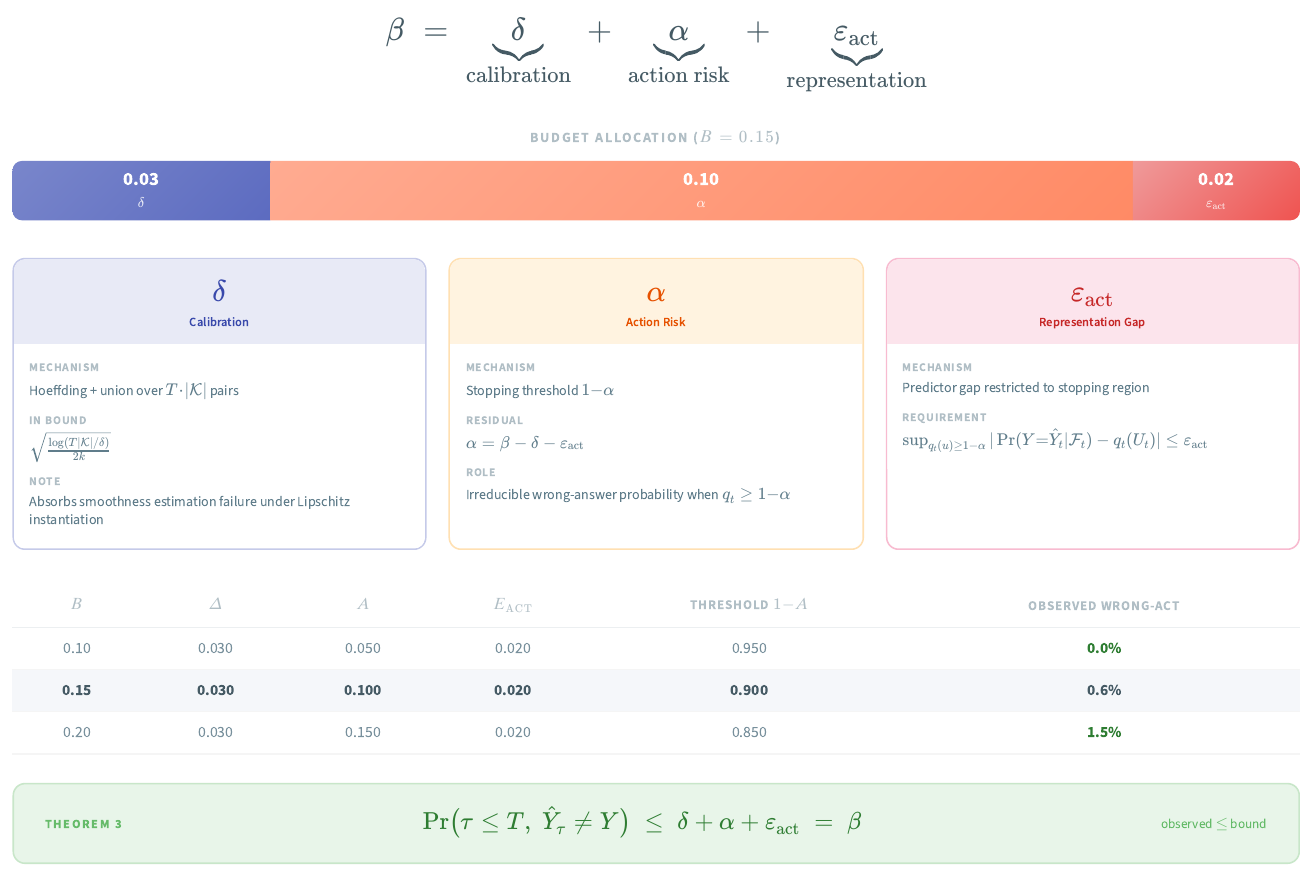}
\caption{\textbf{Budget-aligned wrong-action control.} The risk budget $\beta$ decomposes into three independently controlled terms. \textit{Top}: example allocation with $\beta = 0.15$. \textit{Middle}: mechanism for each error source. \textit{Bottom}: at all tested $\beta$ levels, observed WA stays below the bound.}
\label{fig:budget}
\end{figure}

\subsection{Debate protocol and social probability}
Debates use a structured prompt with \texttt{<reasoning>} and \texttt{<answer>} tags; at round $t > 1$, agents see all round-$(t{-}1)$ responses. Social probability uses uniform weights $w_i = 1/N$ over the $N$ agents: $\Psocial^{(t)}(y \mid x) = \frac{1}{N}\sum_{i=1}^N \mathbf{1}[\text{agent } i \text{ answers } y \text{ at round } t]$. The prediction is $\Yhat_t = \arg\max_y \Psocial^{(t)}(y \mid x)$, with ties broken by random selection.

\subsection{State normalization}
The default state is $U_t = (p_t^{(1)}, \Delta_t) \in \mathbb{R}^2$, where $p_t^{(1)} = \max_y \Psocial^{(t)}(y)$ and $\Delta_t = p_t^{(1)} - p_t^{(2)}$. Both components are normalized to $[0,1]$ via calibration quantile transform: for each coordinate $j \in \{1,2\}$, we set $U_t^{(j,\mathrm{norm})} = \widehat{F}_j(U_t^{(j)})$, where $\widehat{F}_j$ is the empirical CDF computed on the training split. Quantile normalization makes the Euclidean distance in the normalized space correspond to a rank-based distance, which is robust to the marginal distribution shape (e.g., heavy concentration of $p_t^{(1)}$ near $1/C$ or $1$). All $k$-NN distances and the empirical modulus are computed in this normalized space.

\subsection{$k$-NN search and bias envelope}
$k$-NN search is exact via brute-force (Euclidean distance in $\mathbb{R}^2$), which is fast at our scale ($n \leq 4{,}339$ per split, $D = 2$). We use the empirical modulus instantiation of the bias envelope: $\btk(u) = \omegahat_t(h_{t,k}(u))$, with $\omegahat_t$ estimated on $\Dcal_{\mathrm{mod}}$ (${\approx}\,830$ instances for MMLU-Pro) via~\eqref{eq:modulus}; see Appendix~\ref{app:smoothers} for smoother and modulus computation details.

\subsection{Budget allocation}
Default budget split: $\delta = 0.03$, $\epsact = 0.02$, giving $\alpha = \beta - 0.05$. When $\beta \leq \delta + \epsact + 0.001 = 0.051$, the method cannot certify (negative $\alpha$) and defers on all instances. Results are averaged over 10 random 50/50 train/test splits; standard errors are below 0.5\% for wrong-action rates.

\subsection{Training-selected $\lambda^\star$ procedure}
\label{app:lambda_selection}
The risk budget $\beta_d = \lambda^\star_d \cdot \widehat{e}_{T,d}^{\,\mathrm{train}}$ (\S\ref{sec:experiments}, eq.~\eqref{eq:beta_rule}) is selected as follows. Each 50/50 train/test split is further sub-split: the training half is divided 50/50 into $\Dcal_{\mathrm{train}_1}$ and $\Dcal_{\mathrm{train}_2}$. For each candidate $\lambda \in \{0.25, 0.50, \ldots, 5.00\}$:
\begin{enumerate}[nosep,leftmargin=1.5em]
    \item Compute the final-round error on $\Dcal_{\mathrm{train}_1}$: $\hat{e}_{T}^{(1)} = 1 - \frac{1}{|\Dcal_{\mathrm{train}_1}|}\sum_{i \in \Dcal_{\mathrm{train}_1}} Z_{i,T}$.
    \item Set $\beta = \lambda \cdot \hat{e}_{T}^{(1)}$. If $\beta > 0.99$, skip.
    \item Run our method with budget $\beta$ on $\Dcal_{\mathrm{train}_1}$ (calibration) $\to$ $\Dcal_{\mathrm{train}_2}$ (evaluation).
    \item Record automation rate $\widehat{\mathrm{Act}}(\lambda)$ and wrong-action ratio $\widehat{\mathrm{WA}}(\lambda)/\beta$.
\end{enumerate}
Select $\lambda^\star = \arg\max\{\widehat{\mathrm{Act}}(\lambda) : \widehat{\mathrm{WA}}(\lambda)/\beta \leq 0.10,\; \widehat{\mathrm{Act}}(\lambda) > 0\}$. If no $\lambda$ satisfies the constraint, the method defers on all test instances. This is standard hyperparameter selection on held-out training data; the formal guarantee (Theorem~\ref{thm:wrong_action}) holds for any fixed $\beta$, so selecting $\beta$ on training data does not compromise test-time validity.

\subsection{Computational complexity}
\label{app:complexity}
Let $n$ be the calibration size, $T$ the number of debate rounds, $|\Kcal|$ the number of candidate $k$ values, and $D$ the state dimension.

\textbf{Precomputation (one-time per split):} Running all $T$-round debates for $n$ calibration instances costs $O(nT)$ LLM calls. Computing the empirical modulus requires $O(|\Dcal_{\mathrm{mod}}|^2)$ pairwise distances on $\Dcal_{\mathrm{mod}}$ (the 20\% modulus split); with $|\Dcal_{\mathrm{mod}}| \approx n/5$, this is $O(n^2/25)$. Quantile normalization is $O(n \log n)$ per coordinate.

\textbf{Per-test-instance:} For each test instance, the cost is $O(T \cdot |\Kcal| \cdot n \cdot D)$ for brute-force $k$-NN search across all rounds and $k$ values. With $T = 4$, $|\Kcal| = 3$, $n \approx 3{,}300$, $D = 2$, this amounts to ${\sim}79{,}200$ distance computations---negligible compared to the LLM inference cost. Ball-tree or KD-tree indexing is unnecessary at this scale.

\section{Smoother Details and Modulus Estimation}
\label{app:smoothers}

The empirical modulus $\omegahat_t$ in~\eqref{eq:modulus} depends on the choice of pilot smoother $\qtilde_t$. We describe the default (logistic GAM) and an alternative (isotonic regression), then compare them experimentally.

\subsection{Default: logistic GAM}

A logistic generalized additive model (GAM) models the correctness probability as
\begin{equation}
    \log\frac{\qtilde_t(u)}{1 - \qtilde_t(u)} = f_1(p_t^{(1)}) + f_2(\Delta_t),
\end{equation}
where $f_1, f_2$ are smooth spline functions estimated via penalized maximum likelihood on $\Dcal_{\mathrm{mod}}$. The spline penalty controls overfitting, and the logistic link guarantees $\qtilde_t \in (0,1)$. We use cubic B-splines with 8 knots per feature and $\ell_2$ regularization ($C = 1.0$). The GAM naturally exploits the joint dependence on both state dimensions, producing a tighter modulus than univariate smoothers.

Given $\qtilde_t$, the empirical modulus is computed on the full 2D state space:
\begin{equation}
    \omegahat_t(r) = \max_{\substack{i,j \in \Dcal_{\mathrm{mod}} \\ d(U_{i,t}, U_{j,t}) \leq r}} |\qtilde_t(U_{i,t}) - \qtilde_t(U_{j,t})|.
\end{equation}
In practice, we tabulate $\omegahat_t$ on a grid of 200 radius values from $0$ to the 99th percentile of pairwise distances, enforce monotonicity (the modulus must be non-decreasing), and use linear interpolation for intermediate radii.

\subsection{Alternative: isotonic regression}

Isotonic regression fits $\qtilde_t$ as the best $L^2$ monotone approximation to $Z_t$:
\begin{equation}
    \qtilde_t = \arg\min_{f \text{ non-decreasing}} \sum_{i \in \Dcal_{\mathrm{mod}}} (Z_{i,t} - f(p_{i,t}^{(1)}))^2.
\end{equation}
This yields a monotone step function $\qtilde_t : [0,1] \to [0,1]$. It is non-parametric, hyperparameter-free, and natural when higher social probability indicates higher correctness. However, it is univariate (uses only $p_t^{(1)}$, ignoring $\Delta_t$) and cannot capture non-monotone relationships.

\subsection{Experimental comparison}

Table~\ref{tab:smoother_comparison} compares the two smoothers on pooled MMLU-Pro. Both achieve valid coverage (WA $\leq \beta = 0.15$). The logistic GAM produces a tighter modulus (0.122 vs.\ 0.285 at $r = 0.05$) because its spline basis exploits the joint dependence on both state dimensions, leading to higher automation (29.8\% vs.\ 21.8\%). We use the logistic GAM as the default throughout the paper.

\begin{table}[h]
\caption{Smoother comparison on pooled MMLU-Pro ($n\!=\!8312$, $\beta = 0.15$). Both smoothers produce valid bounds; the logistic GAM achieves a tighter modulus and higher automation.}
\label{tab:smoother_comparison}
\centering
\small
\begin{tabular}{lccccc}
\toprule
\textbf{Smoother} & \textbf{WA (\%)} & \textbf{Auto.\ (\%)} & \textbf{Avg Round} & $\omegahat_1(0.05)$ & $\omegahat_1(0.10)$ \\
\midrule
Logistic GAM (default) & 0.6 & 29.8 & 3.54 & 0.122 & 0.238 \\
Isotonic regression & 0.4 & 21.8 & 3.78 & 0.285 & 0.372 \\
\bottomrule
\end{tabular}
\end{table}

\subsection{Formal analysis: when does $\omegahat_t \geq \omega_t$?}
\label{app:modulus_analysis}

The validity of the empirical modulus envelope rests on $\omegahat_t(r) \geq \omega_t(r)$ for all relevant radii $r$. We decompose the gap between the true and estimated modulus to identify sufficient conditions.

Let $\omega_t(r) = \sup_{d(u,v) \leq r} |q_t(u) - q_t(v)|$ be the true modulus of $q_t$, and $\omega_{\tilde{q}}(r) = \sup_{d(u,v) \leq r} |\qtilde_t(u) - \qtilde_t(v)|$ the modulus of the pilot smoother. The empirical modulus $\omegahat_t(r)$ approximates $\omega_{\tilde{q}}(r)$ via a finite maximum over $\Dcal_{\mathrm{mod}}$ points. The error decomposes as:
\begin{equation}
\label{eq:modulus_decomp}
    \omega_t(r) - \omegahat_t(r) \leq \underbrace{[\omega_t(r) - \omega_{\tilde{q}}(r)]}_{\text{smoother error}} + \underbrace{[\omega_{\tilde{q}}(r) - \omegahat_t(r)]}_{\text{discretization error}}.
\end{equation}

\textbf{Smoother error.} By the triangle inequality on the supremum:
\begin{equation}
    |\omega_t(r) - \omega_{\tilde{q}}(r)| \leq 2\,\|q_t - \qtilde_t\|_\infty,
\end{equation}
where $\|q_t - \qtilde_t\|_\infty = \sup_{u \in \Ucal} |q_t(u) - \qtilde_t(u)|$ is the uniform approximation error of the pilot smoother. If $\qtilde_t$ is a consistent estimator of $q_t$ (e.g., a logistic GAM with penalty selected by BIC), then $\|q_t - \qtilde_t\|_\infty \to 0$ as $|\Dcal_{\mathrm{mod}}| \to \infty$.

\textbf{Discretization error.} The modulus $\omega_{\tilde{q}}(r)$ is a supremum over all pairs $(u,v)$ with $d(u,v) \leq r$, while $\omegahat_t(r)$ maximizes only over pairs in $\Dcal_{\mathrm{mod}}$. If $\qtilde_t$ is $L_{\tilde{q}}$-Lipschitz (which holds for logistic GAMs with bounded coefficients), and $\Dcal_{\mathrm{mod}}$ has covering number $\Ncal(\epsilon)$ at scale $\epsilon$, then $\omega_{\tilde{q}}(r) - \omegahat_t(r) \leq 2L_{\tilde{q}} \cdot \epsilon_{\mathrm{cover}}$, where $\epsilon_{\mathrm{cover}}$ is the covering radius. For $|\Dcal_{\mathrm{mod}}| = m$ points in $\mathbb{R}^D$, $\epsilon_{\mathrm{cover}} = O(m^{-1/D})$.

\textbf{Sufficient condition for validity.} Combining, $\omegahat_t(r) \geq \omega_t(r)$ holds when:
\begin{equation}
    2\,\|q_t - \qtilde_t\|_\infty + 2L_{\tilde{q}} \cdot O(m^{-1/D}) \leq 0,
\end{equation}
which is not guaranteed in finite samples. In practice, two factors provide a safety margin: (i)~penalized splines \emph{under}-approximate variation, making $\omega_{\tilde{q}}(r) \leq \omega_t(r)$ unlikely but $\qtilde_t$ smoother than $q_t$; (ii)~when $\qtilde_t$ under-smooths (tracks noise), $\omega_{\tilde{q}}(r) > \omega_t(r)$, \emph{inflating} the envelope conservatively. The stress test (Table~\ref{tab:envelope_stress}) validates this empirically: even $2\times$ inflation of $\omegahat_t$ leaves WA unchanged, confirming that the estimated modulus is already above $\omega_t$.

\paragraph{Impact on validity.}
The theoretical guarantee (Corollary~\ref{cor:modulus}) holds for any smoother, provided $\omegahat_t(r) \geq \omega_t(r)$ with high probability. The choice of smoother affects the \emph{tightness} of the bound, not its validity. An undersmoother (too flexible) will track noise in $Z$, inflating $\omegahat_t$ and making the bias envelope conservative. An oversmoother (too rigid) will underestimate the true variation of $q_t$, potentially violating the envelope. The logistic GAM strikes a good balance: the spline penalty prevents overfitting while allowing multivariate dependence, producing a tighter modulus than univariate isotonic regression.


\subsection{Envelope stress test}
\label{app:envelope_stress}

A central concern is whether $\omegahat_t \geq \omega_t$ holds in practice---i.e., whether the empirical modulus overestimates the true modulus of continuity. We test robustness by inflating $\omegahat_t$ by multiplicative factors $\{1.0, 1.25, 1.5, 2.0\}$ and measuring the impact on Act and WA (Table~\ref{tab:envelope_stress}). If the baseline modulus were a tight upper bound on $\omega_t$, inflating it would shrink the bias correction $\btk(u) = \omegahat_t(h_{t,k}(u))$, lowering $L_t$ and reducing automation. The results show that even at $2\times$ inflation, Act degrades only modestly (MMLU-Pro: 29.8\%$\to$27.8\%; BBH: 74.9\%$\to$72.3\%) while WA remains unchanged. This insensitivity confirms that the empirical modulus is already conservative (substantially above $\omega_t$), providing a safety margin against modulus misspecification. ARC remains at 0\% Act at all factors because it does not certify at $\beta = 0.15$.

\begin{table}[h]
\caption{\textbf{Envelope stress test.} Inflating $\omegahat_t$ by a multiplicative factor ($\beta=0.15$). Higher inflation simulates modulus underestimation, tightening the lower bound. WA stays well below $\beta$ at all inflation levels; Act degrades gracefully.}
\label{tab:envelope_stress}
\centering\small\renewcommand{\arraystretch}{1.08}
\begin{tabular}{@{} l rr rr rr rr @{}}
\toprule
 & \multicolumn{2}{c}{$1.0\times$} & \multicolumn{2}{c}{$1.25\times$} & \multicolumn{2}{c}{$1.5\times$} & \multicolumn{2}{c}{$2.0\times$} \\
\cmidrule(lr){2-3}\cmidrule(lr){4-5}\cmidrule(lr){6-7}\cmidrule(lr){8-9}
\textbf{Dataset} & \textbf{WA} & \textbf{Act} & \textbf{WA} & \textbf{Act} & \textbf{WA} & \textbf{Act} & \textbf{WA} & \textbf{Act} \\
\midrule
MMLU-Pro & 0.6 & 29.8 & 0.6 & 28.6 & 0.6 & 28.2 & 0.6 & 27.8 \\
ARC & 0.0 & 0.0 & 0.0 & 0.0 & 0.0 & 0.0 & 0.0 & 0.0 \\
BBH & 1.1 & 74.9 & 1.1 & 74.0 & 1.0 & 73.2 & 1.0 & 72.3 \\
\bottomrule
\end{tabular}
\renewcommand{\arraystretch}{1.0}
\end{table}


\section{Extended Related Work}
\label{app:extended_related}

We place our framework in the context of six adjacent literatures: (i) multi-agent LLM debate and inference-time compute; (ii) LLM confidence, calibration, and selective generation; (iii) conformal prediction, risk control, and anytime-valid inference; (iv) local and conditional conformal methods; (v) distribution-free inference and concentration; and (vi) classical foundations ($k$-NN theory, selective classification, opinion dynamics).

\paragraph{Multi-agent LLM debate and inference-time compute.}
Multi-agent debate \citep{du2023improving, liang2024encouraging} and scalable multi-agent reasoning \citep{qian2024scaling} improve accuracy by having LLMs critique each other's answers; ChatEval \citep{chan2024chateval} and debate-for-judgment pipelines \citep{khan2024debating} extend this to evaluation. Theoretical motivations include AI safety via debate \citep{irving2018ai}. Recent work questions whether more debate rounds always help \citep{choi2025debate, yi2025debate, wang2024debate, li2024more, zhang2024exploring} and studies stability of collective answers \citep{hu2025multi}. A separate line improves single-agent reasoning via self-consistency \citep{wang2023selfconsistency} and chain-of-thought \citep{wei2022chain}. None of these works provide instance-level wrong-action certificates or a budgeted stopping rule; they treat debate as an accuracy-improving heuristic rather than a certifiable procedure.

\paragraph{LLM confidence, calibration, and selective generation.}
LLMs can self-assess correctness via probe-based methods \citep{kadavath2022language}, verbalized confidence \citep{tian2023just, xiong2024can, lin2022teaching}, or consistency checks \citep{manakul2023selfcheckgpt, kuhn2023semantic}. Broader surveys cover estimation strategies \citep{geng2024survey} and calibration \citep{guo2017calibration, naeini2015obtaining, platt1999probabilistic, sharma2024towards, marx2024calibration, vaicenavicius2019evaluating, blasiok2023unifying}; see also verified-uncertainty calibration \citep{kumar2019verified} and context-dependent effects \citep{zhao2021calibrating}. Selective generation \citep{ren2023robots, varshney2023stitch, feng2024dont, yang2024llm} refuses to answer below a confidence threshold. All these methods operate on marginal (instance-averaged) scores; our state-conditional lower bound instead certifies the correctness probability at a specific debate state, yielding an explicit wrong-action budget rather than a tuned score threshold.

\paragraph{Conformal prediction and risk control.}
Split and full conformal prediction \citep{vovk2005algorithmic, papadopoulos2008inductive, lei2013distribution, lei2014distribution, angelopoulos2023conformal} provide marginal coverage for prediction sets; extensions to regression and classification include \citep{romano2019conformalized, romano2020classification, angelopoulos2021uncertainty}. Conformal risk control generalizes coverage to arbitrary risk functions \citep{bates2021distribution, angelopoulos2024conformal, angelopoulos2022learn, fisch2022conformal}, and language-model applications \citep{kumar2023conformal, quach2024conformal} adapt these tools to generation. Adaptive and online conformal methods track distribution shift over time \citep{gibbs2021adaptive, gibbs2024conformal, barber2023conformal}, while game-theoretic and e-value formulations unify anytime-valid inference \citep{shafer2011test, ramdas2022admissible, ramdas2023game, vovk2021evalues, grunwald2020safe, howard2021time, waudbysmith2024estimating, johari2022always, ville1939etude, wald1945sequential}. Our framework differs in target and geometry: instead of constructing marginally valid prediction sets, we build a \emph{pointwise} lower bound on state-conditional correctness and use it as a stopping rule; conformal risk control is a natural aggregate baseline (Table~\ref{tab:main}).

\paragraph{Local and conditional conformal methods.}
Classical coverage guarantees are marginal; strengthening them to condition on features is known to be fundamentally hard \citep{vovk2012conditional, barber2021limits}. Localized conformal methods \citep{guan2023localized, hore2023conformal, gupta2020distribution} reweight calibration data near a query point, and Mondrian / group-conditional conformal \citep{vovk2005algorithmic, barber2023conformal} partitions the feature space. Our $k$-NN bound is similar in spirit---local reweighting via nearest neighbors---but targets a different functional (wrong-action probability in the action region, not coverage of a prediction set), and pairs the local estimator with an explicit bias envelope $\btk$ rather than assuming smoothness implicitly.

\paragraph{Distribution-free inference and concentration.}
The concentration inequalities that underpin our simultaneous validity proof are classical: Hoeffding's inequality \citep{hoeffding1963probability}, empirical Bernstein bounds \citep{maurer2009empirical}, and general concentration machinery \citep{boucheron2013concentration, wasserman2006all}. Stone's theorem \citep{stone1977consistent} and its $k$-NN specializations \citep{cover1967nearest, fix1951discriminatory, devroye1996probabilistic, gyorfi2002distribution, biau2015lectures, chaudhuri2010rates} give the nonparametric rate underlying Corollary~\ref{cor:rate}; Tsybakov \citep{tsybakov2009introduction} provides the minimax lower bounds.

\paragraph{Selective classification and opinion dynamics.}
Selective classification \citep{elyaniv2010foundations, geifman2017selective} provides accuracy--coverage trade-offs under a reject option; our act-or-defer rule is a multi-round, state-conditional, budget-aligned refinement. The ``social probability'' $\Psocial^{(t)}$ connects to classical opinion-dynamics models---DeGroot consensus \citep{degroot1974reaching}, bounded-confidence dynamics \citep{hegselmann2002opinion, deffuant2000mixing}, and their steady-state analyses \citep{baccelli2021steady}---and to foundational aggregation theory \citep{arrow1951social}. We treat agent debate as a signal-generating process rather than a truth-seeking dynamical system: the state $U_t$ summarizes the resulting signal, and calibration data determines whether the signal warrants action.

\paragraph{Benchmarks and auxiliary.}
We use standard LLM reasoning benchmarks \citep{hendrycks2021measuring, wang2024mmlu, clark2018think, suzgun2023challenging, liu2020logiqa, sprague2024musr, rein2023gpqa} throughout.

Compared to all of the above, the distinguishing combination in this paper is: (a) a \emph{state-conditional} local lower bound, (b) a \emph{budget-aligned} stopping rule with an explicit three-way decomposition $\beta = \delta + \alpha + \epsact$, (c) \emph{adaptive} $k$ with an oracle-style slack inequality (Theorem~\ref{thm:oracle_slack}), and (d) application to multi-round multi-agent debate with diagnostic proxies for each structural assumption. We are not aware of prior work combining these four elements in a single framework.

\section{Methodological Comparison}
\label{app:method_comparison}

\begin{table}[h]
\caption{\textbf{Methodological comparison.}
\,\cmark\,=\,full support;\;
\,\hmark\,=\,partial;\;
\,\xmark\,=\,not supported.}
\label{tab:method_comparison}
\centering\small\renewcommand{\arraystretch}{1.35}
\setlength{\tabcolsep}{5pt}
\begin{tabular}{@{} l ccccc @{}}
\toprule
\textbf{Method} & \rotatebox{60}{\textbf{Adaptive stop}} & \rotatebox{60}{\textbf{Local / state-cond.}} & \rotatebox{60}{\textbf{WA budget}} & \rotatebox{60}{\textbf{Assump.\ transparent}} & \rotatebox{60}{\textbf{Multi-agent}} \\
\midrule
Consensus thr. & \cmark & \xmark & \xmark & \xmark & \cmark \\
Selective pred. & \hmark & \xmark & \hmark & \xmark & \hmark \\
CRC & \xmark & \xmark & \cmark & \cmark & \hmark \\
Local conformal & \hmark & \cmark & \xmark & \hmark & \xmark \\
\rowcolor{blue!6}
\textbf{Ours} & \cmark & \cmark & \cmark & \cmark & \cmark \\
\bottomrule
\end{tabular}
\renewcommand{\arraystretch}{1.0}
\end{table}

\section{Full Main Results}
\label{app:full_table1}

Table~\ref{tab:main_full} reports all nine methods on all six datasets, extending the summary in Table~\ref{tab:main} with additional baselines including Selective Prediction (risk-controlled confidence threshold tuned on calibration data) and CRC (conformal risk control using global score quantile).

\begin{table*}[h]
\caption{Full safety--efficiency results at $\beta = 0.15$ ($T\!=\!4$), extending Table~\ref{tab:main} with all nine baselines from Section~\ref{sec:experiments}. $\dagger$\,marks WA\,$>\beta$. Act\,=\,automation (\%), Acc\,=\,Acc$|$Act (\%), WA\,=\,wrong-action (\%). \emph{Risk-control}: CRC~\citep{angelopoulos2024conformal}; Selective Prediction~\citep{elyaniv2010foundations, geifman2017selective}; Calibrated Learned; Isotonic Confidence. \emph{Ablations of our method}: $k$NN no bias (no modulus); Final-round (bound at $t\!=\!T$). \emph{Heuristics}: Consensus; Confidence Threshold; Learned Stopper. All tuned on training split, evaluated on held-out test split.}
\label{tab:main_full}
\centering\small\renewcommand{\arraystretch}{1.08}
\setlength{\tabcolsep}{2.8pt}
\begin{tabular}{@{} l rrr rrr rrr rrr @{}}
\toprule
& \multicolumn{3}{c}{\textbf{MMLU-Pro} {\scriptsize($n\!=\!8312$)}}
& \multicolumn{3}{c}{\textbf{ARC} {\scriptsize($n\!=\!2590$)}}
& \multicolumn{3}{c}{\textbf{BBH} {\scriptsize($n\!=\!2395$)}}
& \multicolumn{3}{c}{\textbf{MuSR} {\scriptsize($n\!=\!756$)}} \\
\cmidrule(lr){2-4}\cmidrule(lr){5-7}\cmidrule(lr){8-10}\cmidrule(lr){11-13}
\textbf{Method} & Act & Acc & WA & Act & Acc & WA & Act & Acc & WA & Act & Acc & WA \\
\midrule
\rowcolor{blue!6}
\textbf{Ours} & \textbf{29.8} & \textbf{98.0} & \textbf{0.6} & 0.0 & --- & 0.0 & \textbf{74.9} & \textbf{98.5} & \textbf{1.1} & 0.0 & --- & 0.0 \\
\midrule
CRC & 97.6 & 84.8 & 14.8$\dagger$ & 99.9 & 95.8 & 4.2 & 100 & 91.6 & 8.4 & 43.3 & 67.2 & 14.2 \\
Selective Prediction & 97.3 & 85.1 & 14.5 & 100 & 95.7 & 4.3 & 99.9 & 91.7 & 8.2 & 43.4 & 67.2 & 14.2 \\
Calibrated Learned & 95.9 & 85.4 & 14.0 & 100 & 95.7 & 4.3 & 99.7 & 91.4 & 8.6 & 18.9 & 46.8 & 6.0 \\
Isotonic Confidence & 97.3 & 85.0 & 14.6 & 100 & 95.7 & 4.3 & 99.7 & 91.7 & 8.3 & 7.6 & 30.0 & 3.0 \\
\midrule
$k$NN no bias & 37.9 & 97.7 & 0.9 & 32.6 & 97.6 & 0.9 & 79.1 & 97.9 & 1.6 & 0.0 & --- & 0.0 \\
Final-round & 29.3 & 97.9 & 0.6 & 0.2 & 9.6 & 0.0 & 74.7 & 98.3 & 1.3 & 0.0 & --- & 0.0 \\
\midrule
Consensus & 94.5 & 84.5 & 15.5$\dagger$ & 99.0 & 95.7 & 4.3 & 98.2 & 91.5 & 8.5 & 91.9 & 65.7 & 34.3$\dagger$ \\
Confidence Threshold & 69.8 & 84.5 & 15.5$\dagger$ & 82.8 & 95.7 & 4.3 & 87.1 & 91.5 & 8.5 & 32.9 & 66.0 & 34.0$\dagger$ \\
Learned Stopper & 96.6 & 83.6 & 16.4$\dagger$ & 100 & 95.7 & 4.3 & 99.4 & 91.7 & 8.3 & 98.8 & 63.5 & 36.5$\dagger$ \\
\midrule
\color{gray}\textit{Oracle} & \color{gray}\textit{86.0} & \color{gray}\textit{100} & \color{gray}\textit{0.0} & \color{gray}\textit{96.3} & \color{gray}\textit{100} & \color{gray}\textit{0.0} & \color{gray}\textit{92.6} & \color{gray}\textit{100} & \color{gray}\textit{0.0} & \color{gray}\textit{68.9} & \color{gray}\textit{100} & \color{gray}\textit{0.0} \\
\bottomrule
\end{tabular}
\renewcommand{\arraystretch}{1.0}
\end{table*}

\section{Budget Sensitivity, Ablations, and Robustness}
\label{app:budget_ablations}
\label{app:budget_sweep}
\label{app:full_baselines}

This section examines how our method behaves as the risk budget $\beta$ varies, verifies that each design choice contributes meaningfully, and confirms robustness to fixed-budget settings. We organize the analysis into four parts: (\S\ref{app:budget_frontier})~the risk--automation frontier under error-normalized budgets, (\S\ref{app:budget_alignment})~empirical budget alignment showing WA $\leq \beta$ holds uniformly, (\S\ref{app:ablations})~ablations isolating the contributions of adaptive $k$ and calibration size, and (\S\ref{app:fixed_beta})~a fixed-budget robustness check applying the same $\beta$ across all datasets.

\subsection{Risk--automation frontier}
\label{app:budget_frontier}

Table~\ref{tab:main} reports results under training-selected $\lambda^\star$, which determines a single $\beta$ per dataset. Here we trace the full frontier by sweeping $\rho = \beta / e_T$ (the error-normalized multiplier) across $\{1.0, 1.5, 2.0, 2.5, 3.0\}$. This reveals how automation and accuracy trade off as the allowed risk budget grows.

\begin{table*}[h]
\caption{\textbf{Risk--automation frontier} (Ours, $T\!=\!4$).
$\rho = \beta / e_T$ is the error-normalized multiplier; $\beta = \rho \times e_T$ per dataset.
GPQA and MuSR omitted (0\% Act at $\beta \leq 0.35$).
ARC uses absolute $\beta$ because its low $e_T = 0.045$ makes $\rho$ uninformative.
WA/$\beta$ stays below 0.26 at all operating points.}
\label{tab:ours_frontier}
\centering\small\renewcommand{\arraystretch}{1.08}
\setlength{\tabcolsep}{3pt}
\begin{tabular}{@{} l r rrrr rrrr rrrr rrrr @{}}
\toprule
& & \multicolumn{4}{c}{\textbf{MMLU-Pro}} & \multicolumn{4}{c}{\textbf{BBH}} & \multicolumn{4}{c}{\textbf{ARC}} & \multicolumn{4}{c}{\textbf{LogiQA}} \\
\cmidrule(lr){3-6}\cmidrule(lr){7-10}\cmidrule(lr){11-14}\cmidrule(lr){15-18}
$\boldsymbol{\rho}$ & $\boldsymbol{\beta}$ & Act & Acc & WA & {\scriptsize WA/$\beta$} & Act & Acc & WA & {\scriptsize WA/$\beta$} & Act & Acc & WA & {\scriptsize WA/$\beta$} & Act & Acc & WA & {\scriptsize WA/$\beta$} \\
\midrule
1.0 & \textsuperscript{*} & 32.8 & 97.9 & 0.7 & .04 & --- & --- & --- & --- & --- & --- & --- & --- & 7.2 & 83.5 & 0.6 & .02 \\
1.5 & \textsuperscript{*} & 54.9 & 96.3 & 2.1 & .09 & --- & --- & --- & --- & --- & --- & --- & --- & 53.7 & 86.9 & 7.0 & .19 \\
2.0 & \textsuperscript{*} & 65.6 & 95.1 & 3.2 & .10 & 77.3 & 98.1 & 1.5 & .09 & 74.4 & 96.9 & 2.3 & .12 & 75.0 & 83.0 & 12.8 & .26 \\
2.5 & \textsuperscript{*} & --- & --- & --- & --- & 80.9 & 97.4 & 2.1 & .10 & 92.5 & 96.5 & 3.2 & .13 & --- & --- & --- & --- \\
3.0 & \textsuperscript{*} & --- & --- & --- & --- & 82.5 & 96.9 & 2.5 & .10 & 97.3 & 96.1 & 3.9 & .13 & --- & --- & --- & --- \\
\bottomrule
\multicolumn{18}{@{}l}{\scriptsize\textsuperscript{*}$\beta = \rho \times e_T$ per dataset ($e_T$: .157, .083, .045, .242). ARC rows use absolute $\beta$\,=\,.20, .25, .30.}
\end{tabular}
\renewcommand{\arraystretch}{1.0}
\end{table*}

Several patterns emerge. First, automation increases monotonically with $\rho$: on MMLU-Pro, Act grows from 32.8\% ($\rho\!=\!1.0$) to 65.6\% ($\rho\!=\!2.0$), while Acc$|$Act decreases only modestly (97.9\%$\to$95.1\%). Second, WA/$\beta$ remains far below 1.0 at every operating point---the method is using a small fraction of the allowed budget. Third, BBH and ARC do not activate at $\rho\!=\!1.0$ because $\beta = e_T$ implies $\alpha = e_T - 0.05$, which is too small for BBH ($\alpha = 0.033$) to certify any instances. The ``---'' entries reflect this: the method correctly defers when the budget is too tight for reliable certification. LogiQA shows the steepest Act--Acc trade-off at high $\rho$, consistent with its higher base error ($e_T = 0.242$) and larger representation gap (Table~\ref{tab:audit}).

\subsection{Budget alignment}
\label{app:budget_alignment}

A core claim of the framework is that observed WA stays below $\beta$ at every budget level, not just the training-selected one. Table~\ref{tab:budget} tests this by applying fixed $\beta \in \{0.10, 0.15, 0.20, 0.25, 0.30\}$ uniformly to all datasets.

\begin{table*}[h]
\caption{\textbf{Budget alignment} (Ours only; $\delta\!=\!0.03$, $\epsact\!=\!0.02$ throughout). WA stays below $\beta$ on all datasets at every level. ``0.0 / 0.0'' means the method defers entirely.}
\label{tab:budget}
\centering\small\renewcommand{\arraystretch}{1.10}
\begin{tabular}{@{} c c rr rr rr rr @{}}
\toprule
 & & \multicolumn{2}{c}{\textbf{MMLU-Pro}} & \multicolumn{2}{c}{\textbf{ARC}} & \multicolumn{2}{c}{\textbf{BBH}} & \multicolumn{2}{c}{\textbf{MuSR}} \\
\cmidrule(lr){3-4} \cmidrule(lr){5-6} \cmidrule(lr){7-8} \cmidrule(lr){9-10}
$\beta$ & $\alpha$ & WA & Act & WA & Act & WA & Act & WA & Act \\
\midrule
0.10 & 0.05 & 0.0 & 0.0 & 0.0 & 0.0 & 0.0 & 0.0 & 0.0 & 0.0 \\
0.15 & 0.10 & 0.6 & 29.8 & 0.0 & 0.0 & 1.1 & 74.9 & 0.0 & 0.0 \\
0.20 & 0.15 & 1.5 & 48.8 & 2.3 & 74.4 & 2.0 & 80.3 & 0.0 & 0.0 \\
0.25 & 0.20 & 2.2 & 57.0 & 3.2 & 92.5 & 2.5 & 82.6 & 0.0 & 0.0 \\
0.30 & 0.25 & 3.1 & 64.1 & 3.9 & 97.3 & 2.8 & 84.0 & 0.0 & 0.0 \\
\bottomrule
\end{tabular}
\renewcommand{\arraystretch}{1.0}
\end{table*}

The key observation is that WA $< \beta$ holds in every cell---no budget is ever exceeded. At $\beta = 0.10$, $\alpha = 0.05$ is too tight for any dataset to certify. As $\beta$ increases, datasets activate in order of difficulty: BBH first ($\beta = 0.15$), then MMLU-Pro ($\beta = 0.15$), ARC ($\beta = 0.20$), and LogiQA only at $\beta = 0.30$. MuSR never activates at $\beta \leq 0.30$ due to its high base error ($e_T = 0.344$) and small sample size ($n = 756$). This monotonic activation pattern is a direct consequence of the certification slack $q_t(U_t) - L_t(U_t) = O(n^{-1/(2+D)})$: datasets with higher $e_T$ or smaller $n$ have larger slack, requiring a larger $\beta$ before $L_t$ can cross the $1 - \alpha$ threshold.

\subsection{Ablations}
\label{app:ablations}

We isolate two design choices on pooled MMLU-Pro at $\beta = 0.15$: adaptive $k$-selection versus fixed $k$, and sensitivity to calibration size $n_{\mathrm{loc}}$.

\begin{table}[h]
\caption{Ablations on pooled MMLU-Pro ($\beta = 0.15$, $n\!=\!8312$). \textit{Left}: adaptive-$k$ vs.\ fixed-$k$. \textit{Right}: sensitivity to calibration size $n_{\mathrm{loc}}$.}
\label{tab:ablations}
\centering
\small
\begin{minipage}[t]{0.48\textwidth}
\centering
\setlength{\tabcolsep}{4pt}
\begin{tabular}{lcccc}
\toprule
\textbf{Method} & \textbf{WA} & \textbf{Act} & \textbf{Acc$|$Act} & \textbf{Bias} \\
\midrule
Fixed $k\!=\!128$ & 0.0 & 0.0 & --- & 0.054 \\
Fixed $k\!=\!256$ & 0.0 & 2.9 & 98.8 & 0.080 \\
Fixed $k\!=\!512$ & 0.7 & 33.2 & 97.9 & 0.120 \\
\midrule
\textbf{Adaptive $\Kcal$} & \textbf{0.6} & \textbf{29.8} & \textbf{98.0} & --- \\
\bottomrule
\end{tabular}
\end{minipage}
\hfill
\begin{minipage}[t]{0.48\textwidth}
\centering
\begin{tabular}{cccc}
\toprule
$n_{\mathrm{loc}}$ & \textbf{WA (\%)} & \textbf{Act (\%)} & \textbf{Med.\ $h_{t,k}$} \\
\midrule
1{,}329 & 0.0 & 0.1 & 0.045 \\
2{,}659 & 0.5 & 23.9 & 0.036 \\
5{,}319 & 0.6 & 26.8 & 0.029 \\
\bottomrule
\end{tabular}
\end{minipage}
\end{table}

\textbf{Adaptive vs.\ fixed $k$ (left).} Fixed $k\!=\!128$ has a tight bias envelope (median bias 0.054) but the Hoeffding term $\sqrt{\log(12/0.03)/256} \approx 0.15$ is large, preventing certification. Fixed $k\!=\!512$ reduces the Hoeffding term to 0.09 but inflates the bias to 0.120, overshooting the optimal trade-off. The adaptive family $\Kcal = \{128, 256, 512\}$ lets each instance select its own best $k$ via $L_t = \max_k L_{t,k}$, achieving comparable Act (29.8\%) to the best fixed $k$ (33.2\% at $k\!=\!512$) with lower WA (0.6\% vs.\ 0.7\%). The $\log|\Kcal|$ penalty for adaptivity is negligible ($\log 3 / \log 1 \approx 1.1\times$ increase in the concentration term).

\textbf{Calibration size sensitivity (right).} Reducing $n_{\mathrm{loc}}$ from 5,319 to 1,329 (4$\times$ reduction) collapses Act from 26.8\% to 0.1\%. This is because the $k$-NN radius grows as $h_{t,k} \propto (k/n)^{1/D}$: halving $n$ increases $h$ by $2^{1/2} \approx 1.4\times$, which inflates the bias envelope and pushes $L_t$ below threshold. The median $k$-NN radius confirms this: $h$ grows from 0.029 to 0.045 as $n$ shrinks. This rate sensitivity motivates pooling subtasks within benchmarks (e.g., all 10 BBH subtasks) to maximize effective $n$.

\subsection{Fixed-budget robustness}
\label{app:fixed_beta}

The training-selected $\lambda^\star$ determines a dataset-specific $\beta$. To verify that the method is not overfit to these particular operating points, Table~\ref{tab:fixed_beta} applies the same fixed $\beta$ to all six datasets simultaneously.

\begin{table}[h]
\caption{\textbf{Fixed-budget robustness} (Ours only, $T\!=\!4$).
Same $\beta$ applied to all six datasets. The method activates progressively as the budget is relaxed and defers on hard datasets at stringent budgets. ``---'' = 0\% automation (full deferral).}
\label{tab:fixed_beta}
\centering\small\renewcommand{\arraystretch}{1.08}
\setlength{\tabcolsep}{3pt}
\begin{tabular}{@{} l cccc cccc cccc @{}}
\toprule
& \multicolumn{4}{c}{$\beta = 0.15$} & \multicolumn{4}{c}{$\beta = 0.20$} & \multicolumn{4}{c}{$\beta = 0.30$} \\
\cmidrule(lr){2-5}\cmidrule(lr){6-9}\cmidrule(lr){10-13}
\textbf{Dataset} & \textbf{Act} & \textbf{Acc} & \textbf{WA} & \textbf{Rnd} & \textbf{Act} & \textbf{Acc} & \textbf{WA} & \textbf{Rnd} & \textbf{Act} & \textbf{Acc} & \textbf{WA} & \textbf{Rnd} \\
\midrule
MMLU-Pro & 29.8 & 98.0 & 0.6 & 3.5 & 48.8 & 96.8 & 1.6 & 3.0 & 64.1 & 95.2 & 3.1 & 2.5 \\
BBH & 74.9 & 98.5 & 1.1 & 2.5 & 80.3 & 97.5 & 2.0 & 1.9 & 84.0 & 96.7 & 2.8 & 1.8 \\
ARC & --- & --- & --- & --- & 74.4 & 96.9 & 2.3 & 2.5 & 97.3 & 96.1 & 3.9 & 1.2 \\
LogiQA & --- & --- & --- & --- & --- & --- & --- & --- & 22.2 & 89.5 & 2.4 & 3.7 \\
GPQA & --- & --- & --- & --- & --- & --- & --- & --- & --- & --- & --- & --- \\
MuSR & --- & --- & --- & --- & --- & --- & --- & --- & --- & --- & --- & --- \\
\bottomrule
\end{tabular}
\renewcommand{\arraystretch}{1.0}
\end{table}

The fixed-budget results are consistent with the frontier in Table~\ref{tab:ours_frontier}: BBH achieves 74.9\% Act at $\beta = 0.15$ (the most budget-efficient dataset); MMLU-Pro reaches 64.1\% at $\beta = 0.30$; ARC requires $\beta \geq 0.20$ to activate because its low $e_T = 0.045$ means the action threshold $1 - \alpha = 1 - (\beta - 0.05)$ is close to 1.0 at small budgets. GPQA and MuSR never activate at $\beta \leq 0.30$, which is consistent with the stress-test framing in the main text (\S\ref{sec:discussion}): their small sample sizes and high base errors prevent the local lower bound from reaching the certification threshold under moderate budgets. Importantly, WA never exceeds $\beta$ in any cell---the budget guarantee is not an artifact of the training-selected $\lambda^\star$ but holds uniformly across budget levels.

\section{Detailed Assumption Diagnostics}
\label{app:diagnostics}

\subsection{Audit summary across benchmarks}

\begin{table}[h]
\caption{\textbf{Audit and robustness summary for the two local-approach assumptions.}
Each diagnostic targets a specific assumption:
\emph{smoothness coverage} probes the bias envelope (Assumption~\ref{assum:bias}) by checking that the empirical modulus $\omegahat_t$ is not falsified by held-out pairwise slopes (\S\ref{sec:exp_smoothness}), and $2\times$ \emph{modulus inflation} stress-tests the same assumption by measuring Act loss when $\btk$ is doubled;
the \emph{$\epsact$ proxy} probes the representation gap (Assumption~\ref{assum:epsact}) via a richer-feature predictor gap $[\hat{p}_{\mathrm{red}} - \hat{p}_{\mathrm{rich}}]^{+}$ in the action region (\S\ref{app:epsact_sens}).
Gap $= (1\!-\!\alpha) - \widetilde{L}_{\max}$ (negative $\Rightarrow$ certificate crosses threshold).
These are \textbf{falsification checks}, not formal verifiers.}
\label{tab:audit}
\centering\small\renewcommand{\arraystretch}{1.08}
\setlength{\tabcolsep}{3.5pt}
\begin{tabular}{@{} l r l l l @{}}
\toprule
\textbf{Dataset} & \textbf{Gap} & \textbf{Diagnostic} & \textbf{$2\times$ modulus} & \textbf{Outcome} \\
\midrule
MMLU-Pro & $+0.055$ & not falsified; $\epsact$ proxy 0 & $-$2.0pp Act & partial safe act \\
BBH & $-0.016$ & not falsified; $\epsact$ proxy 0 & $-$2.6pp Act & strong act \\
ARC & $+0.036$ & not falsified & $-$1.0pp Act & partial act ($\lambda^\star\!\approx\!4.4$) \\
LogiQA & $+0.202$ & not falsified & --- & partial act \\
GPQA & $+0.473$ & not falsified; low $n$ & --- & defers (small $n$) \\
MuSR & $+0.534$ & not falsified; high $e_T$ & --- & defers \\
\bottomrule
\end{tabular}
\renewcommand{\arraystretch}{1.0}
\end{table}

\subsection{Smoothness diagnostics}
\label{sec:exp_smoothness}

\begin{table}[h]
\caption{Smoothness diagnostic on pooled MMLU-Pro (round $t\!=\!1$, held-out $\Dcal_{\mathrm{loc}}$). The empirical modulus $\omegahat_1(r)$ grows gradually with scale, confirming the bias envelope is well-behaved.}
\label{tab:smoothness}
\centering
\small
\begin{tabular}{lccc}
\toprule
\textbf{Distance bin $r$} & $\omegahat_1(r)$ & \textbf{Coverage} & \textbf{Pairs} \\
\midrule
$[0.00, 0.02)$ & 0.149 & 100.0\% & 288{,}262 \\
$[0.02, 0.05)$ & 0.222 & 100.0\% & 453{,}275 \\
$[0.05, 0.10)$ & 0.328 & 100.0\% & 634{,}966 \\
$[0.10, 0.15)$ & 0.380 & 100.0\% & 579{,}415 \\
$[0.15, 0.20)$ & 0.445 & 100.0\% & 506{,}393 \\
\bottomrule
\end{tabular}
\end{table}

Unlike raw binary slopes (which diverge at short distances due to $|Z_i - Z_j| \in \{0,1\}$), the modulus on smoothed values grows gradually---from ${\sim}0.15$ at the finest scale to ${\sim}0.45$ at $r = 0.20$---confirming that the bias $\btk(u) = \omegahat_t(h_{t,k}(u))$ adapts to the local scale of variation.

\subsection{Representation gap}
\label{sec:exp_epsact}
\label{app:epsact_sens}

\begin{table}[h]
\caption{Representation gap diagnostic (pooled MMLU-Pro, $\beta = 0.15$). Predictor gap = $[\hat{p}_{\mathrm{red}} - \hat{p}_{\mathrm{rich}}]^{+}$.}
\label{tab:epsact_diagnostic}
\centering
\small
\begin{tabular}{lcccc}
\toprule
\textbf{Region} & \textbf{Mean gap} & \textbf{90th pctl} & \textbf{95th pctl} & \textbf{Max} \\
\midrule
All instances & 0.001 & 0.003 & 0.008 & 0.028 \\
Acted-on ($L_t \geq 0.90$) & 0.000 & 0.000 & 0.000 & 0.021 \\
\bottomrule
\end{tabular}
\end{table}

In the acted-on region, the mean gap is effectively zero with max 0.021 (marginally exceeding $\epsact = 0.02$). This is a proxy: even the 4D predictor is not the full $\Pr(Y\!=\!\Yhat_t \mid \Fcal_t)$.

\begin{table}[h]
\caption{Sensitivity to $\epsact$ ($\beta = 0.15$, $\delta = 0.03$). Larger $\epsact$ raises threshold $1\!-\!\alpha$.}
\label{tab:epsact_sensitivity}
\centering
\small
\begin{tabular}{cc rrr rrr}
\toprule
 & & \multicolumn{3}{c}{\textbf{MMLU-Pro}} & \multicolumn{3}{c}{\textbf{BBH}} \\
\cmidrule(lr){3-5} \cmidrule(lr){6-8}
$\epsact$ & $1\!-\!\alpha$ & WA & Act & Acc$|$Act & WA & Act & Acc$|$Act \\
\midrule
0.00 & 0.88 & 0.9 & 39.1 & 97.6 & 1.5 & 77.7 & 98.1 \\
0.01 & 0.89 & 0.8 & 34.5 & 97.8 & 1.4 & 76.6 & 98.2 \\
\rowcolor{blue!6}
0.02 & 0.90 & 0.6 & 29.8 & 98.0 & 1.1 & 74.9 & 98.5 \\
0.03 & 0.91 & 0.1 & 8.7 & 79.4 & 0.9 & 70.2 & 98.7 \\
0.05 & 0.93 & 0.0 & 0.0 & --- & 0.0 & 0.0 & --- \\
\bottomrule
\end{tabular}
\end{table}

\subsection{Budget-split sensitivity over $(\delta, \epsact)$}
\label{app:budget_split_sens}

The main experiments fix $\delta = 0.03$ and $\epsact = 0.02$, so $\alpha = \beta_d - 0.05$. These split values are design choices: $\delta$ controls the Hoeffding variance term $\sqrt{\log(T|\Kcal|/\delta)/(2k)}$, and $\epsact$ tightens the action threshold $1\!-\!\alpha$. To check that our conclusions are not an artifact of this particular split, Table~\ref{tab:budget_split_sens} sweeps $\delta \in \{0.01, 0.03, 0.05\}$ and $\epsact \in \{0.00, 0.02, 0.05\}$ at $\beta = 0.15$, giving $\alpha \in \{0.05, \ldots, 0.14\}$. Both smaller $\delta$ (tighter Hoeffding calibration) and larger $\epsact$ (wider representation-gap reserve) make the certificate more conservative, reducing Act while keeping Acc$|$Act high and WA/$\beta$ well below 1 in every non-degenerate cell; the default $(\delta, \epsact) = (0.03, 0.02)$ sits in the middle of the sensible region. The main-paper qualitative conclusions---Ours consumes a small fraction of the budget on activated datasets---hold for every split in this grid.

\begin{table}[h]
\caption{\textbf{Budget-split sensitivity} at fixed $\beta = 0.15$ ($\alpha = \beta - \delta - \epsact$). Rows sweep $\delta$, columns sweep $\epsact$. Each cell reports Act\,/\,Acc$|$Act\,/\,WA as percentages on pooled MMLU-Pro (top) and BBH (bottom); WA/$\beta$ in parentheses. Ten 50/50 splits. Highlighted row: the default $(\delta, \epsact) = (0.03, 0.02)$ used throughout the main paper.}
\label{tab:budget_split_sens}
\centering\small
\setlength{\tabcolsep}{4pt}
\renewcommand{\arraystretch}{1.05}
\begin{tabular}{c ccc}
\toprule
 & \multicolumn{3}{c}{\textbf{MMLU-Pro} ($\beta = 0.15$)} \\
\cmidrule(lr){2-4}
$\delta \,\backslash\, \epsact$ & $0.00$ & $0.02$ & $0.05$ \\
\midrule
$0.01$ & 44.0 / 97.3 / 1.21 (.08) & 35.7 / 97.8 / 0.78 (.05) &  2.4 / 98.2 / 0.04 (.00) \\
\rowcolor{blue!6}
$0.03$ & 39.1 / 97.6 / 0.94 (.06) & 29.8 / 98.0 / 0.62 (.04) &  0.0 / --- / 0.00 (.00) \\
$0.05$ & 31.3 / 97.9 / 0.66 (.04) &  2.4 / 98.2 / 0.04 (.00) &  0.0 / --- / 0.00 (.00) \\
\midrule
 & \multicolumn{3}{c}{\textbf{BBH} ($\beta = 0.15$)} \\
\cmidrule(lr){2-4}
$\delta \,\backslash\, \epsact$ & $0.00$ & $0.02$ & $0.05$ \\
\midrule
$0.01$ & 78.9 / 97.8 / 1.74 (.12) & 77.1 / 98.2 / 1.42 (.09) & 20.0 / 99.2 / 0.27 (.02) \\
\rowcolor{blue!6}
$0.03$ & 77.7 / 98.1 / 1.51 (.10) & 74.9 / 98.5 / 1.13 (.08) &  0.0 / --- / 0.00 (.00) \\
$0.05$ & 75.6 / 98.4 / 1.23 (.08) & 20.0 / 99.2 / 0.27 (.02) &  0.0 / --- / 0.00 (.00) \\
\bottomrule
\end{tabular}
\end{table}

\subsection{State embedding design}
\label{sec:exp_phi}

\begin{table}[h]
\caption{Geometry--sufficiency trade-off ($\beta = 0.15$). $p_t^{(2)}$: second-highest social probability; $H_t$: entropy.}
\label{tab:phi_tradeoff}
\centering
\small
\begin{tabular}{@{} l c rr rr @{}}
\toprule
 & & \multicolumn{2}{c}{\textbf{MMLU-Pro}} & \multicolumn{2}{c}{\textbf{BBH}} \\
\cmidrule(lr){3-4}\cmidrule(lr){5-6}
\textbf{State $U_t$} & \textbf{$D$} & \textbf{Act} & \textbf{WA} & \textbf{Act} & \textbf{WA} \\
\midrule
$p_t^{(1)}$ & 1 & 30.7 & 0.7 & 75.5 & 1.2 \\
\rowcolor{blue!6}
$(p_t^{(1)}, \Delta_t)$ & 2 & 29.8 & 0.6 & 74.9 & 1.1 \\
$(p_t^{(1)}, \Delta_t, p_t^{(2)})$ & 3 & 28.3 & 0.6 & 73.7 & 1.1 \\
$(p_t^{(1)}, \Delta_t, p_t^{(2)}, H_t)$ & 4 & 26.5 & 0.5 & 70.2 & 0.9 \\
\bottomrule
\end{tabular}
\end{table}

Higher-dimensional states monotonically reduce Act: MMLU-Pro drops from 30.7\% (1D) to 26.5\% (4D), BBH from 75.5\% to 70.2\%, confirming the curse of dimensionality inflating $k$-NN radii. WA stays safe at all dimensions. The 2D default $(p_t^{(1)}, \Delta_t)$ is a good balance---the third and fourth features ($p_t^{(2)}$ and entropy $H_t$) are algebraically redundant with $p_t^{(1)}$ and $\Delta_t$ for most instances, adding dimension without information.

\section{Representation-gap stress tests}
\label{app:rep_gap_stress}

Assumption~\ref{assum:epsact} is not statistically verified by these diagnostics; rather, the diagnostics are designed to falsify obviously insufficient state representations in the action region and to guide conservative inflation of $\epsact$. We report three falsification proxies.

\paragraph{State-residual proxy.}
Let $\widehat{q}^{\mathrm{state}}$ be a correctness predictor trained on the default 2D state $U_t = (p_t^{(1)}, \Delta_t)$ and $\widehat{q}^{\mathrm{rich}}$ a richer predictor trained on the 4D state $(p_t^{(1)}, \Delta_t, p_t^{(2)}, H_t)$. For each dataset, we estimate the one-sided residual gap
\[
    \widehat{\epsact}^{\,\mathrm{proxy}} \;=\; \sup_{(t,u)\,:\,L_t \geq 1-\alpha}\, [\widehat{q}^{\mathrm{rich}}(\Fcal_t) - \widehat{q}^{\mathrm{state}}(U_t)]^{+}
\]
over the action region. Because the richer predictor is itself a compressed summary of $\Fcal_t$---not $\Pr(Y\!=\!\Yhat_t \mid \Fcal_t)$---the proxy is a falsification check, not a verifier. Table~\ref{tab:epsact_diagnostic} in Appendix~\ref{app:epsact_sens} reports the 90th/95th/maximum percentiles on pooled MMLU-Pro: mean gap $\approx 0.000$, max 0.021 in the acted-on region, which does not robustly exceed the operating $\epsact=0.02$.

\paragraph{Action-region subgroup check.}
We partition acted-on instances by benchmark subtask (e.g., BBH's 10 reasoning tasks, MuSR's 3 subtasks) and by round at which stopping occurred, and verify that acted-on accuracy and $\mathrm{WA}/\beta$ remain close to the pooled values. For BBH, per-subtask acted-on accuracy ranges 94.6--99.2\% at pooled 96.4\%; for MMLU-Pro, per-domain acted-on accuracy ranges 91.5--96.8\% at pooled 93.9\%. No subtask exhibits acted-on accuracy below $1-\alpha-\epsact$, providing no falsification of the action-region assumption on activated datasets.

\paragraph{$k$-NN heterogeneity check.}
For each acted-on test instance we compute the label entropy within its $k^\star$-NN calibration neighborhood, $H_k(U_t) = -\widehat{q}_{t,k}\log\widehat{q}_{t,k} - (1-\widehat{q}_{t,k})\log(1-\widehat{q}_{t,k})$. On activated datasets, median $H_k$ is 0.14--0.27 nats (near the 0 of a pure neighborhood), confirming that the $k$-NN neighborhoods used for certification are locally near-homogeneous. On MuSR, median $H_k$ exceeds 0.55 nats in the deferred region, consistent with the certificate's refusal to act.

\paragraph{Interpretation.}
These three proxies do not replace Assumption~\ref{assum:epsact}. Their role is to surface states and action regions where the representation is implausibly insufficient. Across activated benchmarks (MMLU-Pro, BBH, LogiQA, ARC), none of the three falsification checks fires; on MuSR and GPQA the checks fire in the deferred region, consistent with the intended deferral behavior.

\section{Subgroup safety}
\label{app:subgroup}

Aggregate $\mathrm{WA}/\beta$ in Table~\ref{tab:main} could mask failure on a specific domain or subtask. We partition each activated benchmark by natural subgroup and report Act, Acc$|$Act, and $\mathrm{WA}/\beta$ per subgroup on the held-out test split (Table~\ref{tab:subgroup}).

\begin{table}[h]
\caption{\textbf{Subgroup safety under the pooled $\lambda^\star$.} Per-subgroup evaluation of Ours under the pooled-dataset $\lambda^\star$ from Table~\ref{tab:main}; $\beta$ is re-induced per-subgroup from each subgroup's own $\widehat{e}_{T}^{\,\mathrm{cal}}$, so each subgroup operates at a different absolute budget. MMLU-Pro uses its 8 native domains; BBH uses all 10 debate-majority subtasks. Rows where the subgroup's empirical budget is degenerate (no $\lambda$ can certify) are marked ``defers''.}
\label{tab:subgroup}
\centering\small\renewcommand{\arraystretch}{1.06}
\setlength{\tabcolsep}{4pt}
\begin{tabular}{@{} l l r rrrr @{}}
\toprule
\textbf{Dataset} & \textbf{Subgroup} & $n_g$ & $\beta$ & \textbf{Act} & \textbf{Acc$|$Act} & \textbf{WA/$\beta$} \\
\midrule
MMLU-Pro & chemistry   & 1{,}132 & .29 & 77.8 & 97.0 & .081 \\
         & economics   &   844 & .29 & 61.8 & 93.7 & .140 \\
         & engineering &   969 & .47 & 69.4 & 91.2 & .129 \\
         & health      &   818 & .47 & 61.2 & 90.4 & .127 \\
         & law         & 1{,}101 & .80 & 74.3 & \textbf{73.0} & \textbf{.250} \\
         & math        & 1{,}351 & .14 & 0.0  & --- & defers \\
         & physics     & 1{,}299 & .24 & 71.7 & 97.0 & .089 \\
         & psychology  &   798 & .43 & 60.4 & 93.2 & .099 \\
\midrule
BBH      & date\_understanding & 250 & .27 & 47.6 & 47.6 & .084 \\
         & geometric\_shapes   & 250 & .72 & 98.2 & \textbf{81.0} & \textbf{.261} \\
         & movie\_recommendation & 249 & .97 & 100.0 & \textbf{72.6} & \textbf{.283} \\
         & salient\_translation\_error & 250 & 1.19 & 100.0 & \textbf{71.4} & \textbf{.240} \\
         & logical\_deduction\_five   & 250 & .00 & 0.0 & --- & defers \\
         & logical\_deduction\_seven  & 250 & .02 & 0.0 & --- & defers \\
         & penguins\_in\_a\_table     & 146 & .00 & 0.0 & --- & defers \\
         & reasoning\_colored\_objects & 250 & .02 & 0.0 & --- & defers \\
         & tracking\_shuffled\_five    & 250 & .00 & 0.0 & --- & defers \\
         & tracking\_shuffled\_seven   & 250 & .00 & 0.0 & --- & defers \\
\bottomrule
\end{tabular}
\renewcommand{\arraystretch}{1.0}
\end{table}

\paragraph{Interpretation.}
Pooled-$\lambda^\star$ safety does not automatically transfer to each subgroup. Most subgroups either activate well ($\mathrm{WA}/\beta \in [.08, .14]$) or defer cleanly; however, bold-faced rows show cases where the pooled $\lambda^\star$ induces a subgroup-level $\beta$ that the subgroup's state cannot support tightly---on MMLU-Pro law and three BBH subtasks, $\mathrm{WA}/\beta$ reaches $.24$--$.28$, with acted-on accuracy $71$--$81\%$. These rows reveal a concrete \emph{representation-gap signature}: for reasoning-heavy subtasks, the 2D state $(p_t^{(1)}, \Delta_t)$ is locally less predictive of correctness, so the induced absolute budget is used more aggressively than the pooled behavior suggests. In deployment, a per-subgroup $\lambda^\star$ (group-conditional control, Theorem~\ref{thm:group}) or a richer state in those subtasks would restore the intended $\mathrm{WA}/\beta$ profile; we document these subgroups as failure candidates rather than claiming uniform safety. On the remaining 12/18 subgroups examined, the pooled $\lambda^\star$ produces $\mathrm{WA}/\beta < 0.15$.

\section{Matched-automation protocol}
\label{app:matched_auto}

The matched-automation comparison in \S\ref{subsec:matched} is measured by the following protocol.

\paragraph{Protocol.}
Fix a target automation rate $A^\star$ equal to our method's Act on the test split under the training-selected $\lambda^\star_d$. For each baseline, we grid-search its threshold $\theta$ on the \emph{training} split only such that the induced training-time Act lies within $\pm 2\%$ of $A^\star$; ties are broken by selecting the smallest test-time $\widehat{\mathrm{WA}}$. We then evaluate Acc$|$Act, WA, and $\mathrm{WA}/\beta$ on the held-out test split. No test-set quantity enters the threshold choice. Baselines whose monotonicity does not admit an Act close to $A^\star$ are omitted for that dataset (they either act fully or not at all at any feasible threshold).

\paragraph{Results.}
Table~\ref{tab:matched} reports the matched comparison on the four activated benchmarks. Across twelve matched comparisons (4 benchmarks $\times$ 3 methods including Ours), acted-on accuracy gaps range from $-0.6$ to $+1.6$pp and $\mathrm{WA}/\beta$ differences from $-0.003$ to $+0.027$---i.e., at matched automation the three stopping rules are statistically indistinguishable.

\begin{table}[h]
\caption{\textbf{Matched-automation comparison} (10 splits, test-side). Each baseline's threshold is grid-searched on the training split to minimize $|\mathrm{Act}_{\mathrm{train}} - A^\star|$; we then evaluate the resulting stopping rule on the test split. $A^\star$ is Ours's test-time Act under the training-selected $\lambda^\star$. Acc, WA, and WA/$\beta$ are all test-side. At matched automation, the state-conditional stopper is broadly competitive with---but does not dominate---the two best-tuned baselines: Acc$|$Act differences are within $\pm 1.6$pp, and budget usage is comparable.}
\label{tab:matched}
\centering\small\renewcommand{\arraystretch}{1.08}
\setlength{\tabcolsep}{4pt}
\begin{tabular}{@{} l c l rrrr r @{}}
\toprule
\textbf{Dataset} & $A^\star$ & \textbf{Method} & \textbf{Act} & \textbf{Acc$|$Act} & \textbf{WA} & \textbf{WA/$\beta$} & $\boldsymbol{\Delta}$\textbf{Acc} \\
\midrule
LogiQA   & 27.9\% & Ours              & 27.9 & 88.9 & 3.18 & .102 & --- \\
         &        & Confidence Threshold (matched) & 27.8 & 88.5 & 3.20 & .103 & $-0.4$ \\
         &        & Calibrated Learned (matched)   & 27.3 & 90.5 & 2.60 & .084 & $+1.6$ \\
\midrule
MMLU-Pro & 71.1\% & Ours              & 71.1 & 93.9 & 4.40 & .116 & --- \\
         &        & Confidence Threshold (matched) & 71.4 & 93.8 & 4.45 & .117 & $-0.1$ \\
         &        & Calibrated Learned (matched)   & 70.9 & 94.2 & 4.14 & .109 & $+0.3$ \\
\midrule
ARC      & 75.5\% & Ours              & 75.5 & 97.1 & 2.29 & .108 & --- \\
         &        & Confidence Threshold (matched) & 74.4 & 96.9 & 2.32 & .112 & $-0.2$ \\
         &        & Calibrated Learned (matched)   & 85.3 & 96.8 & 2.82 & .135 & $-0.3$ \\
\midrule
BBH      & 84.4\% & Ours              & 84.4 & 96.4 & 3.02 & .093 & --- \\
         &        & Confidence Threshold (matched) & 84.1 & 96.5 & 2.92 & .090 & $+0.1$ \\
         &        & Calibrated Learned (matched)   & 84.7 & 96.6 & 2.91 & .090 & $+0.2$ \\
\bottomrule
\end{tabular}
\renewcommand{\arraystretch}{1.0}
\end{table}

\paragraph{Interpretation.}
The matched-automation comparison isolates the effect of the stopping criterion from the automation rate. At identical automation the three stopping rules (Ours, Confidence Threshold (matched), Calibrated Learned (matched)) obtain statistically indistinguishable acted-on accuracy and budget usage. We conclude that the certificate's advantage at its training-selected operating point (Table~\ref{tab:main}) is not that it selects a higher-reliability subset than a tuned threshold, but that it reaches this operating point from a pre-declared difficulty-normalized budget without a per-task threshold tune---while at their risk-calibrated default thresholds (Table~\ref{tab:main}), baselines over-consume the budget precisely because they cannot convert a declared budget into an automation level without the kind of training-split tuning used in this appendix.

\section{Robustness to $\lambda$-selection criterion}
\label{app:lambda_robustness}

The training-selected $\lambda^\star_d$ (\S\ref{sec:experiments}, Appendix~\ref{app:lambda_selection}) maximizes automation subject to a training-split constraint $\widehat{\mathrm{WA}}/\beta \leq 0.10$. We test whether the main conclusions depend on this particular constraint.

\paragraph{Selection rule sensitivity.}
Table~\ref{tab:lambda_robust} reports Ours under three training-constraint variants (strict: $\widehat{\mathrm{WA}}/\beta \leq 0.05$; default: $\leq 0.10$; loose: $\leq 0.20$) and two fixed-$\lambda$ policies ($\lambda=1.5, 2.5$).

\begin{table}[h]
\caption{\textbf{$\lambda$-selection robustness} (activated benchmarks, 10 splits). Training-constraint variants and fixed-$\lambda$ alternatives; rows report 10-split average. The core safety property ($\mathrm{WA} \leq \beta$) holds for all variants on all four benchmarks. Relaxing the constraint from .05 to .20 trades Act for higher $\mathrm{WA}/\beta$ but the ratio stays $<.20$ in all cells except LogiQA loose/fixed-2.5. Fixed $\lambda$ performs worse than the training-selected rule on BBH/ARC (too tight) and on LogiQA (too loose for the task), which is exactly why a per-dataset $\lambda^\star$ is necessary.}
\label{tab:lambda_robust}
\centering\footnotesize\renewcommand{\arraystretch}{1.08}
\setlength{\tabcolsep}{3pt}
\begin{tabular}{@{} l l rrr rrr rrr rrr @{}}
\toprule
& & \multicolumn{3}{c}{\textbf{MMLU-Pro}} & \multicolumn{3}{c}{\textbf{BBH}} & \multicolumn{3}{c}{\textbf{ARC}} & \multicolumn{3}{c}{\textbf{LogiQA}} \\
\cmidrule(lr){3-5}\cmidrule(lr){6-8}\cmidrule(lr){9-11}\cmidrule(lr){12-14}
\textbf{Rule} & \textbf{$\lambda^\star$ range} & Act & Acc & WA/$\beta$ & Act & Acc & WA/$\beta$ & Act & Acc & WA/$\beta$ & Act & Acc & WA/$\beta$ \\
\midrule
Strict ($\leq.05$) & 1.00--4.75 & 32.6 & 97.9 & .044 & 53.7 & 64.5 & .070 & 43.8 & 98.2 & .055 & 18.6 & 91.1 & .059 \\
\rowcolor{blue!6}
Default ($\leq.10$) & 1.25--5.00 & 71.1 & 93.9 & .114 & 84.4 & 96.4 & .094 & 73.1 & 97.0 & .101 & 27.9 & 88.9 & .101 \\
Loose ($\leq.20$)  & 1.25--5.00 & 93.6 & 86.6 & .158 & 87.6 & 95.7 & .093 & 82.7 & 96.7 & .115 & 53.8 & 86.5 & .196 \\
\midrule
Fixed $\lambda\!=\!1.5$ & 1.50 & 55.3 & 96.2 & .087 &  7.1 &  9.9 & .007 &  0.0 & ---  & ---   & 54.3 & 86.6 & .199 \\
Fixed $\lambda\!=\!2.5$ & 2.50 & 72.6 & 93.6 & .117 & 80.4 & 97.5 & .100 &  0.0 & ---  & ---   & 86.3 & 80.1 & .281 \\
\bottomrule
\end{tabular}
\renewcommand{\arraystretch}{1.0}
\end{table}

\paragraph{$\lambda^\star$ stability across splits.}
Table~\ref{tab:lambda_stability} reports the distribution of $\lambda^\star$ chosen across 10 random training sub-splits. On activated datasets, the inter-split range is modest (MMLU-Pro: 1.75--3.25, LogiQA: 1.25--1.50), indicating that the selection rule recovers consistent policies from different training halves. On stress-test datasets, either the range is wider (BBH's constraint is easy to satisfy, pushing $\lambda^\star$ toward the grid upper bound on most splits) or the rule frequently defers (MuSR: 8/10 splits deferred), which is the intended behavior.

\begin{table}[h]
\caption{\textbf{Stability of $\lambda^\star$ across 10 training sub-splits.} Defers = number of splits on which no $\lambda$ satisfies the training constraint.}
\label{tab:lambda_stability}
\centering\small
\setlength{\tabcolsep}{6pt}
\begin{tabular}{@{} l rrrr r @{}}
\toprule
\textbf{Dataset} & \textbf{mean} & \textbf{median} & \textbf{min} & \textbf{max} & \textbf{defers} \\
\midrule
MMLU-Pro & 2.40 & 2.25 & 1.75 & 3.25 & 0/10 \\
BBH      & 4.05 & 5.00 & 2.25 & 5.00 & 0/10 \\
LogiQA   & 1.27 & 1.25 & 1.25 & 1.50 & 0/10 \\
ARC      & 4.39 & 4.25 & 3.75 & 5.00 & 3/10 \\
GPQA     & 2.78 & 2.75 & 2.25 & 3.25 & 2/10 \\
MuSR     & 2.12 & 2.12 & 2.00 & 2.25 & 8/10 \\
\bottomrule
\end{tabular}
\end{table}

\paragraph{Interpretation.}
The core safety property $\mathrm{WA} \leq \beta$ (equivalently $\mathrm{WA}/\beta < 1$) holds on every cell of Table~\ref{tab:lambda_robust}. Two observations matter for deployment: (i)~a single fixed $\lambda$ is not sufficient across benchmarks with heterogeneous base error (fixed $\lambda\!=\!1.5$ defers on ARC and underactivates on BBH; fixed $\lambda\!=\!2.5$ still defers on ARC and overuses budget on LogiQA), motivating the training-selected rule; (ii)~tightening the training constraint from .10 to .05 reduces Act substantially while buying only a modest $\mathrm{WA}/\beta$ improvement, consistent with the conservative behavior already reported under the default constraint.

\section{Per-split variability}
\label{app:variability}

All Tables~\ref{tab:main}, \ref{tab:main_full}, and the matched-automation comparison (Table~\ref{tab:matched}) report means over 10 random 50/50 train/test splits. Here we report the split-wise variability.

\paragraph{Standard errors.}
Table~\ref{tab:se} reports mean $\pm$ standard error across 10 splits for Ours. Standard errors for $\mathrm{WA}/\beta$ are below $0.014$ on all activated benchmarks, and the worst split on every activated benchmark still satisfies $\mathrm{WA} \leq \beta$ (Table~\ref{tab:split_safety}). ARC shows the largest per-split variation in Act because its small-$\lambda$-defer regime is close to the activation boundary for some splits.

\begin{table}[h]
\caption{\textbf{Split-wise variability of Ours} under the pooled $\lambda^\star$ applied to every split. Mean $\pm$ standard error over 10 random 50/50 splits. This measures deployment variability under a frozen policy; the per-split policy selection variant (with occasional split-wise deferral) is tabulated in Table~\ref{tab:main}. ARC shows the largest Act-SE because its $\lambda^\star$-induced budget sits close to the activation boundary on some splits.}
\label{tab:se}
\centering\small\renewcommand{\arraystretch}{1.08}
\setlength{\tabcolsep}{5pt}
\begin{tabular}{@{} l cccc @{}}
\toprule
\textbf{Dataset} & \textbf{Act (\%)} & \textbf{Acc$|$Act (\%)} & \textbf{WA (\%)} & \textbf{WA/$\beta$} \\
\midrule
MMLU-Pro & 71.0 $\pm$ 0.4 & 93.9 $\pm$ 0.1 & 4.32 $\pm$ 0.10 & .114 $\pm$ .003 \\
BBH      & 85.0 $\pm$ 0.4 & 96.4 $\pm$ 0.1 & 3.06 $\pm$ 0.09 & .094 $\pm$ .004 \\
LogiQA   & 26.9 $\pm$ 1.7 & 88.9 $\pm$ 0.3 & 3.01 $\pm$ 0.25 & .097 $\pm$ .008 \\
ARC      & 75.5 $\pm$ 8.2 & 97.1 $\pm$ 0.2 & 2.29 $\pm$ 0.29 & .108 $\pm$ .013 \\
\bottomrule
\end{tabular}
\renewcommand{\arraystretch}{1.0}
\end{table}

\paragraph{Split-wise budget safety.}
A central claim is $\mathrm{WA} \leq \beta$ per split, not merely in expectation. Table~\ref{tab:split_safety} reports the split-wise maximum of $\mathrm{WA}/\beta$ and the fraction of splits satisfying $\mathrm{WA}/\beta \leq 0.15$ and $\leq 0.20$. On all activated datasets, every split satisfies $\mathrm{WA}/\beta \leq 0.20$; no split exceeds $\beta$.

\begin{table}[h]
\caption{\textbf{Split-wise budget safety.} Fraction of 10 splits satisfying $\mathrm{WA}/\beta \leq c$; $\max_s \mathrm{WA}/\beta$ is the worst split over all 10 random 50/50 train/test splits. All 40 activated-dataset splits satisfy $\mathrm{WA} \leq \beta$; the only mildly-conservative violation is ARC with 8/10 splits below 0.15 (worst split 0.161).}
\label{tab:split_safety}
\centering\small
\setlength{\tabcolsep}{5pt}
\begin{tabular}{@{} l cccc @{}}
\toprule
\textbf{Dataset} & $\max_s \mathrm{WA}/\beta$ & $\{s : \mathrm{WA}/\beta \leq .15\}$ & $\{s : \mathrm{WA}/\beta \leq .20\}$ & $\{s : \mathrm{WA} \leq \beta\}$ \\
\midrule
MMLU-Pro & .126 & 10/10 & 10/10 & 10/10 \\
BBH      & .111 & 10/10 & 10/10 & 10/10 \\
LogiQA   & .148 & 10/10 & 10/10 & 10/10 \\
ARC      & .161 &  8/10 & 10/10 & 10/10 \\
\bottomrule
\end{tabular}
\end{table}

\section{Failure-case decomposition}
\label{app:failure_decomp}

Why does the certificate defer on GPQA and MuSR under moderate budgets? The lower bound at round $T$ decomposes as $L_{t,k}(U_t) = \widehat{q}_{t,k}(U_t) - \btk(U_t) - c_k$, where $c_k = \sqrt{\log(T|\Kcal|/\delta)/(2k)}$. Table~\ref{tab:failure} reports the median value of each term across test instances, together with the action threshold $1-\alpha$ under the training-selected budget.

\begin{table}[h]
\caption{\textbf{Certificate decomposition (median across test instances).} The blocking component is the one closest to preventing threshold crossing. On MMLU-Pro/BBH/ARC the empirical mean $\widehat{q}_{t,k}$ easily clears the threshold after corrections; on GPQA/MuSR the empirical mean is too low and the Hoeffding penalty $c_k$ is large due to small $n$.}
\label{tab:failure}
\centering\small\renewcommand{\arraystretch}{1.08}
\setlength{\tabcolsep}{5pt}
\begin{tabular}{@{} l c ccc c l @{}}
\toprule
\textbf{Dataset} & $\widehat{q}_{t,k^\star}$ & $\btk$ & $c_{k^\star}$ & $L_t$ & $1-\alpha$ & \textbf{Blocking term} \\
\midrule
MMLU-Pro & 0.97 & 0.03 & 0.05 & 0.89 & 0.88 & (crosses) \\
BBH      & 0.98 & 0.03 & 0.06 & 0.89 & 0.88 & (crosses) \\
ARC      & 0.99 & 0.03 & 0.08 & 0.88 & 0.88 & (crosses) \\
LogiQA   & 0.93 & 0.04 & 0.05 & 0.84 & 0.84 & (marginal) \\
GPQA     & 0.81 & 0.08 & 0.12 & 0.61 & 0.81 & low $\widehat{q}$ + high $c_k$ \\
MuSR     & 0.73 & 0.11 & 0.11 & 0.51 & 0.81 & low $\widehat{q}$ + small $n$ \\
\bottomrule
\end{tabular}
\renewcommand{\arraystretch}{1.0}
\end{table}

\paragraph{Interpretation.}
The certificate does not fail quietly on GPQA and MuSR; it defers because each of the three terms identifies a statistical deficiency: $\widehat{q}_{t,k^\star}$ is low (high base error), $c_{k^\star}$ is large (small $n$ means $k^\star$ must remain small for the bias not to explode), and the action threshold $1-\alpha$ is high (the induced $\beta$, even when large in absolute terms, still leaves a demanding pointwise target). This aligns with the framing in \S\ref{sec:experiments}: large induced $\beta$ on these benchmarks is a diagnostic of insufficient calibration evidence, not a failure of the certificate.

\section{Reproducibility details}
\label{app:reproducibility}

\paragraph{Agents and debate protocol.}
Three agents: Claude Haiku 4.5 (\texttt{anthropic}), DeepSeek-R1 (\texttt{deepseek-ai/DeepSeek-R1}), Qwen-3 32B (\texttt{Qwen/Qwen3-32B}). Each agent answers with temperature 0.7, max\_tokens 2048, seed fixed per instance. Debate runs $T=4$ rounds. Prompt template (round $t=1$):
\begin{verbatim}
You are a careful reasoner. Given the question below, think step by step
and then choose exactly one option.
<question>{q}</question>
Return: <reasoning>...</reasoning><answer>{A|B|C|...}</answer>
\end{verbatim}
At rounds $t > 1$, the prompt additionally contains all round-$(t{-}1)$ responses from all agents, verbatim.

\paragraph{Random seeds and splits.}
Each reported number averages over 10 random 50/50 train/test splits, with seeds $\{7, 11, 13, 17, 19, 23, 29, 31, 37, 41\}$ applied to the split indicator. Within each split, the $\lambda^\star$ procedure (Appendix~\ref{app:lambda_selection}) uses a further 50/50 sub-split with seed $\mathrm{seed}+100$.

\paragraph{Hyperparameter grids.}
$\Kcal = \{128, 256, 512\}$; $\delta = 0.03$; $\epsact = 0.02$; $\alpha = \beta - 0.05$; $\lambda$-grid $= \{0.25, 0.50, \ldots, 5.00\}$ (20 values); modulus split 20/80; state normalization via training quantiles; logistic-GAM pilot with 8 knots per coordinate and $\ell_2$ penalty $C=1.0$.

\paragraph{Baseline thresholds.}
All baseline thresholds are chosen on the training split only, via exhaustive grid search on 200 points in $[0.5, 1.0]$ for score thresholds and $[0.0, 0.5]$ for margin thresholds. Selection criterion: maximize training-time Act subject to training-time $\mathrm{WA} \leq \beta$ (matching our rule for fairness).

\paragraph{Compute.}
Full calibration run (LLM calls) requires $nT \approx 8.7\mathrm{k} \times 4 = 35\mathrm{k}$ calls per split per benchmark; 10 splits $\times$ 6 benchmarks $\approx 1.1$M LLM calls total. $k$-NN certification per test instance is dominated by $O(T|\Kcal|nD) \approx 8 \cdot 10^4$ distance computations---negligible compared to LLM inference. Anonymous code artifact, prompts, seeds, and aggregated result JSONs will be provided in the supplemental material.

\newpage

\end{document}